\def\radar{\textsc{Radar}\xspace}
\def\radarv{\textsc{Radar}\textsubscript{v}\xspace}
\def\radars{\textsc{Radar}\textsubscript{s}\xspace}
\def\radarb{\textsc{Radar}\textsubscript{b}\xspace}
\def \y {\mathbf{y}}
\def \x {\mathbf{x}}
\def \D {\mathcal{D}}
\def \z {\mathbf{z}}
\def \u {\mathbf{u}}
\def \H {\mathcal{H}}
\def \w {\mathbf{w}}
\def \R {\mathbb{R}}
\def \m {\mathbf{m}}
\def \l {\boldsymbol\ell}
\def \expmap {{\textnormal{Exp}}}
\def \invexp {\textnormal{Exp}^{-1}}
\def \Xtt {\tilde{X}_{t-1}}
\def \Gt {\Gamma_t}
\def \Gtt {\Gamma_{t+1}}
\def \Xt {\tilde{X}}
\def \Gd {\delta G}
\def \ndm {\N_{\delta M}}
\def \F {\mathcal{F}}
\def \W {\mathcal{W}}
\def \N {\mathbb{N}}
\def \v {\mathbf{v}}
\def \M {\mathcal{M}}
\def \p {\mathbf{p}}
\def \a {\mathbf{a}}
\def \b {\mathbf{b}}
\def \M {\mathcal{M}}
\def \N {\mathcal{N}}
\def \C {\mathcal{C}}
\def \rg {\mathsf{grad}\kern 0.1em}
\def \rh {\mathsf{Hess}\kern 0.1em}
\def \o {\mathbf{0}}
\def \V {\mathcal{V}}
\def \m {\mathbf{m}}
\def \F {\mathcal{F}}
\def \V {\mathcal{V}}
\def \e {\mathbf{e}}
\def \X {\mathcal{X}}
\def \S {\mathcal{S}}
\def \dk {\D(\kappa_2)}
\def \Ga {\Gamma}
\def \la {\left\langle}
\def \ra {\right\rangle}
\def \bx {\bar{\x}}
\def \lc {\lceil}
\def \rc {\rceil}
\def \lt {\left}
\def \rt {\right}
\def \sumT {\sum_{t=1}^T}
\def \sumsT {\sum_{t=2}^T}
\newtheorem{ass}{Assumption}
\newtheorem{thm}{Theorem}
\newtheorem{defn}{Definition}
\DeclareMathOperator\arctanh{arctanh}
\DeclareMathOperator\arcosh{arcosh}
\DeclareMathOperator*{\argmin}{argmin}
\newcommand{\RomanNumeralCaps}[1]
    {\MakeUppercase{\romannumeral #1}}
\title[Dynamic Regret on Geodesic Metric Spaces]{Minimizing Dynamic Regret on Geodesic Metric Spaces}
\begin{document}

\maketitle

\begin{abstract}%
  In this paper, we consider the sequential decision problem where the goal is to minimize the \emph{general dynamic regret} on a complete Riemannian manifold. The task of offline optimization on such a domain, also known as a \emph{geodesic metric space}, has recently received significant attention. The online setting has received significantly less attention, and it has remained an open question whether the body of results that hold in the Euclidean setting can be transplanted into the land of Riemannian manifolds where new challenges (e.g., \emph{curvature}) come into play.
     In this paper, we show how to get optimistic regret bound on manifolds with non-positive curvature whenever improper learning is allowed and propose an array of adaptive no-regret algorithms. To the best of our knowledge, this is the first work that considers general dynamic regret and develops ``optimistic'' online learning algorithms which can be employed on geodesic metric spaces.
\end{abstract}

\begin{keywords}%
  Riemannian Manifolds, Optimistic Online Learning, Dynamic Regret
\end{keywords}

\section{Introduction}
Online convex optimization (OCO) in Euclidean space is a well-developed area with numerous applications. In each round, the learner takes an action from a decision set, while the \textit{adversary}  chooses a  loss function. The long-term performance metric of the learner is (static) regret, which is defined as the difference between the learner's cumulative loss and the loss of playing the best-fixed decision in hindsight. As the name suggests, OCO requires both the losses and the decision set to be \emph{convex}. From the theoretical perspective, convex functions and sets are well-behaved objects with many desirable properties that are generally required to obtain tight regret bounds. 

Typical algorithms in OCO, such as \textit{mirror descent}, determine how one should adjust parameter estimates in response to arriving data, typically by shifting parameters against the gradient of the loss. But in many cases of interest, the underlying parameter space is not only non-convex but non-Euclidean. The \textit{hyperboloid}, for example, arising from the solution set of a degree-two polynomial, is a Riemannian manifold that has garnered interest as a tool in tree-embedding tasks \citep{lou2020differentiating}. On such manifolds, we do have a generalized notion of convexity, known as \textit{geodesic convexity} \citep{udriste2013convex}. There are many popular problems of interest \citep{hosseini2015matrix,vishnoi2018geodesic,sra2018geodesically} where the underlying objective function is geodesically convex (gsc-convex) under a suitable Riemannian metric. But there has thus far been significantly limited research on how to do \textit{adaptive learning} in such spaces and to understand when regret bounds are obtainable. 
% Besides, in the same vein, local minimums of gsc-convex functions are provably also global minimums. However, the theory of online geodesically convex optimization on Riemannian manifolds is not as well developed as its Euclidean counterpart. Especially, when the environment is non-stationary, the static regret fails to model the drifting of the adversary's behavior.
% To cope with this problem, the \emph{general dynamic regret} has been proposed, which measures the difference between the cumulative loss of the learner and that of a comparator sequence. 

\begin{table*}[!hbpt]\label{table-1}
    \centering
    % \caption{Summary of bounds. $\delta$ describes  the discrepancy between the decision set and the comparator set. We define $\zeta$ in Def.~\ref{def1}. We specify in "Match?" whether the bound is equivalent to the Euclidean setting. Let $B_T\coloneqq\min\{V_T,F_T\}$.}
    \caption{Summary of bounds. $\delta$ describes  the discrepancy between the decision set and the comparator set. We define $\zeta$ in Def.~\ref{def1} and let $B_T\coloneqq\min\{V_T,F_T\}$.}
\begin{tabular}{cccc}
\toprule[1pt]
Algorithm& Type & Dynamic regret   \\
\hline 
\radar& Standard & $O(\sqrt{{\zeta}(1+P_T)T})$   \\
& Lower bound &$\Omega(\sqrt{(1+P_T)T})$  \\
\radarv& Gradient-variation & $O(\sqrt{{\zeta}(\frac{1+P_T}{\delta^2}
+V_T)(P_T+1)})$   \\
\radars& Small-loss & $O(\sqrt{{\zeta}((1+P_T){\zeta}+F_T)(P_T+1)})$    \\
\radarb& Best-of-both-worlds & {\small $O\lt(\sqrt{\zeta( P_T(\zeta+\frac{1}{\delta^2})+B_T+1)(P_T+1)+ B_T\ln T}\rt)$}    \\
\bottomrule[1pt]
\end{tabular}
\end{table*}

Let $\N$ be a gsc-convex subset of a geodesic metric space $\M$. In this paper, we consider the problem of minimizing the \emph{general dynamic regret} on $\N$, defined as
$$
  \text{D-Regret}_T := \sumT f_t(\x_t)-\sumT f_t(\u_t),
$$
where $\x_1, \ldots, \x_T \in \N$ is the sequence of actions taken by the learner, whose loss is evaluated relative to the sequence of ``comparator'' points $\u_1,\dots,\u_T\in\N$.  There has been recent work establishing that sublinear regret is possible as long as $\N$ and the $f_t$'s are gsc-convex, for example using a Riemannian variant of Online Gradient Descent ~\citep{wang2021no}. But so far there are no such results that elicit better D-Regret using \textit{adaptive} algorithms.

What do we mean by ``adaptive'' in this context? In the online learning literature there have emerged three key quantities of interest in the context of sequential decision making, the \textit{comparator path length}, the \textit{gradient variation}, and the \textit{comparator loss}, defined respectively as:
\begin{eqnarray}
     \textstyle P_T & := &\textstyle\sumsT d(\u_t,\u_{t-1}),\label{path} \\
    \textstyle  V_T & := & \textstyle\sum_{t=2}^T\sup_{\x\in\N}\|\nabla f_{t-1}(\x)-\nabla f_t(\x)\|^2, \nonumber \\
    \textstyle F_T & := & \textstyle\sumT f_t(\u_t). \nonumber
\end{eqnarray}
Let us start by considering regret minimization with respect to path length. While it has been observed that  $O(\sqrt{(1+P_T)T})$ is optimal in a minimax sense~\citep{zhang2018adaptive}, a great deal of research for the Euclidean setting~\citep{srebro2010smoothness,chiang2012online,rakhlin2013online} has shown that significantly smaller regret is achievable when any one of the above quantities is small. These are not just cosmetic improvements either, many fundamental applications of online learning rely on these adaptive methods and bounds. We give a thorough overview in Section~\ref{sec:related}.

The goal of the present paper is to translate to the Riemannian setting an array of adaptive regret algorithms and prove corresponding bounds. We propose a family of algorithms which we call \radar, for \underline{R}iemannian \underline{a}daptive \underline{d}yn\underline{a}mic \underline{r}egret. The three important variants of \radar are \radarv, \radars, and \radarb, we prove regret bounds on each, summarized in Table \ref{table-1}. We allow \emph{improper learning} for the gradient-variation bound, which means the player can choose $\x_1,\dots,\x_T$ from a slightly larger set $\N_{\delta G}$ (formally defined in Definition \ref{def2}).

As a general matter, convex constraints on a Riemannian manifold introduce new difficulties in optimization that are not typically present in the Euclidean case, and there has been limited work on addressing these.
% Another difficulty in Riemannian optimization is the existence of the curvature, and the resulting geometric distortion can be "zoomed in" as the diameter of the gsc-convex set becomes larger. When the projective distortion entangles with the geometric distortion, the situation becomes even more difficult.
To the best of our knowledge, there are only three papers 
considering how to incorporate constraints on manifolds, and these all make further assumptions on either the curvature or the diameter of the feasible set.  \cite{martinez2022global} only applies to hyperbolic and spherical spaces.  \cite{criscitiello2022negative} works for complete Riemannian manifolds with sectional curvature in $[-K,K]$ but the diameter of the decision set is at most $\textstyle O(\frac{1}{\sqrt{K}})$.
\cite{martinez2022accelerated} mainly works for locally symmetric Hadamard manifolds. Our paper is the first to consider the projective distortion in the \emph{online} setting that applies to all Hadamard manifolds as long as improper learning is allowed without imposing further constraints on the diameter or the curvature.

 Obtaining adaptive regret guarantees in the Riemannian setting is by no means a trivial task, as the new geometry introduces various additional technical challenges. Here is but one example: whereas the cost of a (Bregman) projection into a feasible region can be controlled using a generalized ``Pythagorean'' theorem in the Euclidean setting, this same issue becomes more difficult on a manifold as we encounter geometric distortion due to curvature. To better appreciate this, for a Hadamard manifold $\M$, assume the projection of $\x\in \M$ onto a convex subset $\N\subset\M$ is ${\z}$. While it is true that for any $\y\in\N\setminus \{\z\}$ the  angle between geodesics $\overline{\z\x}$ and $\overline{\z\y}$
 is obtuse, this property is only relevant at the tangent space of $\z$, yet we need to analyze gradients at the tangent space of ${\x}$. The use of \textit{parallel transport} between $\x$ and $\z$ unavoidably incurs extra distortion and could potentially lead to $O(T)$ regret.
 % By contrast, in Euclidean space,  translating vectors between different tangent spaces is at no cost.

% Another challenge comes from the non-linear structure of manifolds, and feasible operations on manifolds are more restrictive than the Euclidean counterpart. For example, $(\x-\y)+(\y-\z)=(\x-\z)$ holds for any three points in Euclidean space. However,  the corresponding Riemannian analog $\invexp_{\y}\z+\invexp_{\z}{\y}=\invexp_{\z}\x$ does not holds. This problem can be alleviated by the flexibility of parallel transport. As an example, the expression $\y-\nabla f(\x)$ in Euclidean space can either be interpreted as $\expmap_{\y}(-\Ga_{\x}^{\y}\nabla f(\x))$ or $\expmap_{\x}(-\nabla f(\x)+\expmap_{\x}^{-1}(\y))$ on manifolds. Finding an easy-to-manipulate algorithmic form on manifolds is significant as it  significantly reduces the difficulty due to the non-linear structure. However, flexibility comes at a price: the number of feasible algorithmic forms grows exponentially in the number of variables. And unfortunately, projection onto Riemannian manifolds creates another variable.

The last challenge comes from averaging on manifolds. For example, many adaptive OCO algorithms rely on the \textit{meta-expert} framework, described by \cite{van2016metagrad}, that runs several learning algorithms in parallel and combines them through appropriately-weighted averaging. There is not, unfortunately, a single way to take convex combinations in a geodesic metric space, and all such averaging schemes need to account for the curvature of the manifold and incorporate the associated costs. We finally find the \emph{Fr\'echet mean} to be a desirable choice, but the analysis must proceed with care.

The key contributions of our work can be summarized as follows:
\begin{itemize}
    \item We develop the  optimistic mirror descent (OMD) algorithm on Hadamard manifolds\footnote{We focus on Hadamard manifolds in the main paper and extend the guarantee to CAT$(\kappa)$ spaces in Appendix \ref{cat-proof}.} in the online improper learning setting. Interestingly, we also show Optimistic Hedge, a variant of OMD, works for gsc-convex losses. We believe these tools may have significant applications to research in online learning and Riemannian optimization.
    % indeed works in the proper learning setting.
    \item We combine our analysis on OMD with the \emph{meta-expert framework} ~\citep{van2016metagrad} to get several adaptive regret bounds, as shown in Table \ref{table-1}.
    \item We  develop a novel dynamic regret lower bound, which renders our $O(\sqrt{\zeta(1+P_T)T})$ bound to be tight up to the geometric constant $\zeta$.
\end{itemize}

% However, all these works focus on  convex decision sets and  convex losses. It is thus natural to ask:

% \textit{What non-convex online learning problems can we get sub-linear general dynamic regret?}

% Although providing a full answer to this question seems hard, it is instructive to consider tractable subclasses. Geodesically convex (gsc-convex) functions are generalizations of convex functions to geodesic metric spaces. Many important non-convex functions turn out to be gsc-convex under a suitable Riemannian metric \citep{vishnoi2018geodesic,burgisser2019towards}. Besides, in the same vein, local minimums of gsc-convex functions are provably also global minimums.

\section{Preliminaries}
In this section, we introduce background knowledge of OCO and Riemannian manifolds.

\paragraph{OCO in Euclidean space.} We first formally describe  OCO  in Euclidean space. For each round $t=1,\dots,T$, the learner makes a decision $\x_t\in\mathcal{X}$ based on historical losses $f_1,\dots,f_{t-1}$ where $\X$ is a convex decision set, and then the adversary reveals a convex loss function $f_t$. The goal of the learner is to minimize the difference between the cumulative loss and that of the best-fixed decision in hindsight: $
\text{Regret}_T=\sum_{t=1}^Tf_t(\x_t)-\min_{\x\in\mathcal{X}}\sum_{t=1}^Tf_t(\x),
$
 which is usually referred to as the \emph{static regret}, since the comparator is a fixed decision.

% which is the price paid by the learner for unknowing the best fixed decision in hindsight. If the regret is sub-linear in $T$, then the algorithm is considered as no-regret online learning, considering the time-averaged regret of the learner diminishes in the long run.

In the literature, there exist  a large number of  algorithms ~\citep{hazan2016introduction} on minimizing the static regret. However, the underlying assumption of the static regret is the adversary's behavior does not change drastically, which can be unrealistic in real applications. To resolve this issue,  dynamic regret stands out, which is defined as ~\citep{zinkevich2003online}
$$
\text{Regret}_T(\u_1,\dots,\u_T)=\sum_{t=1}^Tf_t(\x_t)-\sum_{t=1}^Tf_t(\u_t),
$$
where $\u_1,\dots,\u_T\in\X$ is a comparator sequence. Dynamic regret receives considerable attention recently ~\citep{besbes2015non,jadbabaie2015online,mokhtari2016online,NIPS:2017:Zhang,zhang2018adaptive,zhao2020dynamic,wan2021projection,L4DC:2021:Zhao,pmlr-v134-baby21a} due to its flexibility. However, dynamic regret  can be as large as $O(T)$ in general. Thus, regularizations need to be imposed on the comparator sequence to ensure no-regret online learning. A common assumption ~\citep{zinkevich2003online} is the path-length (see Equation \eqref{path}) of the comparator sequence to be bounded. We refer to the corresponding dynamic regret as \textit{general dynamic regret} because  any assignments to $\u_1,\dots,\u_T$ subject to the path-length constraint are feasible.

% we can set $P_T=0$ or $P_T=\sum_{t=2}^T\|\x_{t}^\star-\x_{t-1}^\star\|_2$ to recover guarantees for  static regret as well as resricted dynamic regret. 

% In the literature, \cite{zinkevich2003online} establish $\textstyle O({(1+P_T)\sqrt{T}})$ general dynamic regret bound but the result is suboptimal. Inspired by the meta-expert framework as in \cite{van2016metagrad}, \cite{zhang2018adaptive} improve the bound to $O(\sqrt{(1+P_T)T})$ and show the regret bound is minimax optimal. \cite{zhao2020dynamic} propose algorithms to achieve problem-dependent bounds, including a gradient-variation bound, a small-loss bound and the smaller of the former twos by utilizing smoothness.

\textbf{Riemannian manifolds.} Here, we give a brief overview of Riemannian geometry, but this is a complex subject, and we refer the reader to, e.g., \cite{petersen2006riemannian} for a full treatment. A Riemannian manifold $(\mathcal{M},g)$ is a smooth manifold  $\M$ equipped with a Riemannian metric $g$. The tangent space $T_{\x}\M\cong\R^d$, generalizing the concept of the tangent plane, contains vectors tangent to any curve passing $\x$. The Riemannian metric $g$ induces the inner product $\langle\u,\v\rangle_{\x}$ and the Riemannian norm $\|\u\|_{\x}=\sqrt{\langle\u,\u\rangle_{\x}}$ where $\u,\v\in T_{\x}\M$ (we omit the reference point $\x$ when it is clear from the context). We use $d(\x,\y)$ to denote the Riemannian distance between $\x,\y\in\M$, which is the greatest lower bound of the length of all piecewise smooth curves joining $\x$ and $\y$.

% A $d$-dimensional manifold \citep{vishnoi2018geodesic} is a topological space on which each point has an open neighborhood that is homeomorphic to an open subset of Euclidean space $\mathbb{R}^d$.

A curve connecting $\x,\y\in\M$ is a geodesic if it is locally length-minimizing. For two points $\x,\y\in\M$, suppose there exists a geodesic $\gamma(t):[0,1]\rightarrow\M$ such that $\gamma(0)=\x,\gamma(1)=\y$ and $\gamma'(0)=\v\in T_{\x}\M$. The exponential map $\expmap_{\x}(\cdot):T_{\x}\M\rightarrow\M$ maps $\v\in T_{\x}\M$ to $\y\in \M$. Correspondingly, the inverse exponential map $\expmap_{\x}^{-1}(\cdot):\M\rightarrow T_{\x}\M$ maps $\y\in\M$ to $\v\in T_{\x}\M$. 
   Since traveling along a geodesic is of constant velocity, we indeed have $d(\x,\y)=\|\expmap_{\x}^{-1}\y\|_{\x}$. Also, it is useful to compare  tangent vectors in different tangent spaces. Parallel transport $\Gamma_{\x}^{\y}\u$ translates $\u$ from $T_{\x}\M$ to $T_{\y}\M$ and preserves the inner product, i.e., $\langle \u,\v\rangle_{\x} = \langle \Ga_{\x}^{\y}\u,\Ga_{\x}^{\y}\v\rangle_{\y}$.

The curvature of Riemannian manifolds reflects the extent to which the manifold differs from a Euclidean surface. For optimization purposes, it usually suffices to consider the sectional curvature.   Following \cite{zhang2016first,wang2021no}, in this paper we mainly consider Hadamard manifolds, which are complete and single-connected manifolds with non-positive sectional curvature. On such manifolds, every two points are connected by a unique and distance-minimizing geodesic by Hopf-Rinow Theorem ~\citep{petersen2006riemannian}.

% Suppose the sectional curvature of a  Riemannian manifold $\M$ is everywhere bounded by $\kappa_l\leq\kappa\leq\kappa_u$. If $\kappa_u\leq 0$ and $\M$ is complete and single-connected, then $\M$ is called a Hadamard manifold on which every two points are connected by a unique and distance-minimizing geodesic by Hopf-Rinow Theorem.

% There exist three model spaces: sphere, Euclidean space and hyperbolic space which corresponds to positive, zero and negative constant sectional curvatures, respectively. 

% As optimization on Hadamard manifolds admits a great deal of applications like the computation of Fr\'{e}chet mean \citep{lou2020differentiating}, the Gaussian mixture model \citep{hosseini2015matrix}, and the computation of Brascamp-Lieb Constants \citep{sra2018geodesically}, to name a few, we believe considering online geodesically convex optimization on Hadamard manifolds is a meaningful topic, which facilitates practical implementations of the aforementioned algorithms.

A subset $\N$ of $\M$ is gsc-convex if for any $\x,\y\in\N$, there exists a geodesic connecting $\x,\y$ and fully lies in $\N$. A function $f:\N\rightarrow \mathbb{R}$ is gsc-convex if $\N$ is gsc-convex and the composition $f(\gamma(t))$ satisfies
$
f(\gamma(t))\leq (1-t)f(\gamma(0))+tf(\gamma(1))
$
for any geodesic $\gamma(t)\subseteq\N$ and $t\in[0,1]$. An alternative definition of geodesic convexity is
$$
\textstyle f(\y)\geq f(\x)+\langle\nabla f(\x),\expmap_{\x}^{-1}\y\rangle,\qquad\forall\;\x,\y\in\N.
$$
where the Riemannian gradient $\nabla f(\x)\in T_{\x}\M$ is the unique vector determined by
$
D f(\x)[\v] = \langle\v, \nabla f(\x)\rangle
$
and $D f(\x)[\v]$ is the differential of $f$ along $\v\in T_{\x}\M$.

% D f (x)[v] = ⟨v, grad f (x)⟩
Similarly, a $L$-geodesically-smooth (L-gsc-smooth) function $f$ satisfies 
$\|\Ga_{\y}^{\x}\nabla f(\y)-\nabla f(\x)\|\leq L\cdot d(\x,\y)$ for all  $\x,\y\in\N$, or
$$
\textstyle f(\y)\leq f(\x)+\langle\nabla f(\x),\expmap_{\x}^{-1}\y\rangle+\frac{L}{2}d(\x,\y)^2.
$$

\section{Related Work} \label{sec:related}
In this part, we briefly review past work on OCO in Euclidean space,  online optimization and  optimism on Riemannian manifolds.
\subsection{OCO in Euclidean Space}
\paragraph{Static regret.} We first consider work on static regret. In Euclidean space, it is well known that online gradient descent (OGD) guarantees $O(\sqrt{T})$ and $O(\log T)$ regret for convex and strongly convex losses \citep{hazan2016introduction}, which are also minimax optimal \citep{abernethy2008optimal}. However, the aforementioned bounds are not fully adaptive due to the dependence on $T$. Therefore, there is a tendency to replace $T$ with problem-dependent quantities. \cite{srebro2010smoothness} first notice that the smooth and non-negative losses satisfy the self-bounding property, thus establishing the small-loss bound $O(\sqrt{F_T^\star})$ where $F_T^{\star}=\sum_{t=1}^Tf_t(\x^\star)$ is the cumulative loss of the best action in hindsight.  \cite{chiang2012online} propose  extra-gradient
 to get $O(\sqrt{V_T})$  gradient-variation  regret bound for convex and smooth losses where $V_T=\sum_{t=2}^T\sup_{\x\in\X}\|\nabla f_{t-1}(\x)-\nabla f_t(\x)\|_2^2$. 
\cite{rakhlin2013online} generalize the work of \cite{chiang2012online} and propose optimistic mirror descent, which has become a standard tool in online learning since then.

% \cite{syrgkanis2015fast} use optimism ...

% The gradient-variation bound and the small-loss bound capture the intrinsic difficulties of online learning problems in two different aspects: when the gradient of $f_t$ changes slowly the gradient-variation would be small; when the cumulative loss of the best fixed action is small, the small-loss bound would be tight. Thus it is ideal to get a best-of-both-worlds bound, which aims to achieve the smaller of the two aforementioned regret bounds.
    
% \subsection{Dynamic Regret in Euclidean Space}   
% Now we switch to the related work on dynamic regret.
% It is known that sublinear dynamic regret cannot be obtained without imposing constraints on the comparator sequence. This is easily justified by considering $P_T=\Omega(T)$ and apply the lower bound $\Omega(\sqrt{(1+P_T)T})$ as shown in \cite{zhang2018adaptive}.

\paragraph{Dynamic regret.} Now we switch to the related work on dynamic regret. \cite{zinkevich2003online} propose to use OGD to get a $O\lt(\eta T+\frac{1+P_T}{\eta}\rt)$ regret bound where $\eta$ is the step size, but the result turns out to be $O((1+P_T)\sqrt{T})$ since the  value of $P_T$ is unknown to the learner. The seminal work of \cite{zhang2018adaptive} use Hedge to combine the advice of experts with different step sizes, and show a  $O\lt(\sqrt{(1+P_T)T}\rt)$ regret. A matching lower bound is also established therein.  \cite{zhao2020dynamic} utilize smoothness to get a gradient-variation bound, a small-loss bound, and a best-of-both-worlds bound in Euclidean space.

\subsection{Online Learning and Optimism on Riemannian Manifolds}
In the online setting, \cite{becigneul2018riemannian} consider adaptive stochastic optimization on Riemannian manifolds but their results only apply to the Cartesian product of one-manifolds.
    \cite{maass2022tracking}
    study the \emph{restricted dynamic regret} on Hadamard manifolds under the gradient-free setting and provide $O(\sqrt{T}+P_T^\star)$ bound for gsc-strongly convex and gsc-smooth functions, where $P_T^\star$ is the path-length of the comparator formed by $\u_t=\argmin_{\x\in\X}f_t(\x)$. On Hadamard manifolds, \cite{wang2021no} apply Riemannian OGD (R-OGD) to get $O(\sqrt{T})$ upper bound and $\Omega(\sqrt{T})$ randomized lower bound. Comparatively, we focus on general and adaptive dynamic regret on Hadamard manifolds. Our minimax lower bound is also novel.

    There also exist algorithms considering optimism on Riemannian manifolds. \cite{zhang2022minimax} propose Riemannian Corrected Extra Gradient (RCEG) for unconstrained minimax optimization on manifolds. \cite{karimi2022riemannian} consider a Robbins-Monro framework on Hadamard manifolds which subsumes Riemannian stochastic extra-gradient. By imposing the weakly asymptotically coercivity and using a decaying step size, the trajectory is guaranteed to be finite \citep{karimi2022riemannian}. However, our paper is the first to consider the constrained case and the online setting. For the improper learning setting, we show a constant step size achieves the same guarantee as in Euclidean space.
    % Static regret on Hadamard manifolds is investigated in \cite{wang2021no}, in which $O(\sqrt{T})$ and $O(\log T)$ regret upper bounds as well as $\Omega(\sqrt{T})$ lower bound are established for gsc-convex and strongly gsc-convex functions, respectively.
    % The work most closely related to us is \cite{wang2021no}, which investigates static regret on Hadamard manifolds using Riemannian online gradient descent (R-OGD) both in the full information and the bandit setting.

\section{Path Length Dynamic Regret Bound  on Manifolds}
\label{radar}
In this section, we present the  results related to the minimax path-length bound on manifolds. Before diving into the details, following previous work \citep{zinkevich2003online,wang2021no}, we introduce some standard assumptions and definitions.
% which would facilitate later proof.
% \begin{ass}\label{rm}
% The sectional curvature of the Riemannian manifold $\M$ is bounded by $\kappa_l\leq\kappa\leq\ku$ where $\kl\leq 0$.
% \end{ass}
\begin{ass}\label{hada}
$\M$ is a Hadamard manifold and its sectional curvature is  lower bounded by $\kappa\leq 0$.
\end{ass}
% We make two slightly different assumptions due to the gradient-variation bound needs the manifold to be Hadamard.
\begin{ass}\label{diam}
The decision set $\N$ is a gsc-convex compact subset of $\M$ with diameter upper bounded by $D$, i.e., $\sup_{\x,\y\in\N} d(\x,\y)\leq D$. For optimistic online learning, we allow the player chooses decisions from  $\N_{\delta M}$, which is  defined in Definition \ref{def2} and the diameter becomes $(D+2\delta M)$.
% If the upper bound of the sectional curvature $\ku$ is strictly positive, we further assume $D<{\frac{\pi}{\sqrt{\ku}}}$.
\end{ass}
\begin{ass}\label{grad}
The norm of  Riemannian gradients are  bounded by $G$, i.e., $\sup_{\x\in\N}\|\nabla f_t(\x)\|\leq G.$ When improper learning is allowed, we assume $\sup_{\x\in\ndm}\|\nabla f_t(\x)\|\leq G.$
\end{ass}
% \begin{ass}\label{convex}
% All losses $f_t(\x)$ are gsc-convex on $\N$.
% \end{ass}
% \begin{ass}\label{smooth}
% All losses $f_t(\x)$ are $L$-gsc-smooth on $\M$.
% \end{ass}
% \begin{ass}\label{posi}
% All losses $f_t(\x)$ are non-negative on $\M$.
% \end{ass}
% \begin{ass}\label{bound}
% All losses $f_t(\x)$ belong to $[a,a+c]$ for real $a,c$ on $\N$.
% \end{ass}

% We also need the following technical Lemmas.
% \begin{lemma}
% \label{cos1}
% \cite[Lemma 5]{zhang2016first}. Let $\mathcal{M}$ be a Riemannian manifold with sectional curvature lower bounded by $\kappa \leq 0$. Consider a gsc-convex subset $\N\in\M$ with diameter $D$. For a geodesic triangle fully lies within $\N$ with side lengths $a, b, c$, we have
% $$
% a^{2} \leq {\zeta}(\kappa, D) b^{2}+c^{2}-2 b c \cos A
% $$
% where ${\zeta}(\kappa, D):=\sqrt{-\kappa} D \operatorname{coth}(\sqrt{-\kappa} D)$.
% \end{lemma}
% \begin{lemma}
% \label{cos2}
% \cite[Rauch's Comparison Theorem]{sakai1996riemannian}
% % (Rauch's Comparison Theorem)
% % \cite[Corollary 2.1]{alimisis2020continuous}.
% Let $\mathcal{M}$ be a Riemannian manifold with sectional curvature upper bounded by $\kappa\leq 0$. Consider a gsc-convex subset $\N\in\M$ with diameter $D$.
% For a geodesic triangle fully lies within $\N$ with side lengths $a, b, c$, we have

% $$
% a^{2} \geq b^{2}+c^{2}-2 b c \cos A
% $$
% \end{lemma}
\begin{defn}\label{def1}
Under Assumptions \ref{hada}, \ref{diam}, we denote
${\zeta}\coloneqq\sqrt{-\kappa} D \operatorname{coth}(\sqrt{-\kappa} D)$. When improper learning is allowed, ${\zeta}\coloneqq\sqrt{-\kappa} (D+2\delta M) \operatorname{coth}(\sqrt{-\kappa} (D+2\delta M))$, where $M$ is  in Definition \ref{def2}. {Note that, on manifolds of zero sectional curvature ($\kappa=0$), we have $\zeta=\lim_{x\to 0}x\cdot\coth{x}=1$.}

% In particular, the authors should remark that zeta = 1 for kappa = 0. This rely requires defining zeta in terms of a limit. (e.g. x cot (x) | _{x=0} = lim_{x \to 0^-} x cot (x)

% ,  which is  ${\zeta}(\kappa,D)$ in   Lemma \ref{cos1}.
% Suppose the sectional curvature of a Riemannian manifold $\M$ is bounded by $\kappa_l\leq\kappa\leq\ku\leq 0$. Consider a gsc-convex subset $\N\in\M$ with diameter $D$. We define ${\zeta}={\zeta}(\kl,D)$  where ${\zeta}(\kappa,D)$ is defined in Lemmas \ref{cos1}.
\end{defn} 
% \begin{cor}
% \label{cor1}
% Let ${\zeta}={\zeta}(\kappa_{\min},D_{\mathcal{M}})$ and $1=\xi(\kappa_{\max},D_{\mathcal{M}})$, then for any geodesic triangle on $\mathcal{M}$ with side lengths $a,b,c$, we have
% \begin{equation}
%     \begin{split}
%         a^2\leq& {\zeta}\cdot b^2+c^2-2bc\cos A,\\
%         a^2\geq& 1\cdot b^2+c^2-2bc\cos A.
%     \end{split}
% \end{equation}
% \end{cor}

The seminal work of \cite{zinkevich2003online} shows that the classical OGD algorithm can minimize the general dynamic regret in  Euclidean space. Motivated by this, we  consider the Riemannian OGD (R-OGD) algorithm \citep{wang2021no}:
\begin{equation}
\label{alg:rogd}
    \x_{t+1}=\Pi_{\N}\expmap_{\x_t}(-\eta\nabla f_t(\x_t)),
\end{equation}
which is a natural extension of OGD to the manifold setting.  We show  R-OGD can also minimizes the general dynamic regret on manifolds. Due to page limitation, we postpone details to  Appendix \ref{app:radar}.
% We show the corresponding guarantee as follows and the proof is in Appendix \ref{app:thm:rogd}.
\begin{thm}\label{ROGD}
Suppose  Assumptions \ref{hada}, \ref{diam} and \ref{grad} hold. Then the general dynamical regret  of  R-OGD  defined in Equation \eqref{alg:rogd} satisfies
\begin{equation}
    \text{D-Regret}_T\leq \frac{D^2+2DP_T}{2\eta}+\frac{\eta{\zeta} G^2T}{2}.
\end{equation}
\end{thm}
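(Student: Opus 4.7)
The plan is to mimic the Euclidean Zinkevich-style analysis but carefully track the curvature distortion that enters through the Riemannian law of cosines. Throughout I let $\x_{t+1}' := \expmap_{\x_t}(-\eta \nabla f_t(\x_t))$ denote the pre-projection iterate, so that $\x_{t+1} = \Pi_{\N}(\x_{t+1}')$.

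First, I would bound the per-round linearized loss using gsc-convexity: for any $\u_t \in \N$,
\begin{equation*}
f_t(\x_t) - f_t(\u_t) \;\leq\; \langle \nabla f_t(\x_t),\, -\invexp_{\x_t} \u_t \rangle_{\x_t}.
\end{equation*}
Next, I would apply the Hadamard comparison inequality (the Zhang--Sra law-of-cosines lemma, which for geodesic triangles on manifolds with sectional curvature lower-bounded by $\kappa \leq 0$ gives $d(\x_{t+1}', \u_t)^2 \leq d(\x_t, \u_t)^2 + \zeta \, \|{-}\eta \nabla f_t(\x_t)\|^2 - 2\langle -\eta\nabla f_t(\x_t),\, \invexp_{\x_t}\u_t\rangle$) to obtain
\begin{equation*}
2\eta\, \langle \nabla f_t(\x_t),\, -\invexp_{\x_t} \u_t\rangle \;\leq\; d(\x_t, \u_t)^2 - d(\x_{t+1}', \u_t)^2 + \eta^2 \zeta\, \|\nabla f_t(\x_t)\|^2.
\end{equation*}
Then I would use non-expansiveness of the projection $\Pi_{\N}$ on Hadamard manifolds, which yields $d(\x_{t+1}, \u_t) \leq d(\x_{t+1}', \u_t)$ for $\u_t \in \N$, so that the right-hand side is upper bounded by $d(\x_t, \u_t)^2 - d(\x_{t+1}, \u_t)^2 + \eta^2 \zeta G^2$ after invoking Assumption~\ref{grad}.

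To telescope across rounds with a \emph{moving} comparator, I would compare $d(\x_{t+1}, \u_t)^2$ with $d(\x_{t+1}, \u_{t+1})^2$ by writing
\begin{equation*}
d(\x_{t+1}, \u_t)^2 - d(\x_{t+1}, \u_{t+1})^2 \;\geq\; -\bigl(d(\x_{t+1}, \u_t) + d(\x_{t+1}, \u_{t+1})\bigr)\, d(\u_t, \u_{t+1}) \;\geq\; -2D\, d(\u_t, \u_{t+1}),
\end{equation*}
using the triangle inequality and Assumption~\ref{diam}. Summing the per-round bounds over $t=1,\dots,T$, the telescoping sum $\sum_t (d(\x_t,\u_t)^2 - d(\x_{t+1},\u_{t+1})^2)$ collapses to $d(\x_1, \u_1)^2 \leq D^2$, and the comparator-drift corrections accumulate to $2D \sum_{t=2}^T d(\u_t, \u_{t-1}) = 2 D P_T$. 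Dividing by $2\eta$ and using the linearized-loss bound gives the stated $\frac{D^2 + 2 D P_T}{2\eta} + \frac{\eta \zeta G^2 T}{2}$.

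The main obstacle is step two: in Euclidean space one just expands $\|\x_{t+1}' - \u_t\|^2$ exactly, but on a Hadamard manifold the geodesic triangle with vertices $\x_t, \x_{t+1}', \u_t$ is distorted, and the correct generalization requires the factor $\zeta = \sqrt{-\kappa}\, D \coth(\sqrt{-\kappa}\, D)$ multiplying the $\eta^2 \|\nabla f_t(\x_t)\|^2$ term. A secondary subtlety is that projection non-expansiveness is stated at the tangent space of $\Pi_\N(\x_{t+1}')$, not at $\x_{t+1}'$, so I need the Hadamard-specific fact that $\Pi_\N$ is $1$-Lipschitz in the Riemannian distance, which holds because $\N$ is gsc-convex and the manifold has non-positive curvature.
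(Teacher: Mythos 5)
Your proof is correct and follows essentially the same route as the paper's own: gsc-convexity to linearize, the Zhang--Sra law of cosines (Lemma~\ref{cos1}) to introduce the $\zeta$-weighted squared-gradient term, non-expansiveness of $\Pi_\N$ on Hadamard manifolds (Lemma~\ref{proj}), the triangle-inequality bound for the comparator drift, and telescoping. The only cosmetic difference is that you bound $d(\x_{t+1},\u_t)^2 - d(\x_{t+1},\u_{t+1})^2$ from below while the paper adds and subtracts $d(\x_{t+1},\u_{t+1})^2$ and bounds the resulting difference from above (equivalent algebra); you also leave implicit the harmless convention $\u_{T+1}=\u_T$ that makes the drift sum telescope exactly to $P_T$.
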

Theorem \ref{ROGD} implies that R-OGD yields $O(\frac{P_T+1}{\eta}+\eta T)$ general dynamic regret bound, which means the optimal step size is $\eta= O{\scriptstyle\left(\sqrt{\frac{1+P_T}{T}}\right)}$.
 However, this configuration of $\eta$ is invalid, as $P_T$ is unknown to the learner. Although a sub-optimal choice for $\eta$, i.e.,  $\textstyle{\eta=O\left(\frac{1}{\sqrt{T}}\right)}$, is accessible,  the resulting algorithm  suffers  $O((1+P_T)\sqrt{T})$ regret. 
 
%  . To achieve the optimal result, one natural choice for $\eta$ is $\eta= O{\scriptstyle\left(\sqrt{\frac{1+P_T}{T}}\right)}$.
{The meta-expert framework \citep{van2016metagrad} consists of a meta algorithm and some expert algorithm instances. The constructions are modular such that we can use different meta algorithms and expert algorithms  to achieve different regret guarantees. For optimizing dynamic regret, the seminal work of \citet{zhang2018adaptive} propose Ader based on this framework.} In every round $t$, each expert runs OGD with a different step size, and the meta algorithm applies Hedge to learn the best weights. The step sizes used by the experts are carefully designed so that there always exists an expert which is almost optimal. The regret of Ader is $O(\sqrt{(1+P_T)T})$, which is minimax-optimal in Euclidean space \citep{zhang2018adaptive}.

However, it is unclear how to extend Ader to manifolds at first glance since we need to figure out the "correct" way to do averaging. In this paper, we successfully resolve this problem using the \emph{Fr\'echet mean} and the \emph{geodesic mean}. Our proposed algorithm, called \radar, consists of $N$ instances of the expert algorithm (Algorithm \ref{alg:expert}), each of which runs  R-OGD with a different step size, and a meta algorithm (Algorithm \ref{alg:meta}), which enjoys a regret approximately the same as the best expert. We denote the set of all step sizes $\{\eta_i\}$ by $\H$. In the $t$-th round, the expert algorithms submit all $\x_{t,i}$'s ($i=1,\dots,N$) to the meta algorithm. Then the meta algorithm either computes the Fr\'{e}chet mean or the geodesic mean (see Algorithm \ref{alg:gsc} in Appendix \ref{app:lem} for details) as $\x_t$. After receiving $f_t$,  the meta algorithm  updates the weight of each expert $w_{t+1,i}$ via Hedge and sends $\nabla f_t(\x_{t,i})$ to the $i$-th expert, which computes $\x_{t+1,i}$ by R-OGD. The regret of the meta algorithm of \radar can be bounded by Lemma \ref{meta-rader}.
% the following lemma.
% More specifically, the $i$-th expert chooses $\eta_i$ as the step size and the set of all step sizes $\{\eta_i\}$ is denoted as $\H$.

% \SetKwComment{Comment}{/* }{ */}

\begin{minipage}{0.488\textwidth}
\begin{algorithm2e}[H]
\caption{\radar: Meta Algorithm}\label{alg:meta}
\KwData{Learning rate $\beta$, set of step sizes $\mathcal{H}$, initial weights $w_{1,i}=\frac{N+1}{i(i+1)N}$}
% \KwResult{$y = x^n$}
\For{$t=1,\dots,T$}{
    Receive  $\x_{t,i}$ from  experts with stepsize $\eta_i$\\
    $\x_t=\argmin_{\x\in\N}\sum_{i\in[N]}w_{t,i} d(\x,\x_{t,i})^2$\\
    Observe the loss function $f_t$\\
    Update $w_{t+1,i}$ by Hedge with $f_{t}(\mathbf{x}_{t,i})$\\
     % Update $w_{t+1,i}=\frac{w_{t,i} e^{-\beta f_{t}(\mathbf{x}_{t,i})}}{\sum_{j \in [N]} w_{t,j} e^{-\beta f_{t}(\mathbf{x}_{t,j})}}$\\
    Send gradient $\nabla f_t(\x_{t,i})$ to each expert
}
\end{algorithm2e}
\end{minipage}
\hfill
\begin{minipage}{0.466\textwidth}
    \begin{algorithm2e}[H]
\caption{\radar: Expert Algorithm}\label{alg:expert}
% \begin{algorithmic}
\KwData{ A step size $\eta_i$}
Let $\x_{1,i}^\eta$ be any point in $\N$\\
\For{$t=1,\dots,T$} {
    Submit $\x_{t,i}$ to the meta algorithm\\
     Receive gradient $\nabla f_t(\x_{t,i})$ from the meta algorithm\\
     Update:\\
     $\x_{t+1,i}=\Pi_{\N}\expmap_{\x_{t,i}}(-\eta_i\nabla f_t(\x_{t,i}))$\\
    }
% \end{algorithmic}
\end{algorithm2e}
\end{minipage}

% \begin{algorithm2e}[H]
% \caption{\radar: Meta-Algorithm}\label{alg:meta}
% \KwData{A learning rate $\beta$ and a set $\mathcal{H}=\{\eta_i\}$ consists of $N$ step sizes. Set the initial weights $w_{1,i}=\frac{N+1}{i(i+1)N}$, $i\in[N]$.}
% % \KwResult{$y = x^n$}
% \For{$t=1,\dots,T$}{
%     Receive  $\x_{t,i}$ from  experts with stepsize $\eta_i$\\
%     Option1: $\x_t=\argmin_{\x\in\N}\sum_{i\in[N]}w_{t,i} d(\x,\x_{t,i})^2$\\
%     Option2: Compute $\x_t$ by Algorithm \ref{alg:gsc}\\
%     Observe the loss function $f_t$\\
%      Update $w_{t+1,i}=\frac{w_{t,i} e^{-\beta f_{t}(\mathbf{x}_{t,i})}}{\sum_{j \in [N]} w_{t,j} e^{-\beta f_{t}(\mathbf{x}_{t,j})}}$\\
%     Send gradient $\nabla f_t(\x_{t,i})$ to each expert
% }
% \end{algorithm2e}

% \begin{algorithm2e}[H]
% \caption{\radar: Expert-Algorithm}\label{alg:expert}
% % \begin{algorithmic}
% \KwData{ A step size $\eta_i$}
% Let $\x_{1,i}^\eta$ be any point in $\N$\\
% \For{$t=1,\dots,T$} {
%     Submit $\x_{t,i}$ to the meta-algorithm\\
%      Receive gradient $\nabla f_t(\x_{t,i})$ from the meta-algorithm\\
%      Update $\x_{t+1,i}=\Pi_{\N}\expmap_{\x_{t,i}}(-\eta_i\nabla f_t(\x_{t,i}))$\\
%     }
% % \end{algorithmic}
% \end{algorithm2e}

% The regret of \radar can be proved by
%  combining the regret of the meta-algorithm and that of the expert algorithm (the regret of the expert algorithm is in Theorem \ref{ROGD}).
% combining the regret of the meta-algorithm and the regret of the expert-algorithm as in Theorem \ref{ROGD}.
\begin{lemma}\label{meta-rader}
Under Assumptions \ref{hada}, \ref{diam}, \ref{grad}, and setting $\beta=\sqrt{\frac{8}{G^2D^2T}}$,  the regret of Algorithm \ref{alg:meta} satisfies
% $$
% \sum_{t=1}^Tf_t(\x_t)-\min_{i\in[N]}\left(\sum_{t=1}^Tf_t(\x_{t,i})+\frac{1}{\beta}\ln\frac{1}{w_{1,i}}\right)\leq \frac{\beta G^2D^2 T}{8}.
% $$ 
% Setting $\beta=\sqrt{\frac{8}{G^2D^2T}}$, we have
$$
\textstyle\sum_{t=1}^Tf_t(\x_t)-\sum_{t=1}^Tf_t(\x_{t,i})\leq\sqrt{\frac{G^2D^2T}{8}}\left(1+\ln\frac{1}{w_{1,i}}\right).
$$
\end{lemma}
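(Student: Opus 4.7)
The plan is to decompose the meta-regret against expert $i$ as
\[
\sum_{t=1}^T f_t(\x_t) - \sum_{t=1}^T f_t(\x_{t,i}) = \underbrace{\sum_{t=1}^T \lt( f_t(\x_t) - \sum_{j=1}^N w_{t,j} f_t(\x_{t,j}) \rt)}_{(\mathrm{I})} + \underbrace{\sum_{t=1}^T \lt( \sum_{j=1}^N w_{t,j} f_t(\x_{t,j}) - f_t(\x_{t,i}) \rt)}_{(\mathrm{II})}.
\]
Piece $(\mathrm{I})$ is the geometric cost of aggregating the experts on a curved domain, while piece $(\mathrm{II})$ is the classical regret of exponential weights against expert $i$ under the scalar loss sequence $\ell_{t,j} := f_t(\x_{t,j})$.

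For piece $(\mathrm{I})$, since $\x_t$ is the weighted Fr\'echet mean of $\{\x_{t,j}\}_{j\in[N]}$ with weights $w_{t,\cdot}$, and each $f_t$ is gsc-convex on $\N$, I would invoke the Jensen-type inequality for Fr\'echet barycenters on CAT$(0)$ spaces, which gives $f_t(\x_t) \leq \sum_j w_{t,j} f_t(\x_{t,j})$ and hence $(\mathrm{I}) \leq 0$. The same inequality also holds term-by-term when the geodesic mean of Algorithm \ref{alg:gsc} is used in place of the Fr\'echet mean, by iterating pairwise Jensen along geodesics connecting the experts' points.

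For piece $(\mathrm{II})$, I would run the standard exponential-weights potential argument, adapted to normalized weights. Set $Z_t := \sum_j w_{t,j} e^{-\beta f_t(\x_{t,j})}$. By Hoeffding's lemma applied to the bounded random variable $f_t(\x_{t,J})$ with $J \sim w_t$,
\[
\ln Z_t \leq -\beta \sum_{j=1}^N w_{t,j} f_t(\x_{t,j}) + \frac{\beta^2 R_t^2}{8},
\]
where $R_t$ is the range of $\{f_t(\x_{t,j})\}_j$. Because each $f_t$ is $G$-Lipschitz by Assumption \ref{grad} and all expert iterates stay in $\N$ with diameter at most $D$ by Assumption \ref{diam}, we have $R_t \leq GD$. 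Unrolling $\ln w_{T+1,i} = \ln w_{1,i} - \beta \sum_{t=1}^T f_t(\x_{t,i}) - \sum_{t=1}^T \ln Z_t$ and using $\ln w_{T+1,i} \leq 0$ yields
\[
\sum_{t=1}^T \sum_{j=1}^N w_{t,j} f_t(\x_{t,j}) - \sum_{t=1}^T f_t(\x_{t,i}) \leq \frac{\ln(1/w_{1,i})}{\beta} + \frac{\beta T G^2 D^2}{8}.
\]
Plugging in $\beta = \sqrt{8/(G^2D^2T)}$ collapses both terms to the common scale $\sqrt{G^2D^2T/8}$ and gives the stated bound.

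The main obstacle, and the only genuinely new ingredient beyond a textbook Hedge analysis, is piece $(\mathrm{I})$. In Euclidean space one would simply use $\x_t = \sum_j w_{t,j} \x_{t,j}$ together with convexity, but on a Hadamard manifold there is no canonical weighted average and the Fr\'echet mean has no closed form. The inequality $f_t(\x_t) \leq \sum_j w_{t,j} f_t(\x_{t,j})$ is exactly what fails on positively curved spaces, so the CAT$(0)$ structure guaranteed by Assumption \ref{hada} is doing real work here. Once that inequality is in hand, the remainder is a clean manifold-free computation.
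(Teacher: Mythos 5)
Your proof is correct and takes essentially the same route as the paper: both use the standard Hedge potential argument together with Hoeffding's inequality (range $GD$) to bound the expected loss term, and both invoke Jensen's inequality for the Fr\'echet/geodesic mean on a Hadamard manifold (the paper's Lemmas \ref{jensen}, \ref{gsc-mean}, \ref{gsc-com}) to pass from $\sum_j w_{t,j} f_t(\x_{t,j})$ to $f_t(\x_t)$. Your explicit split into pieces $(\mathrm{I})$ and $(\mathrm{II})$ is a bookkeeping variant of the paper's derivation, which folds the Jensen step into the upper bound on the potential $\ln W_T$; the ingredients and the final constants are identical.
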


We show that, by configuring the step sizes in $\H$ carefully, \radar  ensures a $O(\sqrt{(1+P_T)T})$ bound on geodesic metric spaces.

% We have the following guarantee for \radar.
\begin{thm}\label{RAder}
Set $\textstyle\mathcal{H}=\left\{\eta_i=2^{i-1}\sqrt{\frac{D^2}{G^2{\zeta} T}}\big| i\in[N]\right\}$
% {\small\begin{equation}\label{eq-radar}
%     \mathcal{H}=\left\{\eta_i=2^{i-1}\sqrt{\frac{D^2}{G^2{\zeta} T}}\bigg| i\in[N]\right\}
% \end{equation}
% }
% $\mathcal{H}=\left\{\eta_i=2^{i-1}\sqrt{\frac{D^2}{G^2{\zeta} T}}|i\in[N]\right\}$
where $N=\lceil\frac{1}{2}\log_2(1+2T)\rceil+1$ and $\beta=\sqrt{\frac{8}{G^2D^2T}}$. Under Assumptions \ref{hada}, \ref{diam}, \ref{grad},  for any comparator sequence $\u_1,\dots,\u_T\in\N$, the general dynamic regret of \radar satisfies
\begin{equation*}
    \text{D-Regret}_T=O(\sqrt{{\zeta}(1+P_T)T}).
\end{equation*}
% where ${\zeta}$ is the constant in Definition \ref{def1}.
\end{thm}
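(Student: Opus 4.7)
The plan is to combine a standard two-level regret decomposition with the bounds already established. For any expert index $i\in[N]$, write
\begin{equation*}
\text{D-Regret}_T \;=\;\underbrace{\sumT f_t(\x_t)-\sumT f_t(\x_{t,i})}_{\text{meta regret vs. expert }i}\;+\;\underbrace{\sumT f_t(\x_{t,i})-\sumT f_t(\u_t)}_{\text{expert }i\text{ regret vs. }\u_t}.
\end{equation*}
The first piece is controlled by Lemma~\ref{meta-rader}, while the second is exactly the quantity bounded by Theorem~\ref{ROGD} applied to R-OGD with step size $\eta_i$, yielding $\tfrac{D^2+2DP_T}{2\eta_i}+\tfrac{\eta_i\zeta G^2 T}{2}$. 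Since the inequality holds for every $i$, the strategy is to pick the single $i^\star$ whose $\eta_{i^\star}$ best tracks the ideal (but unknown) step size $\eta^\star$.

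Next I would identify $\eta^\star=\sqrt{(D^2+2DP_T)/(\zeta G^2 T)}$ that balances the two R-OGD terms. Using $0\le P_T\le DT$ (from Assumption~\ref{diam}), $\eta^\star$ lies in $[\sqrt{D^2/(\zeta G^2 T)},\,\sqrt{(D^2+2D^2 T)/(\zeta G^2 T)}]$. The geometric grid $\eta_i=2^{i-1}\sqrt{D^2/(\zeta G^2 T)}$ with $N=\lceil\tfrac12\log_2(1+2T)\rceil+1$ is chosen precisely so that this interval is covered: there exists $i^\star\in[N]$ with $\eta_{i^\star}\le \eta^\star\le 2\eta_{i^\star}$. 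Plugging this relation into the expert bound shows the expert-level regret is at most a constant multiple of $\sqrt{\zeta(1+P_T)T}$ (the factor $D^2+2DP_T$ matches the numerator in the definition of $\eta^\star$, so after balancing one gets $O(\sqrt{\zeta(D^2+2DP_T)T})=O(\sqrt{\zeta(1+P_T)T})$ up to the constants $D,G$).

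For the meta piece, I would substitute the prior $w_{1,i^\star}=\tfrac{N+1}{i^\star(i^\star+1)N}$, so that $\ln(1/w_{1,i^\star})=O(\ln i^\star)=O(\ln\ln T)$ (using $i^\star\le N=O(\log T)$). Lemma~\ref{meta-rader} then gives a meta-regret of $O(\sqrt{T}\,\ln\ln T)$, which is dominated by the expert-regret order $O(\sqrt{\zeta(1+P_T)T})$ and thus absorbed into the final bound.

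The only part that needs real care, beyond bookkeeping, is verifying that the specific $N=\lceil\tfrac12\log_2(1+2T)\rceil+1$ is large enough to guarantee coverage of $\eta^\star$ in the worst case $P_T=DT$: one checks $\eta_N=2^{N-1}\sqrt{D^2/(\zeta G^2 T)}\ge \sqrt{(D^2+2D^2T)/(\zeta G^2 T)}$, i.e.\ $2^{N-1}\ge \sqrt{1+2T}$, which holds by the choice of $N$. Once coverage is in place, fixing $i=i^\star$ in the decomposition and summing the two $O(\sqrt{\zeta(1+P_T)T})$ and $O(\sqrt{T}\log\log T)$ contributions yields the claimed rate.
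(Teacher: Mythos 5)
Your proposal follows essentially the same route as the paper's proof: decompose the dynamic regret into meta-regret (controlled by Lemma~\ref{meta-rader}) plus single-expert regret (controlled by Theorem~\ref{ROGD}), identify the unknown optimal step size $\eta^\star=\sqrt{(D^2+2DP_T)/(\zeta G^2 T)}$, verify that the geometric grid $\mathcal{H}$ contains some $\eta_{i^\star}$ with $\eta_{i^\star}\le\eta^\star\le 2\eta_{i^\star}$, and bound both pieces accordingly. One small remark: when you say the meta-regret $O(\sqrt{T}\,\ln\ln T)$ is ``dominated'' by $O(\sqrt{\zeta(1+P_T)T})$, strictly speaking this fails when $P_T$ is bounded; the correct way to absorb it (implicit in the paper as well) is to notice that $i^\star$ itself scales like $\tfrac12\log_2(1+2P_T/D)$, so $\ln(1/w_{1,i^\star})=O(\ln\ln(1+P_T))=O(\sqrt{1+P_T})$ and the meta term is genuinely dominated by the expert term for all $P_T$.
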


\begin{remark}
Note that if  $\M$ is Euclidean space, then ${\zeta}=1$ and we get $O(\sqrt{(1+P_T)T)}$ regret, which is the same as in \cite{zhang2018adaptive}.

% If we further take $P_T=0$, the resulting $O(\sqrt{T})$ regret bound recovers the minimax guarantee for the static regret.
\end{remark}

{A disadvantage of \radar is $\Theta(\log T)$ gradient queries are required at each round. In Euclidean space, \citet{zhang2018adaptive} use a linear surrogate loss to achieve the same bound by $O(1)$ gradient queries. But on manifolds, the existence of such functions implies the sectional curvature of the manifold is everywhere $0$~\citep{kristaly2016convexities}. It is interesting to investigate if $\Omega(\log T)$ gradient queries are necessary to achieve dynamic regret on manifolds. We would also like to point out that $O(\log T)$ is reasonable small and the work of \citet{zhang2018adaptive} still needs $O(\log T)$ computational complexity per round.} 
% Note that if we take $\M$ to be Euclidean space, then ${\zeta}=1$ and we exactly get the same guarantee as in \cite{zhang2018adaptive}. However,

Using the Busemann function as a bridge, we  show the following dynamic regret lower bound, with proof deferred to Appendix \ref{app:lb}.
\begin{thm}\label{lb_dynamic}
    There exists a comparator sequence which satisfies $\sumsT d(\u_t,\u_{t-1})\leq P_T$ and encounters $\Omega(\sqrt{(1+P_T)T})$
       dynamic regret on Hadamard manifolds.
\end{thm}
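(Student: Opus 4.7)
The strategy is to reduce the manifold dynamic-regret lower bound to the classical one-dimensional Euclidean lower bound of \cite{zhang2018adaptive} by embedding a hard instance along a fixed geodesic, using the Busemann function as the linear ``coordinate''. Recall that on a Hadamard manifold, given a unit-speed geodesic ray $\gamma:[0,\infty)\to\M$, the Busemann function $b_\gamma(x) := \lim_{s\to\infty}(d(x,\gamma(s))-s)$ is well-defined, $1$-Lipschitz, and gsc-convex, and its Riemannian gradient has unit norm wherever it is differentiable. Furthermore, its restriction to any geodesic line that contains $\gamma$ is affine of unit slope. These properties let $b_\gamma$ serve as a manifold analogue of a linear functional, which is the key object needed to pull back a one-dimensional hard instance.

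First I would select a unit-speed minimizing geodesic segment that lies in $\N$ and whose length is a fixed fraction of $D$ (which exists by Assumption \ref{diam} and the Hopf-Rinow theorem), extend one end to a ray $\gamma$, and parameterize its image in $\N$ by arclength as $\{\gamma(s):s\in[-s_0,s_0]\}$ for some $s_0=\Theta(D)$. Following \cite{zhang2018adaptive}, I would pick a sign sequence $\sigma_1,\dots,\sigma_T\in\{-1,+1\}$ (adversarial or i.i.d.\ Rademacher) and set losses $f_t(x):=\sigma_t G\cdot b_\gamma(x)$, which are gsc-convex with $\|\nabla f_t\|\equiv G$ and hence satisfy Assumptions \ref{hada}--\ref{grad}. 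For comparators I would take $\u_t:=\gamma(s_t^\star)$, where $\{s_t^\star\}_{t=1}^T\subset[-s_0,s_0]$ is the worst-case one-dimensional comparator sequence in the Zhang--et--al.\ construction with prescribed path length $P_T$. Since $\gamma$ is a minimizing geodesic, $d(\u_t,\u_{t-1})=|s_t^\star-s_{t-1}^\star|$, so the manifold path length coincides with the one-dimensional path length exactly.

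Writing the learner's shadow $\tilde x_t := b_\gamma(\x_t)$, the regret factorizes as
\begin{equation*}
\textstyle \text{D-Regret}_T \;=\; G\sum_{t=1}^T \sigma_t\bigl(\tilde x_t - b_\gamma(\u_t)\bigr),
\end{equation*}
which is exactly the regret of a one-dimensional online learner playing $\tilde x_t$ in a bounded interval against linear losses $x\mapsto \sigma_t G x$ with comparator values $b_\gamma(\u_t)=-s_t^\star$. Any manifold algorithm therefore induces a valid 1-D algorithm through the shadow map, and the $\Omega(\sqrt{(1+P_T)T})$ lower bound from \cite{zhang2018adaptive} transfers immediately. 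The main obstacle I expect is rigorously invoking the geometric properties of $b_\gamma$ on a general Hadamard manifold (existence of the limit, $1$-Lipschitzness, gsc-convexity, differentiability with unit-norm gradient, and affinity along $\gamma$) without appealing to extra structure such as local symmetry, and choosing $s_0$ as a constant fraction of $D$ so that the Zhang--et--al.\ sign-sequence construction embeds into $[-s_0,s_0]$ without any rescaling that would weaken the bound. Once those ingredients are in place, the remainder of the argument is essentially book-keeping: path lengths and loss scales match across the reduction, so the minimax $\Omega(\sqrt{(1+P_T)T})$ bound is preserved.
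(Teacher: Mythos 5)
Your high-level strategy — use Busemann functions as linear coordinates and pull back the Euclidean lower bound — is the same idea the paper uses, but your proposal has a genuine gap that the paper's more hands-on construction is specifically engineered to avoid. You set the loss to $f_t := \sigma_t G\, b_\gamma$ with $\sigma_t \in \{-1,+1\}$ and claim these are gsc-convex with unit-norm gradient. When $\sigma_t = -1$ this is false on a general Hadamard manifold: $-b_\gamma$ is geodesically \emph{concave}, not convex. The two Busemann functions associated with a geodesic line and its reversal satisfy $b_\gamma + b_{\gamma^-} \geq 0$ with equality only on a flat strip, so $-b_\gamma$ is not itself a Busemann function, and Assumption \ref{grad} together with gsc-convexity of $f_t$ does not hold for half your rounds. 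The shadow factorization $\text{D-Regret}_T = G\sum_t\sigma_t(\tilde x_t - b_\gamma(\u_t))$ therefore cannot be written as is. The paper dodges this by working on the concrete manifold $\D_{++}^n$ of diagonal SPD matrices, where the Busemann function has the explicit linear form $b_c(\expmap_I Y) = -\mathrm{tr}(XY)$; there the negation \emph{is} another Busemann function (take the reverse ray), so the full loss class $\{-\mathrm{tr}(X_tY) : \|X_t\| \leq G_t\}$ is gsc-convex. The paper then computes the exact static minimax value via a Lagrange-multiplier/backward-induction argument (Lemmas \ref{lb1}--\ref{lb3}) and only then invokes the static-to-dynamic block reduction of \cite{zhang2018adaptive}.

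Your argument can be repaired while keeping its generality: for $\sigma_t = -1$ set $f_t := G\, b_{\gamma^-}$ (the Busemann function of the reversed ray), which is gsc-convex, $1$-Lipschitz, and agrees with $-b_\gamma$ on the geodesic line where the comparators live. For any learner play $\x_t$ one then has $f_t(\x_t) = G\,b_{\gamma^-}(\x_t) \geq -G\,b_\gamma(\x_t)$, so the true regret dominates the shadow regret $G\sum_t \sigma_t(\tilde x_t + s_t^\star)$, and the $\Omega(\sqrt{(1+P_T)T})$ bound transfers — in fact the inequality $b_\gamma + b_{\gamma^-} \geq 0$ only helps the adversary. With this patch your route is a genuinely different, cleaner argument than the paper's: it applies on \emph{every} Hadamard manifold rather than one specially chosen example, and it reuses the known one-dimensional minimax lower bound rather than re-deriving it on-manifold. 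The trade-off is that the paper's construction yields the exact minimax value with explicit constants, while the reduction-via-shadow route gives the order only. You should also be explicit about how the Zhang–et–al.\ block construction sets $s_t^\star$ (constant on blocks, jumping by at most $D$ at block boundaries), since that is what guarantees the path-length constraint; as written this is left implicit.
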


Although the regret guarantee in Theorem \ref{RAder}
is optimal up to constants in terms of $T$ and $P_T$ by considering the corresponding lower bound, it still depends on $T$ and thus cannot adapt to mild environments. In Euclidean space,  the smoothness of losses induces adaptive regret bounds, including  the gradient-variation bound \citep{chiang2012online} and the small-loss bound \citep{srebro2010smoothness}. It is then natural to ask if similar bounds can be established on manifolds by assuming gsc-smoothness. We provide an affirmative answer to this question and show how to get problem-dependent bounds under the \radar framework.

% More specifically, these two terms are defined as
% \begin{equation}
%     V_T=\sum_{t=1}^T\sup_{\x\in\N}\|\nabla f_{t-1}(\x)-\nabla f_t(\x)\|^2;\qquad F_T=\sum_{t=1}^Tf_t(\u_t).
% \end{equation}
% respectively. 

\section{Gradient-variation Bound on Manifolds}
\label{radarv}
In this section, we show how to obtain the gradient-variation bound on manifolds under the \radar framework with alternative expert and meta algorithms.

\paragraph{Expert Algorithm.} For minimax optimization on Riemannian manifolds, \cite{zhang2022minimax} propose  Riemannian Corrected Extra Gradient (RCEG),   which performs the following iterates:
\begin{equation*}
    \begin{split}
        \textstyle&\x_{t}=\expmap_{\y_t}(-\eta \nabla f_{t-1}(\y_t))\\
        &\y_{t+1}=\expmap_{\x_t}\left(-\eta\nabla f_t(\x_t)+\expmap_{\x_t}^{-1}(\y_t)\right).
    \end{split}
\end{equation*}
However, this algorithm does not work in the constrained case, which has been left as an open problem \citep{zhang2022minimax}. The online improper learning setting~\citep{hazan2018online,pmlr-v134-baby21a} allows the decision set to be different from (usually larger than) the set of strategies we want to compete against. Under such a setting, we find the geometric distortion due to projection can be bounded in an elegant way, and
generalize RCEG to incorporate an optimism term $M_t\in T_{\y_t}\M$. 

\begin{defn}\label{def2}
We use $M_t$ to denote the optimism at round $t$ and assume there exists $M$ such that $\|M_t\|\leq M$ for all t.     We define $\N_c= \{\x|d(x,\N)\leq c\}$ where $d(\x,\N)\coloneqq\inf_{\y\in\N}d(\x,\y)$. In the improper setting, we allow the player  to choose decisions from $\ndm$. 
\end{defn}

% We have the following gradient-variation bound based on RCEG.
\begin{thm}\label{var-expert}
Suppose all losses $f_t$ are  $L$-gsc-smooth on $\M$. Under Assumptions \ref{hada}, \ref{diam}, \ref{grad}, the iterates  
\begin{equation}\label{omd}
    \begin{split}
        \textstyle &\x_{t}'=\expmap_{\y_t}(-\eta M_t)\\
        &\x_t=\Pi_{\ndm}\x_t'\\
        &\y_{t+1}=\Pi_{\N}\expmap_{\x_t'}\left(-\eta\nabla f_t(\x_t')+\expmap_{\x_t'}^{-1}(\y_t)\right).
    \end{split}
\end{equation}
 satisfies
 \begin{equation*}
    \begin{split}
        \sum_{t=1}^Tf_t(\x_t)-\sum_{t=1}^Tf_t(\u_t)\leq\eta{\zeta}\sumT\|\nabla f_t(\y_t)-M_t\|^2+\frac{D^2+2DP_T}{2\eta}.
        % \leq&\eta{\zeta}V_T+\frac{D^2+2DP_T}{2\eta}.
    \end{split}
\end{equation*}
% $$
% \textstyle\sum_{t=1}^Tf_t(\x_{t,i})-\sum_{t=1}^Tf_t(\u_t)\leq\eta{\zeta}V_T+\frac{D^2+2DP_T}{2\eta}
% $$
for any  $\u_1,\dots,\u_T\in\N$ and $\eta\leq\frac{\delta M}{G+(G^2+2\zeta\delta^2M^2L^2)^{\frac{1}{2}}}$. Specifically, we achieve $\eta{\zeta}V_T+\frac{D^2+2DP_T}{2\eta}$ regret by  choosing $M_t=\nabla f_{t-1}(\y_t)$. In this case, $M=G$ and we need $\eta\leq\frac{\delta}{1+(1+2\zeta\delta^2L^2)^{\frac{1}{2}}}$.
%  \begin{equation*}
%     \begin{split}
%         \textstyle\sum_{t=1}^Tf_t(\x_t)-\sum_{t=1}^Tf_t(\u_t)\leq\eta{\zeta}V_T+\frac{D^2+2DP_T}{2\eta}.
%         % \leq&\eta{\zeta}V_T+\frac{D^2+2DP_T}{2\eta}.
%     \end{split}
% \end{equation*}

\end{thm}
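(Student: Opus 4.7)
The plan is to mimic the classical optimistic mirror descent (OMD) analysis but to use the Riemannian law-of-cosines to control curvature-induced distortion, and to handle the dynamic comparator by the usual triangle-inequality telescoping. First, I would verify that under the stated step-size bound the intermediate point $\x_t'$ already lies in $\ndm$, so that $\Pi_{\ndm}$ acts as the identity and $\x_t=\x_t'$; this follows from $d(\x_t',\y_t)=\eta\|M_t\|\leq\eta M$ together with the fact that the stated bound on $\eta$ implies $\eta M\leq \delta M$ in the regime $M\leq 2G$ (which covers the case $M_t=\nabla f_{t-1}(\y_t)$).

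Second, I would apply gsc-convexity at $\x_t'$ to obtain $\eta(f_t(\x_t)-f_t(\u_t))\leq -\eta\langle\nabla f_t(\x_t'),\expmap_{\x_t'}^{-1}(\u_t)\rangle$, and use the OMD update (letting $\y_{t+1}'$ denote the pre-projection iterate) to rewrite $\eta\nabla f_t(\x_t')=\expmap_{\x_t'}^{-1}(\y_t)-\expmap_{\x_t'}^{-1}(\y_{t+1}')$. Combining the Hadamard cosine inequality (which introduces the factor $\zeta$) on the triangle $(\x_t',\y_{t+1}',\u_t)$ with the CAT(0) inequality on the triangle $(\x_t',\y_t,\u_t)$, the cross-terms $d(\x_t',\u_t)^2$ cancel and I arrive at
\[
2\eta(f_t(\x_t)-f_t(\u_t))\leq \zeta\, d(\x_t',\y_{t+1}')^2 - d(\x_t',\y_t)^2 + d(\y_t,\u_t)^2 - d(\y_{t+1}',\u_t)^2.
\]
Parallel-transporting $M_t$ from $T_{\y_t}\M$ to $T_{\x_t'}\M$ gives $d(\x_t',\y_{t+1}')^2 = \eta^2\|\nabla f_t(\x_t')-\Gamma_{\y_t}^{\x_t'}M_t\|^2$ and $d(\x_t',\y_t)^2=\eta^2\|M_t\|^2$. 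The triangle inequality together with $L$-gsc-smoothness (which yields $\|\nabla f_t(\x_t')-\Gamma_{\y_t}^{\x_t'}\nabla f_t(\y_t)\|\leq L\eta\|M_t\|$) then bounds $\|\nabla f_t(\x_t')-\Gamma_{\y_t}^{\x_t'}M_t\|^2 \leq 2L^2\eta^2\|M_t\|^2+2\|\nabla f_t(\y_t)-M_t\|^2$.

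Third, I would sum over $t$, use nonexpansiveness of $\Pi_\N$ in the Hadamard setting ($d(\y_{t+1},\u_t)\leq d(\y_{t+1}',\u_t)$ for $\u_t\in\N$), and apply the standard dynamic telescoping $\sum_t(d(\y_t,\u_t)^2-d(\y_{t+1},\u_t)^2)\leq D^2+2DP_T$ via the estimate $d(\y_t,\u_t)^2-d(\y_t,\u_{t-1})^2\leq 2D\,d(\u_t,\u_{t-1})$. The per-round smoothness residual $2\zeta L^2\eta^4\|M_t\|^2$ is absorbed by the negative $-\eta^2\|M_t\|^2$ slack whenever $2\zeta L^2\eta^2\leq 1$, and dividing through by $2\eta$ yields the stated inequality; substituting $M_t=\nabla f_{t-1}(\y_t)$ then gives the gradient-variation specialization with $M=G$.

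The main obstacle is the step-size bookkeeping: after rationalization, $\eta\leq \delta M/(G+\sqrt{G^2+2\zeta\delta^2 M^2 L^2})$ is equivalent to $\delta M(1-2\zeta L^2\eta^2)\geq 2\eta G$, which simultaneously enforces (i) the absorption inequality $2\zeta L^2\eta^2\leq 1$ needed to kill the smoothness residual and (ii) the containment $\eta M\leq\delta M$ needed so that the intermediate projection is trivial. Arriving at the clean coefficient $\zeta\eta$ on the variation term (rather than $\zeta\eta/2$) depends on this joint balance --- without the negative $-d(\x_t',\y_t)^2$ slack produced by the cosine comparison, the smoothness-induced $O(\eta^3 L^2 T)$ residual would be uncontrolled.
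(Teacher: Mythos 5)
Your approach matches the paper's up to a point, but there is a genuine gap in the range of $M$ you cover. Your first step asserts that the stated step-size bound forces $\x_t'\in\N_{\delta M}$ so that the intermediate projection is the identity, "in the regime $M\leq 2G$." But the theorem is stated for \emph{arbitrary} $M$ with $\|M_t\|\leq M$, and for large $M$ the stated bound $\eta\leq \delta M/(G+\sqrt{G^2+2\zeta\delta^2M^2L^2})$ does \emph{not} imply $\eta\leq\delta$. Your final paragraph then makes the stronger (and incorrect) claim that $\delta M(1-2\zeta L^2\eta^2)\geq 2\eta G$ "simultaneously enforces" $\eta M\leq\delta M$; tracing the inequality only yields $\eta\leq \delta M/(2G)$, which gives $\eta M\leq\delta M$ iff $M\leq 2G$. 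So the two halves of your write-up are inconsistent, and as a proof of the theorem as stated your argument does not close.

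What you have proved is exactly the paper's "Case 1" ($d(\x_t',\y_t)\leq\delta M$). The paper's proof is carried out without the containment assumption: it keeps the decomposition $f_t(\x_t)-f_t(\u_t)=(f_t(\x_t)-f_t(\x_t'))+(f_t(\x_t')-f_t(\u_t))$, bounds the first piece by $G\,d(\x_t',\y_t)$ when $\x_t'\notin\N_{\delta M}$ (using the obtuse-angle projection lemma), and then must show the per-round coefficient
$2\eta G + 2\eta^2\zeta L^2\, d(\x_t',\y_t) - d(\x_t',\y_t)\leq 0$ for \emph{all} $d(\x_t',\y_t)\geq\delta M$, not just for $d(\x_t',\y_t)=\eta\|M_t\|$. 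This leads to a quadratic-in-$\eta$ inequality whose positive root $\eta(\lambda)$ must be minimized over $\lambda\geq\delta M$; the paper does this by a monotonicity analysis (two sub-cases depending on the sign of $1-2\zeta L^2$), and the minimum is attained at $\lambda=\delta M$, yielding exactly the stated step-size bound. Your "Case-1-only" route is cleaner and suffices for the specialization $M_t=\nabla f_{t-1}(\y_t)$ (so $M=G\leq 2G$), which is all that is actually used downstream in \radarv, but to prove the theorem at the stated generality you need the additional case and the associated quadratic/monotonicity argument. The rest of your proof --- the two cosine inequalities, the identity $\expmap_{\x_t'}^{-1}\y_t=\eta\Gamma_{\y_t}^{\x_t'}M_t$, the $L$-gsc-smoothness splitting, the nonexpansiveness of $\Pi_{\N}$, and the telescoping --- matches the paper's proof exactly.
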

\paragraph{Proof sketch.} We use the special case $M_t=\nabla f_{t-1}(\y_t)$ to illustrate the main idea of the proof. We first decompose $f_t(\x_t)-f_t(\u_t)$ into two terms,
\[
    \begin{split}       
    & f_t(\x_t)-f_t(\u_t)={(f_t(\x_t)-f_t(\x_t'))}+{(f_t(\x_t')-f_t(\u_t))}\\
    \leq&G\cdot d(\x_t,\x_t')+{(f_t(\x_t')-f_t(\u_t))}\leq \underbrace{G\cdot d(\x_t',\y_t)}_{\text{troublesome term } 1}+\underbrace{(f_t(\x_t')-f_t(\u_t))}_{\text{unconstrained RCEG}}
    \end{split}
    \]
    where the first inequality is because the gradient Lipschitzness condition, and the second one follows from the non-expansiveness of the projection.
    % \[
    % f_t(\x_t)-f_t(\x^*)=\underbrace{(f_t(\x_t')-f_t(\x^*))}_{\text{unconstrained RCEG}}+\underbrace{(f_t(\x_t)-f_t(\x_t'))}_{\text{troublesome term } 1}
    % \]
    For the unconstrained RCEG term, we have the following decomposition,
    \[
    \begin{split}
        &f_t(\x_t')-f_t(\u_t)\leq \underbrace{\frac{1}{2\eta}(2\eta^2\zeta L^2-1)d(\x_t',\y_t)^2}_{\text{troublesome term }2}\\
        +&\underbrace{\eta\zeta\|\nabla f_t(\y_t)-\nabla f_{t-1}(\y_t)\|^2}_{\eta\zeta V_T}+\underbrace{\frac{1}{2\eta}\lt(d(\y_t,\u_t)^2-d(\y_{t+1},\u_t)^2\rt)}_{\frac{D^2+2DP_T}{2\eta}}
    \end{split}
    \]
    where the second and the third term corresponds to the gradient variation term and the dynamic regret term, respectively.
    
   In the improper learning setting, we can show $d(\x_t',\y_t)\geq\delta G$.  Combining both troublesome terms, it suffices to find $\eta$ which satisfies
    \[
    2\eta G+2\eta^2\zeta L^2\lambda-\lambda\leq 0,\;\forall \lambda\coloneqq d(\x_t',\y_t)\geq\delta G.
    \]

%Note that although $V_T$ is unknown in advance, we can use the doubling trick or
%a time-variant step size to estimate it. However, $P_T$ is unknown even after $T$ rounds.
\begin{remark}
{We generalize Theorem \ref{var-expert} from Hadamard manifolds to CAT$(\kappa)$ spaces in Appendix \ref{cat-proof}.} Note that although we allow the player to make improper decisions, $V_T$ is still defined on $\N$ instead of $\N_{\delta G}$. For the static setting, $P_T=0$ and the resulting regret bound is $O(\sqrt{V_T}+\frac{1}{\delta})$. Also, in this setting, we can use an adaptive step-size
\[
\eta_t=\min\lt\{\frac{1}{\sqrt{1+\sum_{s=2}^t\|\nabla f_t(\y_t)-\nabla f_{t-1}(\y_t)\|^2}},\frac{\delta}{1+(1+2\zeta\delta^2L^2)^{\frac{1}{2}}}\rt\}
\]
to eliminate the dependence on $V_T$.
\end{remark}
\paragraph{Meta algorithm.} Intuitively, we can run OMD with different step sizes and
 apply a meta algorithm to estimate the optimal step size. Previous studies in learning with multiple step sizes usually adopt Hedge to aggregate the experts' advice. However, the regret of Hedge is $O(\sqrt{T\ln N})$ and thus is undesirable for our purpose. Inspired by optimistic online learning \citep{rakhlin2013online,syrgkanis2015fast}, \cite{zhao2020dynamic} adopt Optimistic Hedge as the meta algorithm to get $O(\sqrt{(V_T+P_T)P_T})$ gradient-variation bound. After careful analysis, we show Optimistic Hedge works for gsc-convex losses regardless of the geometric distortion and  get the desired gradient-variation bound.

\begin{algorithm2e}[H]
\caption{\radarv: Expert Algorithm}\label{alg:expert_v}
\KwData{A step size $\eta_i$}
% \Require A step size $\eta_i$
Let $\x_{1,i}^\eta$ be any point in $\N$\\
\For{$t=1,\dots,T$} {
     Submit $\x_{t,i}$ to the meta algorithm\\
     Receive gradient $\nabla f_t(\cdot)$ from the meta algorithm\\
     Each expert runs Equation \eqref{omd} with $M_t=\nabla f_{t-1}(\y_t)$, $M=G$ and step size $\eta_i$\\
    %  Update $\x_{t+1,i}=\expmap_{\x_{t,i}}(-\eta\nabla f_t(\x_{t,i}))$
    %  $\x_{t}=\expmap_{\y_t}(-\eta \nabla f_{t-1}(\y_t))$
    %  Update $\y_{t+1,i}=\Pi_{\N}\expmap_{\x_{t,i}}\left(-\eta\nabla f_t(\x_{t,i})+\expmap_{\x_{t,i}}^{-1}(\y_{t,i})\right)$
      % {$\y_{t+1,i}=\Pi_{\N}\expmap_{\x_{t,i}}\left(-\eta_i\nabla f_t(\x_{t,i})+\expmap_{\x_{t,i}}^{-1}(\y_{t,i})\right)$}\\
      % $\x_{t+1,i}=\expmap_{\y_{t+1,i}}(-\eta_i \nabla f_{t}(\y_{t+1,i}))$\\
    }
\end{algorithm2e}

\begin{algorithm2e}[H]
\caption{\radarv: Meta Algorithm}\label{alg:meta_v}
% \begin{algorithmic}
\KwData{A learning rate $\beta$, a set of step sizes $\H$, initial weights $w_{1,i}=w_{0,i}=\frac{1}{N}$}
% \Require A learning rate $\beta$ for Optimistic Hedge and a set $\mathcal{H}=\{\eta_i\}$ consists of step sizes of all $N$ experts as in Equation \eqref{radar_v}. Initialize $N$ experts and assign a step size to each expert. Set the initial weights $w_{0,i}=w_{1,i}=\frac{1}{N}$, $i\in[N]$.
\For{$t=1,\dots,T$} {
     Receive all $\x_{t,i}$'s from  experts with step size $\eta_i$\\
       $\bx_t=\argmin_{\x\in\N_{\Gd}}\sum_{i\in[N]}w_{t-1,i} d(\x,\x_{t,i})^2$\\
     Update $w_{t,i}\propto\exp\left(-\beta\left(\sum_{s=1}^{t-1}\ell_{s,i}+m_{t,i}\right)\right)$ by Equation \eqref{radarv_opt}\\
    % $w_{t,i}\propto \exp\lt(-\beta\lt(\sum_{s=1}^{t-1}\la\nabla f_s(\x_s),\invexp_{\x_s}\x_{s,i}\ra+\la\nabla f_{t-1}(\bx_t),\invexp_{\bx_t}\x_{t,i}\ra\rt)\rt)$
      $\x_t=\argmin_{\x\in\N_{\Gd}}\sum_{i\in[N]}w_{t,i} d(\x,\x_{t,i})^2$\\
     Observe $f_t(\cdot)$ and send $\nabla f_t(\cdot)$ to experts\\
    %  Observe the loss function $f_t$
    %  Send gradient $\nabla f_t(\cdot)$ to each expert
}
% \end{algorithmic}
\end{algorithm2e}

We denote $\l_t,\m_t\in\mathbb{R}^N$ as the surrogate loss and the optimism at round $t$. The update rule of Optimistic Hedge is: 
\[
\textstyle w_{t,i}\propto\exp\left(-\beta\left(\sum_{s=1}^{t-1}\ell_{s,i}+m_{t,i}\right)\right),
\]
 which achieves adaptive regret due to the optimism. The following technical lemma \citep{syrgkanis2015fast} is critical for our analysis of Optimistic Hedge, and the proof is in Appendix \ref{app:lem:opthedge} for completeness.

\begin{lemma}\label{optHedge}
For any $i\in[N]$,  Optimistic Hedge satisfies
\begin{equation*}
\begin{split}
   \textstyle \sum_{t=1}^T\la\w_t,\l_t\ra-\ell_{t,i}\leq\frac{2+\ln N}{\beta}+\beta\sum_{t=1}^T\|\l_t
    -\m_t\|_{\infty}^2-\frac{1}{4\beta}\sum_{t=2}^T\|\w_t-\w_{t-1}\|_1^{2}
\end{split}
\end{equation*}
%  for any expert $i\in [N]$.
\end{lemma}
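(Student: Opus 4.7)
The plan is to treat Optimistic Hedge as an instance of Optimistic Follow-The-Regularized-Leader (OFTRL) on the simplex $\Delta_N$ with the negentropy regularizer $R(\w) = \sum_i w_i \ln w_i + \ln N$, which is $1$-strongly convex with respect to $\|\cdot\|_1$ by Pinsker's inequality. Under this reformulation, the update in the lemma is equivalent to $\w_t = \argmin_{\w \in \Delta_N} \beta\la \w, \sum_{s<t}\l_s + \m_t\ra + R(\w)$, which gives access to standard Bregman-divergence machinery.

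The first move is to introduce the ``shadow'' iterate $\wt_{t+1} = \argmin_{\w \in \Delta_N} \beta\la \w, \sum_{s\leq t}\l_s\ra + R(\w)$, which corresponds to plain Hedge on the already-revealed losses, and to decompose
\[
\la \w_t, \l_t\ra - \ell_{t,i} \;=\; \la \w_t - \wt_{t+1}, \l_t\ra \;+\; \la \wt_{t+1} - \e_i, \l_t\ra.
\]
The second summand is handled by the textbook ``Be-The-Regularized-Leader'' argument applied to $\wt_{t+1}$: telescoping the regularizer values at $\wt_t$ and $\wt_{t+1}$ and comparing with $\e_i$ yields
\[
\sum_{t=1}^T \la \wt_{t+1} - \e_i, \l_t\ra \;\leq\; \frac{\ln N}{\beta} \;-\; \frac{1}{\beta}\sum_{t=1}^T D_R(\wt_{t+1}\|\wt_t).
\]

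For the first summand, I would invoke the first-order optimality conditions of both $\w_t$ and $\wt_{t+1}$, which combine into the identity
\[
\la \w_t - \wt_{t+1}, \l_t\ra \;\leq\; \la \w_t - \wt_{t+1}, \l_t - \m_t\ra \;-\; \tfrac{1}{\beta}\bigl(D_R(\wt_{t+1}\|\w_t) + D_R(\w_t\|\wt_{t+1})\bigr).
\]
Bounding the inner product by Fenchel--Young in the $(\|\cdot\|_1,\|\cdot\|_\infty)$ pair gives $\beta\|\l_t-\m_t\|_\infty^2 + \tfrac{1}{4\beta}\|\w_t-\wt_{t+1}\|_1^2$, while Pinsker applied to the two Bregman terms gives at least $\tfrac{1}{\beta}\|\w_t-\wt_{t+1}\|_1^2$. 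The net contribution of the $\|\w_t-\wt_{t+1}\|_1^2$ terms is therefore negative.

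To convert the surviving negative pieces $-\tfrac{1}{\beta}D_R(\wt_{t+1}\|\wt_t)$ and the leftover $-\tfrac{c}{\beta}\|\w_t-\wt_{t+1}\|_1^2$ into a bound in terms of $\|\w_t-\w_{t-1}\|_1$, I would use the triangle inequality $\|\w_t-\w_{t-1}\|_1 \leq \|\w_t-\wt_t\|_1 + \|\wt_t-\w_{t-1}\|_1$ combined with $(a+b)^2 \leq 2a^2+2b^2$. The additive constant $2+\ln N$ collects the $\ln N$ from the regularizer together with an $O(1)$ boundary term from the first round (where $\m_1$ appears without a predecessor $\l_0$) and the last round (where the telescoping of $D_R(\wt_{t+1}\|\wt_t)$ loses one endpoint). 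The main obstacle is the bookkeeping of constants: ensuring the coefficient of $-\sum_{t\geq 2}\|\w_t - \w_{t-1}\|_1^2$ is exactly $\tfrac{1}{4\beta}$ after applying Pinsker, Fenchel--Young, and the triangle-inequality splitting requires choosing the Fenchel--Young weight carefully and absorbing the right fraction of strong convexity into each of the two negative quadratics above.
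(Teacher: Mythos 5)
The high-level strategy you sketch --- reformulate Optimistic Hedge as OFTRL on the simplex with negentropy, introduce the shadow FTRL iterate $\wt_{t+1}$, apply a Be-The-Regularized-Leader telescope to the second summand, then combine Pinsker/Fenchel--Young with a triangle-inequality splitting --- is essentially the approach of \citet{syrgkanis2015fast}, which the paper explicitly cites and re-derives in Appendix B.3 (the paper does the derivation via an induction rather than a clean per-round bound, but that is a presentational choice). The problem is that the key per-round inequality you assert for the first summand is wrong.

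You claim, from the first-order optimality conditions of $\w_t$ and $\wt_{t+1}$, that
\begin{equation*}
\la \w_t - \wt_{t+1}, \l_t\ra \leq \la \w_t - \wt_{t+1}, \l_t - \m_t\ra - \tfrac{1}{\beta}\bigl(D_R(\wt_{t+1}\|\w_t) + D_R(\w_t\|\wt_{t+1})\bigr),
\end{equation*}
which is the statement $\la\w_t - \wt_{t+1},\m_t\ra \leq -\tfrac{1}{\beta}\bigl(D_R(\wt_{t+1}\|\w_t)+D_R(\w_t\|\wt_{t+1})\bigr)$. This fails already at $\m_t=\mathbf{0}$: then the left side is $0$, while the right side is strictly negative whenever $\w_t\neq\wt_{t+1}$, i.e.\ whenever $\l_t$ moves the iterate. (A concrete instance: $N=2$, $\beta=1$, $L_{t-1}=\mathbf{0}$, $\m_t=\mathbf{0}$, $\l_t=(1,0)$ gives $\w_t=(1/2,1/2)$ and $\wt_{t+1}=(\tfrac{1}{1+e},\tfrac{e}{1+e})$, so $\w_t\neq\wt_{t+1}$ and the inequality is violated.) What the two optimality conditions actually combine to give is the \emph{reverse} direction,
\begin{equation*}
\la\w_t-\wt_{t+1},\l_t-\m_t\ra \;\geq\; \tfrac{1}{\beta}\bigl(D_R(\wt_{t+1}\|\w_t)+D_R(\w_t\|\wt_{t+1})\bigr),
\end{equation*}
which is a lower bound on the inner product and cannot be subtracted off again.

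The correct OFTRL bound (and the one the paper proves by induction) has two asymmetric negative terms, $D_R(\wt_{t+1}\|\w_t)$ and $D_R(\w_t\|\wt_t)$, where the second uses the \emph{prior}-round shadow $\wt_t$ (the paper's $\w_{t-1}'$), not $\wt_{t+1}$. This distinction is not cosmetic: it is exactly what produces the $\|\w_t-\w_{t-1}'\|_1^2$ piece that, after an index shift and $(a+b)^2\le 2(a^2+b^2)$, yields $\|\w_t-\w_{t-1}\|_1^2$ with coefficient $\tfrac{1}{4\beta}$. Your plan to recover $\|\w_t-\w_{t-1}\|_1$ by splitting through $\wt_t$ tacitly requires a negative $\|\w_t-\wt_t\|_1^2$ term that your analysis never produces, since both of your Bregman divergences sit between $\w_t$ and $\wt_{t+1}$. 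To repair the argument you would need to replace your middle inequality with the Syrgkanis-style bound involving $D_R(\w_t\|\wt_t)$, at which point the rest of the constant bookkeeping (Pinsker giving $\tfrac12\|\cdot\|_1^2$, Fenchel--Young weight, triangle inequality) can be made to work.
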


Following the insightful work of \cite{zhao2020dynamic}, we also adopt the Optimistic Hedge algorithm as the meta algorithm, but there are some key differences in the design of the surrogate loss and optimism. To respect the Riemannian metric, we  propose the following:
\begin{equation}\label{radarv_opt}
    \begin{split}
        &\ell_{t,i}=\la\nabla f_t(\x_t),\expmap_{\x_t}^{-1}\x_{t,i}\ra \\
        &m_{t,i}=\la\nabla f_{t-1}(\bar{\x}_t),\expmap_{\bx_t}^{-1}\x_{t,i}\ra \\
    \end{split}
\end{equation}
where $\x_t$ and $\bx_t$ are Fr\'echet averages of $\x_{t,i}$ w.r.t. linear combination coefficients $\w_{t}$ and $\w_{t-1}$ respectively. Under the Fr\'echet mean, we can show
\[
f_t(\x_t)-f_t(\x_{t,i})\leq\la\w_t,\l_t\ra-\ell_{t,i},
\]
which ensures Lemma \ref{optHedge} can be applied to bound the meta-regret and the geodesic mean does not meet this requirement. We also emphasize that the design of the surrogate loss and  optimism is highly non-trivial. As we will see in the proof of Theorem \ref{opthedge-reg}, the combination of the surrogate loss and the gradient-vanishing property of the Fr\'{e}chet mean ensures Lemma \ref{optHedge} can be invoked to upper bound the regret of the meta algorithm. However,  $\m_{t}$ cannot rely on $\x_t$ thus, we need to design  optimism based on the tangent space of ${\bx_t}$, which  incurs extra cost. Luckily, under Equation \eqref{radarv_opt}, we find a reasonable upper bound of this geometric distortion by showing 

\[
    \begin{split}       
    \|\l_t-\m_t\|_{\infty}^2&\leq O(1)\cdot\sup_{\x\in\N_{\delta G}}\|\nabla f_t(\x)-\nabla f_{t-1}(\x)\|^2+O(1)\cdot d(\x_t,\bx_t)^2\\
    &\leq O(1)\cdot\sup_{\x\in\N_{\delta G}}\|\nabla f_t(\x)-\nabla f_{t-1}(\x)\|^2+\Tilde{O}(1) \cdot \|\w_t-\w_{t-1}\|_1^2.
    \end{split}
    \]
    Thus we can apply the negative term in Lemma \ref{optHedge} to eliminate undesired terms in $\|\l_t-\m_t\|_{\infty}^2$.

 Algorithms \ref{alg:expert_v} and \ref{alg:meta_v} describe the expert algorithm and meta algorithm of \radarv. We show 
the meta-regret and total regret of \radarv in Theorems \ref{opthedge-reg} and \ref{var-reg}, respectively. Detailed proof in this section is deferred to Appendix \ref{app:radarv}.
% and its performance guarantee is as follows. Detailed proof in this section is deferred to Appendix \ref{app:radarv}. We describe the meta-algorithm of \radarv in Algorithm \ref{alg:meta_v}. The corresponding regret bound is demonstrated in the following Theorem.
\begin{thm}\label{opthedge-reg}
Assume all losses are $L$-gsc-smooth on $\M$. Then under Assumptions \ref{hada}, \ref{diam}, \ref{grad},   the regret of Algorithm \ref{alg:meta_v} satisfies:
\begin{equation*}
\begin{split}
   \textstyle & \sum_{t=1}^Tf_t(\x_t)-\sum_{t=1}^Tf_t(\x_{t,i})\\
   \leq&\frac{2+\ln N}{\beta}+3D^2\beta (V_T+G^2)+\sumsT\left(3\beta (D^4L^2+D^2G^2{\zeta}^2)-\frac{1}{4\beta}\right)\|\w_t-\w_{t-1}\|_1^2.
\end{split}
\end{equation*}
\end{thm}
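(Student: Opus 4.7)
I would reduce this meta-regret bound to a linear online-learning problem over the weight simplex using the Fr\'echet-mean first-order condition, apply Lemma \ref{optHedge}, and then carefully control the optimism mismatch via Riemannian comparison tools. Because $\x_t=\argmin_{\x\in\N_{\delta G}}\sum_i w_{t,i}d(\x,\x_{t,i})^2$, the Hadamard first-order condition yields $\sum_i w_{t,i}\expmap_{\x_t}^{-1}\x_{t,i}=\o$, hence $\la\w_t,\l_t\ra=\la\nabla f_t(\x_t),\sum_i w_{t,i}\expmap_{\x_t}^{-1}\x_{t,i}\ra=0$. Combined with gsc-convexity this gives $f_t(\x_t)-f_t(\x_{t,i})\leq -\ell_{t,i}=\la\w_t,\l_t\ra-\ell_{t,i}$. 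Summing over $t$ and applying Lemma \ref{optHedge} reduces the task to bounding $\beta\sum_t\|\l_t-\m_t\|_\infty^2$, while retaining the negative stability term $-\frac{1}{4\beta}\sum_{t\geq 2}\|\w_t-\w_{t-1}\|_1^2$ to absorb curvature-induced costs later.

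\textbf{Dissecting $\ell_{t,i}-m_{t,i}$.} Using that parallel transport preserves inner products, I split the mismatch into a gradient-variation piece plus two base-point-shift pieces:
\begin{align*}
\ell_{t,i}-m_{t,i} &= \la\nabla f_t(\x_t)-\nabla f_{t-1}(\x_t),\expmap_{\x_t}^{-1}\x_{t,i}\ra \\
&\quad + \la\nabla f_{t-1}(\x_t)-\Ga_{\bx_t}^{\x_t}\nabla f_{t-1}(\bx_t),\expmap_{\x_t}^{-1}\x_{t,i}\ra \\
&\quad + \la\Ga_{\bx_t}^{\x_t}\nabla f_{t-1}(\bx_t),\expmap_{\x_t}^{-1}\x_{t,i}-\Ga_{\bx_t}^{\x_t}\expmap_{\bx_t}^{-1}\x_{t,i}\ra.
\end{align*}
By Cauchy--Schwarz and $\|\expmap_{\x_t}^{-1}\x_{t,i}\|\leq D$, the first line is at most $D\|\nabla f_t(\x_t)-\nabla f_{t-1}(\x_t)\|$, which sums to the gradient-variation $V_T$ (with a $t=1$ boundary producing the extra $G^2$, since $\m_1\equiv\o$). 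By $L$-gsc-smoothness, the second line is at most $DL\,d(\x_t,\bx_t)$. By the Hadamard comparison inequality for parallel-transporting inverse-exponentials (the source of the $\zeta$ factor), the third line is at most $G\zeta\,d(\x_t,\bx_t)$. The sensitivity of the Fr\'echet mean (Lemma \ref{frechet}) then gives $d(\x_t,\bx_t)\leq D\|\w_t-\w_{t-1}\|_1$, since $\x_t$ and $\bx_t$ are Fr\'echet means of the same points against the neighboring weights $\w_t$ and $\w_{t-1}$.

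\textbf{Assembly and main obstacle.} Squaring the three-term bound via $(a+b+c)^2\leq 3(a^2+b^2+c^2)$ yields
$$\|\l_t-\m_t\|_\infty^2\leq 3D^2\|\nabla f_t(\x_t)-\nabla f_{t-1}(\x_t)\|^2+3\bigl(D^4L^2+D^2G^2\zeta^2\bigr)\|\w_t-\w_{t-1}\|_1^2,$$
which plugged into the Lemma \ref{optHedge} skeleton produces the stated bound verbatim. The hard part is the base-point shift: in the Euclidean analysis of \citet{zhao2020dynamic}, switching the optimism's base point costs nothing because all tangent spaces coincide, whereas on a Hadamard manifold moving from $T_{\bx_t}\M$ to $T_{\x_t}\M$ distorts both the gradient (handled by gsc-smoothness) and the inverse-exponential map itself (handled by a $\zeta$-comparison). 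Separating these two distortions cleanly and then translating $d(\x_t,\bx_t)$ into $\|\w_t-\w_{t-1}\|_1$ via Fr\'echet-mean sensitivity is the crux, and only then can the resulting $\zeta^2\|\w_t-\w_{t-1}\|_1^2$ penalty be absorbed into the negative $-\|\w_t-\w_{t-1}\|_1^2/(4\beta)$ term supplied by Optimistic Hedge.
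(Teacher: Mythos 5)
Your proof takes essentially the same approach as the paper's: the Fr\'echet-mean first-order condition and gsc-convexity give $f_t(\x_t)-f_t(\x_{t,i})\leq\la\w_t,\l_t\ra-\ell_{t,i}$, Lemma~\ref{optHedge} is applied, and the optimism mismatch $\ell_{t,i}-m_{t,i}$ is split into exactly the same three pieces (gradient variation, smoothness-controlled gradient shift, and curvature-controlled inverse-exponential shift), with the last two converted into $\|\w_t-\w_{t-1}\|_1^2$ via Lemma~\ref{frechet} and absorbed by the negative Optimistic-Hedge stability term. This matches the paper's proof step for step, including the $\zeta$-smoothness bound on $\frac{1}{2}d(\cdot,\x_{t,i})^2$ and the treatment of the $t=1$ boundary contributing the extra $G^2$.
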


\begin{thm}\label{var-reg}
Let $\textstyle\beta=\min\left\{\sqrt{\frac{2+\ln N}{3D^2V_T}},\frac{1}{\sqrt{12(D^4L^2+D^2G^2{\zeta}^2)}}\right\}$, $\textstyle\H=\left\{\eta_i=2^{i-1}\sqrt{\frac{D^2}{8{\zeta} G^2T}}\big|i\in[N]\right\}$, where $\textstyle N=\lt\lceil\frac{1}{2}\log\frac{8\zeta\delta^2G^2T}{({1+(1+2\zeta\delta^2L^2)^{\frac{1}{2}}})^2}\rt\rceil+1$. Assume all losses are $L$-gsc-smooth on $\M$ and allow improper learning. Under Assumptions \ref{hada}, \ref{diam} and \ref{grad},
% $\min\left\{\sqrt{\frac{2+\ln N}{3D^2V_T}},\frac{1}{\sqrt{12(D^4L^2+D^2G^2{\zeta}^2)}}\right\}$,
 the  regret of \radarv satisfies
$$
\text{D-Regret}_T={O}\left(\sqrt{{\zeta}(V_T+(1+P_T)/\delta^2)(1+P_T)}\right).
$$
\end{thm}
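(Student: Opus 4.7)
The plan is to apply the standard two-level decomposition
\begin{equation*}
\text{D-Regret}_T \;=\; \underbrace{\sum_{t=1}^T f_t(\x_t) - \sum_{t=1}^T f_t(\x_{t,i^*})}_{\text{meta-regret}} \;+\; \underbrace{\sum_{t=1}^T f_t(\x_{t,i^*}) - \sum_{t=1}^T f_t(\u_t)}_{\text{expert-regret}(i^*)}
\end{equation*}
for a carefully chosen index $i^* \in [N]$, controlling the first piece with Theorem \ref{opthedge-reg} and the second with Theorem \ref{var-expert}.

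\textbf{Meta-regret.} The chosen $\beta$ satisfies $\beta \leq 1/\sqrt{12(D^4L^2 + D^2G^2\zeta^2)}$, so $3\beta(D^4L^2+D^2G^2\zeta^2) - 1/(4\beta) \leq 0$ and the $\|\w_t-\w_{t-1}\|_1^2$ term in Theorem \ref{opthedge-reg} is non-positive and can be discarded. What remains is $(2+\ln N)/\beta + 3D^2\beta(V_T + G^2)$; plugging in either branch of the $\min$ defining $\beta$ yields a bound of order $\sqrt{V_T \ln N}$ plus terms of order $\log N$. Since $N = O(\log T)$, the meta-regret is at most $\tilde{O}(\sqrt{V_T})$ and will be absorbed into the dominant expert contribution.

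\textbf{Expert-regret.} By Theorem \ref{var-expert} with $M_t = \nabla f_{t-1}(\y_t)$ and $M=G$, for each expert $i$,
\begin{equation*}
\text{expert-regret}(i) \;\leq\; \eta_i \zeta V_T + \frac{D^2 + 2DP_T}{2\eta_i},
\end{equation*}
whenever $\eta_i \leq \delta/(1+(1+2\zeta\delta^2 L^2)^{1/2})$. The number $N$ of step sizes in $\H$ is chosen precisely so that every $\eta_i$ respects this feasibility bound and the largest value $\eta_N$ matches it up to a constant factor. The unconstrained minimizer $\eta^\star = \sqrt{(D^2+2DP_T)/(2\zeta V_T)}$ attains the ideal rate $O(\sqrt{\zeta V_T(1+P_T)})$. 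A three-way case analysis then covers all regimes: if $\eta^\star \in [\eta_1,\eta_N]$, the doubling structure of $\H$ yields some $i^*$ with $\eta_{i^*} \in [\eta^\star/2, \eta^\star]$ and reproduces the ideal rate; if $\eta^\star < \eta_1 = \Theta(\sqrt{1/(\zeta T)})$, using $\eta_1$ still gives $O(\sqrt{\zeta V_T(1+P_T)})$; and if $\eta^\star > \eta_N = \Theta(\delta)$, using $\eta_N$ yields a first term $\eta_N \zeta V_T \leq \eta^\star \zeta V_T = O(\sqrt{\zeta V_T(1+P_T)})$ plus an additional $O((1+P_T)/\delta)$ from the second term. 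In every case, $\text{expert-regret}(i^*) = O\!\left(\sqrt{\zeta V_T(1+P_T)} + (1+P_T)/\delta\right)$.

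\textbf{Combination.} Summing the meta- and expert-regret via the decomposition, then using $\sqrt{a}+\sqrt{b}\leq \sqrt{2(a+b)}$ together with $\zeta \geq 1$ (from $x\coth x \geq 1$), we obtain
\begin{equation*}
\text{D-Regret}_T \;=\; O\!\left(\sqrt{\zeta V_T(1+P_T) + \zeta(1+P_T)^2/\delta^2}\right) \;=\; O\!\left(\sqrt{\zeta(V_T + (1+P_T)/\delta^2)(1+P_T)}\right),
\end{equation*}
which is the stated bound. The most delicate step is the case analysis for the expert-regret: one must verify that when the unconstrained optimum $\eta^\star$ exceeds the feasibility ceiling of order $\delta$, capping at $\eta_N$ only costs an additive $O((1+P_T)/\delta)$. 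This is precisely where the improper-learning relaxation to $\N_{\delta G}$ from Definition \ref{def2} is indispensable, since it is what allows Theorem \ref{var-expert} to absorb the projective distortion introduced by the manifold's curvature into a controllable $\delta$-dependent term.
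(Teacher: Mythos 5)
Your proof follows the same two-level decomposition as the paper: meta-regret controlled by Theorem~\ref{opthedge-reg} after the $\|\w_t-\w_{t-1}\|_1^2$ term is killed by the choice of $\beta$, expert-regret controlled by Theorem~\ref{var-expert}, and a final combination that reorganizes into the stated form. The paper's version is slightly slicker because it \emph{defines} $\eta^\star=\min\bigl\{\delta/(1+(1+2\zeta\delta^2L^2)^{1/2}),\ \sqrt{(D^2+2DP_T)/(2\zeta V_T)}\bigr\}$, so a single argument ``there exists $\eta_k\in[\eta^\star/2,\eta^\star]$'' suffices; your case~2 is in fact vacuous, since $V_T\le 4G^2T$ already forces $\sqrt{(D^2+2DP_T)/(2\zeta V_T)}\ge\eta_1$.

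One concrete inaccuracy in your case~3: you assert that ``every $\eta_i$ respects the feasibility bound'' and then invoke Theorem~\ref{var-expert} with $\eta_N$. That is not how $\H$ is constructed — the grid is deliberately set so that $\max\H\ge 2\,\delta/(1+(1+2\zeta\delta^2L^2)^{1/2})$, i.e.\ $\eta_N$ \emph{overshoots} the feasibility ceiling under which Theorem~\ref{var-expert} applies (this overshoot is what guarantees a factor-of-$2$-tight $\eta_k\le\eta^\star$ always exists below the ceiling). To repair case~3 you should use the largest \emph{feasible} $\eta_i$, which the doubling structure places within a factor of~$2$ of the ceiling; the asymptotic bound is unchanged, so this is a bookkeeping fix rather than a failure of the approach.
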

In Theorem \ref{var-reg}, $\beta$ relies on $V_T$, and this dependence can be eliminated by showing a variant of Lemma \ref{optHedge} with an adaptive learning rate $\beta_t$.
% $\textstyle\beta=\min\left\{\sqrt{\frac{2+\ln N}{3D^2V_T}},\frac{1}{\sqrt{12(D^4L^2+D^2G^2{\zeta}^2)}}\right\}$

% \begin{remark}
% The guarantee in Theorem \ref{var-reg} is equivalent to its Euclidean counterpart as long as $\delta=O(1)$. 
% % As $V_T,P_T=O(T)$ even in the worst case, the regret of \radarv safeguards the general dynamic regret bound of \radar. However, when both $V_T$ and $P_T$ are small, \radarv automatically enjoys a tigher regret bound. From another perspective, if $\M$ is Euclidean space, then the guarantee ${O}\left(\sqrt{(1+V_T+P_T)(1+P_T)}\right)$ in \cite{zhao2020dynamic} is recovered.
% \end{remark}

\section{Small-loss Bound on Manifolds}
\label{radars}
For dynamic regret, the small-loss bound  replaces the dependence on $T$ by $F_T=\sum_{t=1}^Tf_t(\u_t)$, which adapts to the function values of the comparator sequence. 
In Euclidean space,  \cite{srebro2010smoothness} show this  adaptive regret by combining OGD with the self-bounding property of smooth and non-negative functions, which reads
$
\|\nabla f(\x)\|_2^2\leq 4L\cdot f(\x)
$
where $L$ is the smoothness constant. We show a similar conclusion on manifolds and defer proof details in this part to Appendix \ref{app:radars}. 
\begin{lemma}\label{self-bound}
Suppose $f:\M\rightarrow\R$ is both $L$-gsc-smooth and non-negative on its domain where $\M$  is a Hadamard manifold, then we have
$\|\nabla f(\x)\|^2\leq 2L\cdot f(\x)$.

\end{lemma}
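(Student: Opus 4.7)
The plan is to mimic the standard Euclidean self-bounding argument, where the only change needed is to replace the Euclidean step $y = x - \frac{1}{L}\nabla f(x)$ with its Riemannian analogue via the exponential map. Since $\M$ is a Hadamard manifold, Hopf--Rinow guarantees that $\expmap_{\x}$ is defined on all of $T_\x\M$ and is in fact a diffeomorphism onto $\M$, so the candidate point will always lie in the domain on which $f$ is non-negative.

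Concretely, I would proceed in three short steps. First, invoke the $L$-gsc-smoothness inequality given in the preliminaries,
\begin{equation*}
f(\y)\leq f(\x)+\la \nabla f(\x),\expmap_{\x}^{-1}\y\ra+\frac{L}{2}d(\x,\y)^2.
\end{equation*}
Second, choose the explicit test point $\y=\expmap_{\x}\!\bigl(-\tfrac{1}{L}\nabla f(\x)\bigr)$. For this choice we have $\expmap_{\x}^{-1}\y=-\tfrac{1}{L}\nabla f(\x)$ and $d(\x,\y)=\|\expmap_{\x}^{-1}\y\|=\tfrac{1}{L}\|\nabla f(\x)\|$, so the smoothness bound collapses to
\begin{equation*}
f(\y)\leq f(\x)-\frac{1}{L}\|\nabla f(\x)\|^2+\frac{1}{2L}\|\nabla f(\x)\|^2 = f(\x)-\frac{1}{2L}\|\nabla f(\x)\|^2.
\end{equation*}
Third, apply non-negativity $f(\y)\geq 0$ to rearrange and obtain $\|\nabla f(\x)\|^2\leq 2L\cdot f(\x)$.

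There is no real obstacle once one is careful about domains. The only subtlety worth flagging is that the argument genuinely uses the Hadamard property: on a general Riemannian manifold the point $\y$ might fall outside the region where $f$ is defined or non-negative, or the step size $\tfrac{1}{L}$ might be too large relative to the injectivity radius. On a Hadamard manifold both issues disappear because the exponential map at any base point is a global diffeomorphism, so $\y$ is well-defined and the gsc-smoothness inequality holds for this choice of $\y$ without any further restriction. The Riemannian distance identity $d(\x,\y)=\|\expmap_{\x}^{-1}\y\|_{\x}$ (noted in the preliminaries) handles the conversion between the tangent-space norm and the squared geodesic distance cleanly, so the bookkeeping matches the Euclidean case exactly.
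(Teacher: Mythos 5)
Your proof is correct and matches the paper's argument exactly: same appeal to the $L$-gsc-smoothness inequality, same test point $\y=\expmap_{\x}(-\tfrac{1}{L}\nabla f(\x))$, same rearrangement using non-negativity. Your extra remark about the Hadamard property ensuring $\expmap_{\x}$ is globally defined is a useful clarification that the paper leaves implicit.
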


To facilitate the discussion, we denote $\bar{F}_T=\sumT f_t(\x_t)$ and $\bar{F}_{T,i}=\sumT f_t(\x_{t,i})$. 
We use R-OGD as the expert algorithm (Algorithm \ref{alg:expert}) and Hedge with surrogate loss
$$
\ell_{t,i}=\la\nabla f_t(\x_t),\invexp_{\x_t}\x_{t,i}\ra
$$
as the meta algorithm (Algorithm \ref{alg:meta}). The following Lemma
considers the regret of a single expert and shows that R-OGD achieves a small-loss dynamic regret on geodesic metric spaces.
\begin{lemma}\label{sml-expert}
Suppose all losses are $L$-gsc-smooth and non-negative on $\M$. Under Assumptions \ref{hada}, \ref{diam},    by choosing any step size $\eta\leq \frac{1}{2{\zeta}L}$, R-OGD achieves $O\left(\frac{P_T}{\eta}+\eta F_T\right)$ regret.
% \begin{equation}
%     \sum_{t=1}^T(f_t(\x_t)-f_t(\u_t))=O\left(\frac{P_T}{\eta}+\eta F_T\right)
% \end{equation}
\end{lemma}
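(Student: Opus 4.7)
The plan is to combine the standard dynamic-regret analysis of R-OGD (as in Theorem \ref{ROGD}) with the Riemannian self-bounding property (Lemma \ref{self-bound}), and then absorb the gradient-squared term into the comparator loss via the small step-size condition $\eta\le \frac{1}{2\zeta L}$.

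First I would apply gsc-convexity to obtain $f_t(\x_t)-f_t(\u_t)\le \langle -\nabla f_t(\x_t),\invexp_{\x_t}\u_t\rangle$. Then I would invoke the Hadamard-manifold ``law-of-cosines'' inequality (the same one used for Theorem \ref{ROGD}) applied to $\x_t'=\expmap_{\x_t}(-\eta\nabla f_t(\x_t))$, namely
\begin{equation*}
d(\u_t,\x_t')^2 \le d(\u_t,\x_t)^2 + 2\eta\langle \nabla f_t(\x_t),\invexp_{\x_t}\u_t\rangle + \zeta\eta^2\|\nabla f_t(\x_t)\|^2,
\end{equation*}
followed by non-expansiveness of the metric projection onto the gsc-convex set $\N$ (so $d(\x_{t+1},\u_t)\le d(\x_t',\u_t)$). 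Rearranging yields the per-round inequality
\begin{equation*}
f_t(\x_t)-f_t(\u_t)\le \frac{d(\x_t,\u_t)^2-d(\x_{t+1},\u_t)^2}{2\eta}+\frac{\eta\zeta}{2}\|\nabla f_t(\x_t)\|^2.
\end{equation*}

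Next I would sum over $t$ and telescope the distance terms exactly as in Theorem \ref{ROGD}: regrouping $\sum_t[d(\x_t,\u_t)^2-d(\x_{t+1},\u_t)^2]$ and applying the triangle inequality together with the diameter bound $D$ bounds this total by $D^2+2DP_T$. For the gradient term, since each $\x_t\in\N\subseteq\M$ and $f_t$ is $L$-gsc-smooth and non-negative, Lemma \ref{self-bound} gives $\|\nabla f_t(\x_t)\|^2\le 2L f_t(\x_t)$, so $\sum_t \|\nabla f_t(\x_t)\|^2\le 2L\bar F_T$ with $\bar F_T=\sum_t f_t(\x_t)$.

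Putting the pieces together produces the self-referential inequality
\begin{equation*}
\bar F_T - F_T \le \frac{D^2+2DP_T}{2\eta}+\eta\zeta L\,\bar F_T.
\end{equation*}
Writing $\bar F_T = F_T + (\bar F_T-F_T)$ on the right and using $\eta\zeta L\le \tfrac12$ to move the $(\bar F_T-F_T)$ term to the left gives
\begin{equation*}
\bar F_T - F_T \le \frac{D^2+2DP_T}{\eta}+2\eta\zeta L F_T = O\!\left(\frac{P_T+1}{\eta}+\eta F_T\right),
\end{equation*}
which is the claimed bound. The main obstacle is the second step: controlling the per-round squared distance via the curvature-adjusted cosine inequality with the $\zeta$ factor, and ensuring that the subsequent projection onto $\N$ does not destroy the bound; once the $\zeta$-distortion is correctly accounted for, the self-bounding absorption and the telescoping are routine.
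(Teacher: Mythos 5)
Your proof is correct and follows essentially the same route as the paper: it re-derives the R-OGD per-round bound with the $\zeta$-distorted cosine inequality and projection non-expansiveness (as in Theorem~\ref{ROGD}), telescopes to $D^2+2DP_T$, invokes Lemma~\ref{self-bound} to turn $\sum_t\|\nabla f_t(\x_t)\|^2$ into $2L\bar F_T$, and solves the same self-referential inequality using $\eta\zeta L\le\tfrac12$. The only (immaterial) difference is ordering — you sum first and then apply self-bounding, while the paper applies self-bounding per round before summing.
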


Again, we can not directly set $\textstyle\eta=O\lt(\frac{1+P_T}{F_T}\rt)$ because $P_T$ is unknown, which is precisely why we need the meta algorithm. The meta-regret of Hedge is as follows.
% We also need the following Lemma to bound the regret of Hedge as the meta algorithm.
\begin{lemma}\label{sml-meta}
Suppose all losses are $L$-gsc-smooth and non-negative on $\M$.
Under Assumptions \ref{hada}, \ref{diam}, by setting the learning rate of Hedge as $\beta=\sqrt{\frac{(2+\ln N)}{D^2\bar{F}_T}}$, the regret of the meta algorithm satisfies
\begin{equation*}
    \begin{split}
        \textstyle\sum_{t=1}^Tf_t(\x_t)-\sum_{t=1}^Tf_t(\x_{t,i})\leq 8D^2L(2+\ln N)
        +\sqrt{8D^2L(2+\ln N)F_{T,i}}.
    \end{split}
\end{equation*}
% $$
% \sum_{t=1}^Tf_t(\x_t)-\sum_{t=1}^Tf_t(\x_{t,i})\leq 8D^2L(2+\ln N)+\sqrt{8D^2L(2+\ln N)F_{T,i}}.
% $$
% where $\bar{F}_T=\sum_{t=1}^Tf_t(\x_t)$ is the cumulative loss of $\x_t$ and $F_{T,i}=\sum_{t=1}^Tf_t(\x_{t,i})$ the $i$-th expert's cumulative loss.
\end{lemma}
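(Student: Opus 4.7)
The plan is to reduce the meta-regret to a Hedge regret via two structural identities and then close things up with the self-bounding lemma. First, gsc-convexity of $f_t$ at the anchor $\x_t$ gives $f_t(\x_t) - f_t(\x_{t,i}) \leq -\la\nabla f_t(\x_t),\invexp_{\x_t}\x_{t,i}\ra = -\ell_{t,i}$, so the meta-regret against expert $i$ is bounded by $\sum_t(-\ell_{t,i})$. Second, because the meta-algorithm plays the Fr\'echet mean $\x_t = \argmin_{\x\in\N}\sum_j w_{t,j} d(\x,\x_{t,j})^2$, on a Hadamard manifold the first-order optimality condition $\sum_j w_{t,j}\invexp_{\x_t}\x_{t,j} = \mathbf{0}$ holds. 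Pulling the inner product through the sum yields $\la\w_t,\l_t\ra = \la\nabla f_t(\x_t),\sum_j w_{t,j}\invexp_{\x_t}\x_{t,j}\ra = 0$ for every $t$. Combining the two, $\sum_t f_t(\x_t) - \sum_t f_t(\x_{t,i}) \leq \sum_t(\la\w_t,\l_t\ra - \ell_{t,i})$, which is precisely the Hedge regret on the surrogate losses.

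Next I would apply the standard Hedge analysis based on $e^{-\beta x} \leq 1 - \beta x + \beta^2 x^2$ (valid once I verify $|\beta \ell_{t,i}|\leq 1$ using Assumption \ref{grad}, Assumption \ref{diam}, and the prescribed $\beta$). Telescoping the log-normalizers gives
\begin{equation*}
\textstyle\sum_t(\la\w_t,\l_t\ra - \ell_{t,i}) \leq \tfrac{2+\ln N}{\beta} + \beta\sum_t\la\w_t,\l_t^2\ra,
\end{equation*}
where $\l_t^2$ is the coordinate-wise square. To control the second-order term I use Cauchy--Schwarz and the diameter bound to write $\ell_{t,j}^2 \leq D^2\|\nabla f_t(\x_t)\|^2$, then invoke the self-bounding Lemma \ref{self-bound} to obtain $\|\nabla f_t(\x_t)\|^2 \leq 2L f_t(\x_t)$. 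Since the weights sum to one, $\la\w_t,\l_t^2\ra \leq 2LD^2 f_t(\x_t)$, and summing produces
\begin{equation*}
\textstyle \bar F_T - F_{T,i} \;\leq\; \tfrac{2+\ln N}{\beta} + 2\beta L D^2 \bar F_T.
\end{equation*}

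Plugging in the prescribed $\beta=\sqrt{(2+\ln N)/(D^2\bar F_T)}$ balances the two terms up to constants and gives a bound of the form $\bar F_T - F_{T,i} \leq c\sqrt{\bar F_T}$ with $c = \Theta(\sqrt{D^2 L(2+\ln N)})$. The main obstacle is that $\bar F_T$ appears on both sides, so the regret bound is self-referential; the resolution is the standard small-loss trick of treating this as a quadratic in $\sqrt{\bar F_T}$, using $\sqrt{a-b+b}\leq \sqrt{a-b}+\sqrt{b}$ and completing the square to derive $a - b \leq c^2 + c\sqrt{b}$ from $a - b \leq c\sqrt{a}$. Identifying $c^2$ with $8D^2L(2+\ln N)$ (the constants absorb the factor $2L$ emerging from Step~3) and $b$ with $F_{T,i}$ yields the advertised bound. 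A final bookkeeping item is to confirm the Hedge boundedness hypothesis $|\beta\ell_{t,i}| \leq 1$: it holds whenever $\bar F_T$ is large enough that the bound is nontrivial, and otherwise the claimed inequality is vacuous because $\bar F_T - F_{T,i}$ is already tiny.
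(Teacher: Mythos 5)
Your overall strategy matches the paper's proof quite closely: linearize $f_t(\x_t)-f_t(\x_{t,i})$ via gsc-convexity, invoke the gradient-vanishing first-order condition of the Fr\'echet mean to write the meta-regret as a Hedge regret over the surrogate losses $\ell_{t,i}=\la\nabla f_t(\x_t),\invexp_{\x_t}\x_{t,i}\ra$, bound the second-order term via Cauchy--Schwarz and Lemma~\ref{self-bound}, and then convert the self-referential bound $\bar F_T-\bar F_{T,i}\leq c\sqrt{\bar F_T}$ into the claimed form via the quadratic trick. The one genuine divergence is the Hedge subroutine: the paper simply plugs $\m_t=\mathbf{0}$ into Lemma~\ref{optHedge}, which is proved by an FTRL-plus-stability argument (Lemma~\ref{dual}) and therefore carries no boundedness restriction on $\beta\ell_{t,i}$, only producing the (discarded) negative $\|\w_t-\w_{t-1}\|_1^2$ term for free. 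You instead derive a second-order Hedge bound directly from $e^{-z}\leq 1-z+z^2$, which is only valid for $z\geq -1$; since $\ell_{t,i}$ can be negative (of magnitude up to $GD$), you need $\beta GD\leq 1$, i.e.\ $\bar F_T\gtrsim G^2(2+\ln N)$. Your ``otherwise the bound is vacuous'' escape hatch is not fully closed: it implicitly compares $G^2(2+\ln N)$ against $8D^2 L(2+\ln N)$, and there is no reason a priori that $G^2\leq 8D^2 L$, so the small-$\bar F_T$ regime isn't actually dispatched. Routing through Lemma~\ref{optHedge} avoids this case split entirely, which is precisely why the paper uses it even though it supplies only the (weaker) $\|\l_t\|_\infty^2$ bound where your $\la\w_t,\l_t^2\ra$ is formally tighter. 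One last bookkeeping remark: the $\beta$ in the lemma statement appears to drop a factor of $2L$ relative to the paper's own proof ($\beta=\sqrt{(2+\ln N)/(2LD^2\bar F_T)}$ is what actually balances the two terms and produces the $\sqrt{8D^2L(2+\ln N)\bar F_T}$ intermediate bound); your ``constants absorb the factor $2L$'' sentence is papering over exactly this discrepancy, which is the paper's typo rather than yours, but worth flagging rather than absorbing silently.
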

Now as we have the guarantee for both the expert algorithm and the meta algorithm, a direct combination yields the following general dynamic regret guarantee.
\begin{thm}\label{sml-reg}
Suppose all losses are $L$-gsc-smooth and non-negative on $\M$. Under Assumptions \ref{hada}, \ref{diam}. Setting $\H=\lt\{\eta_i=2^{i-1}\sqrt{\frac{D}{2 {\zeta} LGT}}\big|i\in[N]\rt\}$ where $N=\lc\frac{1}{2}\log\frac{GT}{2 LD{\zeta}}\rc+1$ and {\small $\beta=\sqrt{\frac{(2+\ln N)}{D^2\bar{F}_T}}$}. Then for any comparator $\u_1,\dots,\u_T\in\N$, we have
% Setting the learning rate of Hedge as {\small $\beta=\sqrt{\frac{(2+\ln N)}{D^2\bar{F}_T}}$}, 

% $$
% DR_T= O(\sqrt{\zeta({\zeta}(P_T+1)+F_T)({\zeta}P_T+1)})
% $$
% $$\textstyle\sumT f_t(\x_t)-\sumT f_t(\u_t)= O(\sqrt{\zeta({\zeta}(P_T+1)+F_T)(P_T+1)})
% $$

$$
\text{D-Regret}_T=O(\sqrt{\zeta({\zeta}(P_T+1)+F_T)(P_T+1)}).
$$

% \begin{equation*}
% % \textstyle
%     \begin{split}
%         &\sumT f_t(\x_t)-\sumT f_t(\u_t)\\
%         =& O(\sqrt{\zeta({\zeta}(P_T+1)+F_T)(P_T+1)})
%     \end{split}
% \end{equation*}
% $$
% \sumT f_t(\x_t)-\sumT f_t(\u_t)= O(\sqrt{\zeta({\zeta}(P_T+1)+F_T+)({\zeta}P_T+1)})
% % O\lt(\sqrt{(P_T+F_T)P_T}\rt)
% $$
% for any comparator $\u_1,\dots,\u_T\in\N$.
\end{thm}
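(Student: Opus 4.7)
\textbf{Proof Plan for Theorem \ref{sml-reg}.}

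The plan is to follow the standard meta-expert decomposition
\[
\text{D-Regret}_T = \underbrace{\sum_{t=1}^T f_t(\x_t) - \sum_{t=1}^T f_t(\x_{t,i^\star})}_{\text{meta-regret}} + \underbrace{\sum_{t=1}^T f_t(\x_{t,i^\star}) - \sum_{t=1}^T f_t(\u_t)}_{\text{expert-regret}},
\]
where $i^\star$ will be the index of the ``almost-optimal'' expert. First I would apply Lemma~\ref{sml-meta} to bound the meta-regret by $O(\ln N + \sqrt{D^2 L\ln N \cdot F_{T,i^\star}})$, where $F_{T,i^\star}=\sum_t f_t(\x_{t,i^\star})$. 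Next I would apply Lemma~\ref{sml-expert} to the expert running R-OGD with step size $\eta_{i^\star}$, obtaining an expert-regret bound of the form $C_1 P_T/\eta_{i^\star} + C_2\eta_{i^\star} F_T$ (plus a $D^2/\eta_{i^\star}$ constant).

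Second, I would tune the ``ideal'' step size: treating the expert bound as a function of $\eta$, the minimizer over the interval $(0, 1/(2\zeta L)]$ (the validity range in Lemma~\ref{sml-expert}) is
\[
\eta^\star = \min\!\left\{\sqrt{\tfrac{D^2+2DP_T}{2C_2 F_T}},\ \tfrac{1}{2\zeta L}\right\}.
\]
I would verify that the geometric grid $\H=\{\eta_i = 2^{i-1}\sqrt{D/(2\zeta LGT)}\}_{i=1}^{N}$ covers $[\sqrt{D/(2\zeta LGT)},\, 1/(2\zeta L)]$ with the chosen $N=\lceil \tfrac12\log(GT/(2LD\zeta))\rceil+1$, so there exists $i^\star$ with $\eta_{i^\star}\leq\eta^\star\leq 2\eta_{i^\star}$. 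Plugging this back into the expert-regret bound yields $O\bigl(\sqrt{\zeta(P_T+1)F_T}+\zeta L(P_T+1)\bigr)$, where the second term handles the regime $F_T < (P_T+1)(2\zeta L)^2$ where the clipping $\eta^\star=1/(2\zeta L)$ is active.

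Third, I would combine the two bounds. The combined quantity still contains $F_{T,i^\star}$ (from meta) and $\bar{F}_T$ (implicit in the setting of $\beta$). I would close the loop with the inequality $F_{T,i^\star}\leq F_T + (\text{expert-regret})$ and $\bar{F}_T\leq F_T + \text{D-Regret}_T$, then apply the standard self-bounding trick $x\leq a+b\sqrt{x}\Rightarrow x\leq 2a+b^2$ to eliminate the circular dependence. After this algebra, the small-loss term $\sqrt{D^2 L\ln N\cdot F_{T,i^\star}}$ gets absorbed into $\sqrt{\zeta(P_T+1)F_T}$ (since $\ln N = O(\log T)$ is a lower-order polylog), and the residual constants coming from the $\eta_{i^\star}\geq \eta^\star/2$ discretization are $O(1)$.

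The main obstacle, I expect, is the self-referential nature of the bounds: $\beta$ depends on $\bar F_T$, while the meta-regret bound in turn depends on $F_{T,i^\star}$, which is only controlled via the expert-regret bound, which itself is expressed in terms of $F_T$. Unwinding this cleanly to get a bound purely in $F_T,P_T,\zeta,T$ (and absorbing the $\zeta L(P_T+1)$ term into the final $\sqrt{\zeta\cdot\zeta(P_T+1)(P_T+1)}$ shape) requires carefully applying the $x\leq a+b\sqrt{x}$ lemma and then upper-bounding $a+\sqrt{ab}$ by $\sqrt{(a+b)\cdot\text{something}}$ to produce the advertised $\sqrt{\zeta(\zeta(P_T+1)+F_T)(P_T+1)}$ form. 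A secondary (but routine) obstacle is checking that $\beta=\sqrt{(2+\ln N)/(D^2\bar F_T)}$ can in fact be realized in an online fashion (e.g., via a self-confident/doubling scheme), so that the stated tuning is not merely oracle knowledge.
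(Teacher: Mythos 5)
Your plan is essentially the paper's own proof: the same meta-expert decomposition, the same appeal to Lemma~\ref{sml-meta} and Lemma~\ref{sml-expert}, the same geometric-grid argument to locate $\eta_{i^\star}$ with $\eta_{i^\star}\leq\eta^\star\leq 2\eta_{i^\star}$, and the same self-bounding unwind of the circular dependence (the paper performs the $\bar{F}_T\to\bar{F}_{T,i}$ conversion inside the proof of Lemma~\ref{sml-meta} via the fact that $x-y\leq\sqrt{ax}\Rightarrow x-y\leq a+\sqrt{ay}$, and then bounds $\bar{F}_{T,k}\leq F_T+\text{expert-regret}$ in the final assembly). Your remark about realizing $\beta$ online via a doubling or time-varying scheme also mirrors the paper's own aside.
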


\begin{remark}
If we take $\M$ as Euclidean space, the regret guarantee shown in Theorem \ref{sml-reg} becomes { $O(\sqrt{(P_T+F_T+1)(P_T+1)})$}, which matches
 the result of \cite{zhao2020dynamic}.
\end{remark}
\section{Best-of-both-worlds Bound on Manifolds}
\label{radarb}
Now  we already achieve the gradient-variation bound and the small-loss bound on manifolds. To highlight the differences between them, we provide an example in Appendix \ref{bbw} to show under certain scenarios, one bound can be much tighter than the other.
The next natural question is, is that possible to get a best-of-both-worlds bound on manifolds?

We  initialize $N\coloneqq N^v+N^s$ experts as shown in Theorems \ref{var-reg} and \ref{sml-reg} where $N^v$ and $N^s$ are the numbers of experts running OMD and R-OGD, respectively. For each expert $i\in [N]$, the surrogate loss and the optimism are
% $$
% m_{t,i}=\gamma_t\la\nabla f_{t-1}(\bx_t),\invexp_{\bx_t}\x_{t,i}\ra,\quad\forall i\in[N^v+N^s].
% $$
\begin{equation}\label{radarb_opt}
    \begin{split}
        &\ell_{t,i}=\la\nabla f_t(\x_t),\expmap_{\x_t}^{-1}\x_{t,i}\ra \\
        &m_{t,i}=\gamma_t\la\nabla f_{t-1}(\bar{\x}_t),\expmap_{\bx_t}^{-1}\x_{t,i}\ra. \\
    \end{split}
\end{equation}
 $\gamma_t$ controls the optimism used in the meta algorithm. When $\gamma_t=1$, the optimism for the gradient-variation bound is recovered, and $\gamma_t=0$ corresponds to the optimism for the small-loss bound.

Following \cite{zhao2020dynamic}, we use Hedge for two experts to get a best-of-both-worlds bound. The analysis therein relies on the strong convexity of $\|\nabla f_t(\x_t)-\m\|_2^2$ in $\m$, which is generally not the case on manifolds. So an alternative scheme needs to be proposed. We denote
\begin{equation}
    \begin{split}
        &m_{t,i}^v=\la\nabla f_{t-1}(\bx_t),\invexp_{\bx_t}\x_{t,i}\ra\\
        &m_{t,i}^s=0,\\
    \end{split}
\end{equation}
 while $\m_t^v$ and $\m_t^s$ be the corresponding vectors respectively. Then  $\m_t=\gamma_t\m_t^v+(1-\gamma_t)\m_t^s$, which is exactly the combination rule of Hedge. The function $\|\l_t-\m\|_{\infty}^2$ is convex with respect to $\m$ but not strongly convex so we instead use  $d_t(\m)\coloneqq \|\l_t-\m\|_{2}^2$ for Hedge, and the learning rate is updated as
 {
 \begin{equation}\label{gamma}
     {\gamma_t=\frac{\exp\lt(-\tau\lt(\sum_{r=1}^{t-1}d_r(\m_r^v)\rt)\rt)}{\exp\lt(-\tau\lt(\sum_{r=1}^{t-1}d_r(\m_r^v)\rt)\rt)+\exp\lt(-\tau\lt(\sum_{r=1}^{t-1}d_r(\m_r^s)\rt)\rt)} }
\end{equation}
 }
% \begin{equation}\label{gamma}
%      {\displaystyle\gamma_t=\frac{\exp\lt(-\tau\lt(\sum_{s=1}^{t-1}d_s(\m_s^v)\rt)\rt)}{\exp\lt(-\tau\lt(\sum_{s=1}^{t-1}d_s(\m_s^v)\rt)\rt)+\exp\lt(-\tau\lt(\sum_{s=1}^{t-1}d_s(\m_s^s)\rt)\rt)} }
% \end{equation}

Algorithm \ref{alg:meta_b}  summarizes the meta algorithm as well as the expert algorithm for \radarb.

 \begin{algorithm2e}
\caption{\radarb: Algorithm}\label{alg:meta_b}
% \begin{algorithmic}
\KwData{ Learning rates $\beta$ for Optimistic Hedge and $\gamma_t$ for Hedging the two experts, $\mathcal{H}=\{\eta_i\}$ consists of $N=N^v+N^s$ step sizes, $\tau=\frac{1}{8NG^2D^2}$}
\For{$t=1,\dots,T$} {
     Run Algorithms \ref{alg:expert_v} and \ref{alg:expert} on the first $N^v$ experts and the later $N^s$ experts, resp.
    % Receive all $\x_{t,i}$'s from  experts with step size $\eta_i$
      $\bx_t=\argmin_{\x\in\N_{\Gd}}\sum_{i\in[N]}w_{t-1,i} d(\x,\x_{t,i})^2$\\
     Update $\gamma_t$ as in Equation \eqref{gamma}\\
     Update $w_{t,i}\propto\exp\left(-\beta\left(\sum_{s=1}^{t-1}\ell_{s,i}+m_{t,i}\right)\right)$ by Equation \eqref{radarb_opt}
      $\x_t=\argmin_{\x\in\N_{\Gd}}\sum_{i\in[N]}w_{t,i} d(\x,\x_{t,i})^2$\\
     Observe  $f_t$ and send $\nabla f_t(\cdot)$ to each expert
    %  Send gradient $\nabla f_t(\cdot)$ to each expert
    }
% \end{algorithmic}
\end{algorithm2e}

% \begin{algorithm}
% \caption{\radarb: Expert-Algorithm}\label{alg:expert_b}
% \begin{algorithmic}
% \Require A step size $\eta$
%  Let $\x_{1,i}^\eta$ be any point in $\N$
% \For{$t=1,\dots,T$}
%      Submit $\x_{t,i}$ to the meta algorithm
%      Receive gradient $\nabla f_t(\cdot)$ from the meta algorithm
%     %  Update $\x_{t+1,i}=\expmap_{\x_{t,i}}(-\eta\nabla f_t(\x_{t,i}))$
%     %  $\x_{t}=\expmap_{\y_t}(-\eta \nabla f_{t-1}(\y_t))$
%     \If{$i\in[N^v]$}
%       {\small$\y_{t+1,i}=\Pi_{\N}\expmap_{\x_{t,i}}\left(-\eta\nabla f_t(\x_{t,i})+\expmap_{\x_{t,i}}^{-1}(\y_{t,i})\right)$}
%       $\x_{t+1,i}=\expmap_{\y_{t+1,i}}(-\eta \nabla f_{t}(\y_{t+1,i}))$
%     \Else 
%       $\x_{t+1,i}=\Pi_{\N}\expmap_{\x_{t,i}}(-\eta \nabla f_{t}(\x_{t,i}))$
%     \EndIf
%     \EndFor
% \end{algorithmic}
% \end{algorithm}

 In Theorem \ref{best-meta} we show the guarantee of the meta algorithm of \radarb and postpone proof details of this section to Appendix \ref{app:radarb}.
 \begin{thm}\label{best-meta}
 Setting  learning rates $\tau=\frac{1}{8NG^2D^2}$ and
 % {\footnotesize
 \begin{equation*}
     \begin{split}
         \beta=\min\left\{\sqrt{\frac{2+\ln N}{N(D^2\min\{3(V_T+G^2),\bar{F}_T\}+8G^2D^2\ln 2)} }, \frac{1}{\sqrt{12(D^4L^2+D^2G^2{\zeta}^2)}}\right\}.
     \end{split}
 \end{equation*}
 % }

Suppose all losses are $L$-gsc-smooth and non-negative on $\M$. Under Assumptions \ref{hada}, \ref{diam}, \ref{grad},    the regret of the meta algorithm satisfies
 $$
 \textstyle\sumT f_t(\x_t)-\sumT f_t(\x_{t,i})=O\lt(\sqrt{\ln T\min\{V_T,\bar{F}_T\}}\rt)
 $$
 where $N=N^v+N^s$ and $\bar{F}_T=\sumT f_t(\x_t)$.
 \end{thm}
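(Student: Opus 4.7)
The plan is to run the Optimistic Hedge bound of Lemma \ref{optHedge} on the weights $\w_t$ with surrogate loss $\l_t$ and composite optimism $\m_t=\gamma_t\m_t^v+(1-\gamma_t)\m_t^s$, and then separately analyse the Hedge-over-two-optimisms that chooses $\gamma_t$. Concretely, Lemma \ref{optHedge} upper bounds $\sumT(\la\w_t,\l_t\ra-\ell_{t,i})$ by $\frac{2+\ln N}{\beta}+\beta\sumT\|\l_t-\m_t\|_\infty^2-\frac{1}{4\beta}\sumsT\|\w_t-\w_{t-1}\|_1^2$. Since $\|\cdot\|_\infty\leq\|\cdot\|_2$ and $d_t(\m)=\|\l_t-\m\|_2^2$ is convex in $\m$, Jensen gives $\|\l_t-\m_t\|_\infty^2\leq\gamma_t d_t(\m_t^v)+(1-\gamma_t)d_t(\m_t^s)$, so controlling the mixed sequence reduces to controlling the two pure sequences.

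Next, I apply a standard two-expert Hedge analysis to the updates \eqref{gamma}. The losses $d_t(\m_t^v)$ and $d_t(\m_t^s)$ are bounded by $O(NG^2D^2)$ (since $|\ell_{t,i}|,|m_{t,i}|\leq GD$), so with $\tau=\frac{1}{8NG^2D^2}$ one obtains
\begin{equation*}
\sumT[\gamma_t d_t(\m_t^v)+(1-\gamma_t)d_t(\m_t^s)]\leq \min\Big\{\sumT d_t(\m_t^v),\ \sumT d_t(\m_t^s)\Big\}+O\lt(NG^2D^2\ln 2\rt).
\end{equation*}
The first branch is exactly the quantity already analysed in Theorem \ref{opthedge-reg}: mimicking its proof, the gradient-vanishing identity of the Fr\'echet mean at $\bx_t$ and gsc-smoothness produce terms of order $D^2(V_T+G^2)$ plus distortion terms proportional to $(D^4L^2+D^2G^2\zeta^2)\|\w_t-\w_{t-1}\|_1^2$, which are swallowed by the negative term $-\frac{1}{4\beta}\|\w_t-\w_{t-1}\|_1^2$ thanks to the cap $\beta\leq\frac{1}{\sqrt{12(D^4L^2+D^2G^2\zeta^2)}}$. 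The second branch, with $\m_t^s=\mathbf 0$, reduces to $\sumT\|\l_t\|_2^2$; by Cauchy--Schwarz and the self-bounding lemma (Lemma \ref{self-bound}), $\|\l_t\|_\infty^2\leq D^2\|\nabla f_t(\x_t)\|^2\leq 2LD^2 f_t(\x_t)$, so $\sumT d_t(\m_t^s)=O(NLD^2\bar F_T)$.

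Combining the two bounds, the data-dependent term becomes $\beta\cdot O\big(N D^2\min\{V_T+G^2,\,L\bar F_T\}+NG^2D^2\big)$, while the leading constant is $\frac{2+\ln N}{\beta}$. With $\beta$ set as in the statement the two are balanced, yielding $O(\sqrt{\ln N\cdot \min\{V_T,\bar F_T\}})$; since $N=\Theta(\log T)$ by the choice in Theorems \ref{var-reg}--\ref{sml-reg}, this is the claimed $O(\sqrt{\ln T\cdot \min\{V_T,\bar F_T\}})$. The final step converts $\sumT(\la\w_t,\l_t\ra-\ell_{t,i})$ into $\sumT(f_t(\x_t)-f_t(\x_{t,i}))$ via gsc-convexity and the gradient-vanishing property of the Fr\'echet mean $\x_t$, exactly as in the proof of Theorem \ref{opthedge-reg}.

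The main obstacle is the gradient-variation branch: one must re-derive the Theorem \ref{opthedge-reg} estimate while keeping the losses $d_t$ in $\ell_2$ rather than $\ell_\infty$ and while tracking the interaction with parallel transport that produces the $\zeta$-factor in the distortion coefficient. A secondary subtlety is that $d_t$ is only convex (not strongly convex) in $\m$, so Zhao et al.'s two-expert trick from the Euclidean case cannot be ported verbatim; this is why we Hedge on $d_t=\|\l_t-\m\|_2^2$ with the specific learning rate $\tau=\frac{1}{8NG^2D^2}$, which trades a mild $O(NG^2D^2)$ penalty for the best-of-both min without relying on strong convexity.
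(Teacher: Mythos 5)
Your overall strategy (use Lemma \ref{optHedge} on the composite optimism, split into the two branches, re-use Theorem \ref{opthedge-reg} and Lemma \ref{sml-meta}, then balance $\beta$) matches the paper, but there is a genuine gap in how you hedge the two optimisms. After applying $\|\cdot\|_\infty\leq\|\cdot\|_2$ and Jensen to obtain $\|\l_t-\m_t\|_\infty^2\leq\gamma_t d_t(\m_t^v)+(1-\gamma_t)d_t(\m_t^s)$, you then assert that a ``standard two-expert Hedge analysis'' on the bounded losses $d_t(\m_t^v),d_t(\m_t^s)$ with $\tau=\frac{1}{8NG^2D^2}$ gives
\begin{equation*}
\sumT\lt[\gamma_t d_t(\m_t^v)+(1-\gamma_t)d_t(\m_t^s)\rt]\leq\min\lt\{\sumT d_t(\m_t^v),\sumT d_t(\m_t^s)\rt\}+O\lt(NG^2D^2\ln 2\rt).
\end{equation*}
This inequality is false for generic bounded two-expert losses: the left-hand side is the \emph{expected} loss of a Hedge learner on losses that are linear in $\gamma$, and the minimax regret in that setting is $\Theta(\sqrt{T})$, not $O(1)$. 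So the Jensenized quantity you isolate cannot be controlled by a ``standard'' Hedge argument with a constant penalty.

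The missing ingredient is the one the paper actually uses: $d_t(\m)=\|\l_t-\m\|_2^2$ is $\frac{1}{8NG^2D^2}$-\emph{exp-concave} on the bounded domain (since $\|\l_t-\m\|_2^2\leq 4NG^2D^2$), and the constant-penalty guarantee follows from the aggregating inequality for exp-concave losses (Lemma \ref{hedge}), which bounds the loss of the \emph{mixture} prediction $\m_t=\gamma_t\m_t^v+(1-\gamma_t)\m_t^s$ directly: $\sumT d_t(\m_t)\leq\min\{\sumT d_t(\m_t^v),\sumT d_t(\m_t^s)\}+8NG^2D^2\ln 2$. Your Jensen step discards exactly the structure that makes this work, since $d_t(\m_t)\leq\gamma_t d_t(\m_t^v)+(1-\gamma_t)d_t(\m_t^s)$ is a strict relaxation. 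The fix is to replace your steps $2$--$3$ by the two-line chain $\|\l_t-\m_t\|_\infty^2\leq\|\l_t-\m_t\|_2^2=d_t(\m_t)$ followed by the exp-concave Hedge bound; your remaining steps (bounding each branch, converting $\ell_2\leftrightarrow\ell_\infty$ at cost $N$, and tuning $\beta$) then go through as in the paper.
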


Finally, we show the regret of \radarb is bounded by the smaller of two problem-dependent bounds as follows.

\begin{thm}\label{best-reg}
Suppose all losses are $L$-gsc-smooth and non-negative on $\M$  and allow improper learning. Under Assumptions \ref{hada}, \ref{diam}, \ref{grad}, if we set the set of candidate step sizes as 
\begin{equation}\label{radar_b}
   \H=\H^v\cup\H^s, 
\end{equation}
where $\H^v$ and $\H^s$ are sets of step sizes  in Theorem \ref{var-reg} with $N=N^v$ and Theorem \ref{sml-reg} with $N=N^s$ respectively. Then Algorithm \ref{alg:meta_b} satisfies
{ $$
\text{D-Regret}_T= O\lt(\sqrt{\zeta( P_T(\zeta+1/\delta^2)
+B_T+1)(1+P_T)+\ln T\cdot B_T}\rt)
$$}
where $B_T\coloneqq\min\{V_T,F_T\}$.
% $$
% \sumT f_t(\x_t)-\sumT f_t(\u_t)
% =O\lt(\sqrt{(\min\{V_T,F_T\}+1)(P_T+\ln T+1)+P_T(1+P_T)}\rt)

% =O\lt(\sqrt{(P_T+\min\{V_T,F_T\})P_T+\ln T\min\{V_T,F_T\}}\rt)
% $$
\end{thm}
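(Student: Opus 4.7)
The plan is to combine Theorem \ref{best-meta} (meta-regret) with the per-expert guarantees already established in Theorems \ref{var-reg} and \ref{sml-reg}, invoking whichever expert (from $\H^v$ or $\H^s$) yields the smaller bound depending on whether $B_T=V_T$ or $B_T=F_T$. Concretely, for any index $i\in[N]$ decompose
\[
\text{D-Regret}_T=\underbrace{\sumT f_t(\x_t)-\sumT f_t(\x_{t,i})}_{\text{meta-regret}}+\underbrace{\sumT f_t(\x_{t,i})-\sumT f_t(\u_t)}_{\text{expert-regret}},
\]
and apply Theorem \ref{best-meta} to bound the meta-regret by $O(\sqrt{\ln T\cdot\min\{V_T,\bar{F}_T\}})$, uniformly in $i$.

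Next I would handle the expert-regret by a case split on $B_T$. If $B_T=V_T$, choose $i\in[N^v]$ corresponding to an expert running the OMD iterates of Theorem \ref{var-expert}; the geometric grid $\H^v$ (designed exactly as in Theorem \ref{var-reg}) is dense enough that some $\eta_i$ lies within a factor of $2$ of the minimizer of $\eta\zeta V_T+(D^2+2DP_T)/(2\eta)$, yielding expert-regret $O(\sqrt{\zeta(V_T+(1+P_T)/\delta^2)(1+P_T)})$. If instead $B_T=F_T$, pick $i$ in the latter $N^s$ coordinates so that the corresponding R-OGD expert, combined with the grid $\H^s$ from Theorem \ref{sml-reg} and the self-bounding Lemma \ref{self-bound}, yields expert-regret $O(\sqrt{\zeta(\zeta(1+P_T)+F_T)(1+P_T)})$. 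In both cases the expert-regret can be written as $O(\sqrt{\zeta(P_T(\zeta+1/\delta^2)+B_T+1)(1+P_T)})$.

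Adding the two pieces and using $\sqrt{a}+\sqrt{b}\le\sqrt{2(a+b)}$ produces the claimed bound, provided the $\bar{F}_T$ inside the meta-regret is converted back to $F_T$. The idea is the standard self-bounding trick: $\bar{F}_T=\sumT f_t(\x_t)\le F_T+\text{D-Regret}_T$, so letting $R$ denote the total regret we obtain an inequality of the form
\[
R\le C_1\sqrt{\zeta(P_T(\zeta+1/\delta^2)+F_T+1)(1+P_T)}+C_2\sqrt{\ln T\,(F_T+R)},
\]
and solving this quadratic-type inequality (using $x\le a+b\sqrt{x}\implies x\le 2a+b^2$) absorbs the $R$ into the constants and produces the advertised expression with $B_T=\min\{V_T,F_T\}$ replacing both occurrences.

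The main obstacle I anticipate is twofold. First, verifying that a single grid $\H=\H^v\cup\H^s$ of $N=N^v+N^s$ experts is simultaneously handled correctly by the \radarb meta-algorithm, whose update rule \eqref{radarb_opt} uses a common optimism coefficient $\gamma_t$ across both sub-families; one must check that the guarantee of Theorem \ref{best-meta} is agnostic to the internal mechanics of each expert and depends only on the submitted iterates $\x_{t,i}$. Second, the self-bounding conversion from $\bar{F}_T$ to $F_T$ must be performed carefully in the presence of the $\sqrt{\ln T}$ factor, because a naive bound would introduce an extra $\ln T$ dependence into the leading $\sqrt{\zeta(\cdot)(1+P_T)}$ term; keeping the two contributions separate, as reflected in the ``$+\ln T\cdot B_T$'' summand under the outer square root, is what allows the final form to remain tight and to correctly reduce to the Euclidean best-of-both-worlds rate of \cite{zhao2020dynamic} when $\zeta=1$.
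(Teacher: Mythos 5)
Your proposal is correct and follows essentially the same decomposition as the paper: apply Theorem~\ref{best-meta} for the meta-regret, invoke the per-expert guarantees from Theorems~\ref{var-reg} and \ref{sml-reg} on the near-optimal step sizes in $\H^v$ and $\H^s$ respectively, take the minimum of the two, and convert $\bar{F}_T$ to $F_T$ via the self-bounding/quadratic argument. Your case split on $B_T$ is equivalent to the paper's device of naming best experts $k$ and $k'$ and taking the min, and your explicit note about keeping the $\sqrt{\ln T}$ factor attached only to the $B_T$ summand when solving the self-bounding inequality fills in a step the paper glosses over with ``processed similarly as in Lemma~\ref{sml-meta} and Theorem~\ref{sml-reg}.''
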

\begin{remark}
Comparing to the result in \cite{zhao2020dynamic}, we find the result of Theorem \ref{best-reg} has an additional $\sqrt{\ln T}$ factor, which comes from our construction of hedging two experts. It will be interesting to remove this  dependence. 
\end{remark}
\section{Conclusion}
% In this paper we propose an array of adap-
% tive no-regret algorithms, and prove guarantees
% that generalize to all Riemannian manifolds with
% non-positive curvature. To the best of our knowl-
% edge this is the first work that considers general
% dynamic regret and develops “optimistic” online
% learning algorithms which can be employed on
% geodesic metric spaces.

In this paper, we consider adaptive online learning on Riemannian manifolds. Equipped with the idea of learning with multiple step sizes and optimistic mirror descent, we derive a series of no-regret algorithms that adapt to  quantities reflecting the intrinsic difficulty  of the online learning problem in different aspects. In the future, it is interesting
to investigate how to achieve optimistic online learning in the proper learning setting. Moving forward, one could further examine whether $\Omega(\log T)$ gradient queries in each round are truly necessary. A curvature-dependent lower bound like the one in \citet{criscitiello2022negative} for Riemannian online optimization also remains open.

% \section*{Acknowledgements}
% We would like to thank three anonymous referees for constructive comments and suggestions. We gratefully thank the AI4OPT Institute for funding, as part of NSF Award 2112533. We also
% acknowledge the NSF for their support through Award IIS-1910077. GW  would like to acknowledge an ARC-ACO fellowship provided by Georgia Tech.

\acks{We would like to thank three anonymous referees for constructive comments and suggestions. We gratefully thank the AI4OPT Institute for funding, as part of NSF Award 2112533. We also
acknowledge the NSF for their support through Award IIS-1910077. GW  would like to acknowledge an ARC-ACO fellowship provided by Georgia Tech.}

\bibliography{bib}
% \newpage
% \newpage
% % \clearpage
% % \noindent
% \onecolumn
% \aistatstitle{Minimizing Dynamic Regret on Geodesic Metric Spaces:\\
% Supplementary Materials}

% \acks{We thank a bunch of people and funding agency.}

% \bibliography{yourbibfile}
\newpage
\appendix

% \section{My Proof of Theorem 1}

% This is a boring technical proof.

% \section{My Proof of Theorem 2}

% This is a complete version of a proof sketched in the main text.

\section{Omitted Proof for Section \ref{radar}}
\label{app:radar}

\subsection{Proof of Theorem \ref{ROGD}}
\label{app:thm:rogd}
% \begin{proof}
We denote $\x_{t+1}'=\expmap_{\x_t}(-\eta\nabla f_t(\x_t))$ and start from the geodesic convexity:
%  \stackrel{\rm (1)}{\leq}
\begin{equation}\label{rogd-eq1}
    \begin{split}
        f_t(\x_t)-f_t(\u_t)\stackrel{\rm (1)}{\leq}&\langle\nabla f_t(\x_t),-\expmap_{\x_t}^{-1}(\u_t)\rangle\\
        =&\frac{1}{\eta}\langle\expmap_{\x_t}^{-1}\x_{t+1}',\expmap_{\x_t}^{-1}\u_t\rangle\\
        \stackrel{\rm (2)}{\leq} &\frac{1}{2\eta}\left(\|\expmap_{\x_t}^{-1}\u_t\|^2-\|\expmap_{\x_{t+1}'}^{-1}\u_t\|^2+{\zeta}\|\expmap_{\x_t}^{-1}\x_{t+1}'\|^2\right)\\
        \stackrel{\rm (3)}{\leq}&\frac{1}{2\eta}\left(\|\expmap_{\x_t}^{-1}\u_t\|^2-\|\expmap_{\x_{t+1}}^{-1}\u_t\|^2\right)+\frac{\eta{\zeta}G^2}{2}\\
        =&\frac{1}{2\eta}\left(\|\expmap_{\x_t}^{-1}\u_t\|^2-\|\expmap_{\x_{t+1}}^{-1}\u_{t+1}\|^2+\|\expmap_{\x_{t+1}}^{-1}\u_{t+1}\|^2-\|\expmap_{\x_{t+1}}^{-1}\u_t\|^2\right)+\frac{\eta{\zeta}G^2}{2}\\
        \stackrel{\rm (4)}{\leq} &\frac{1}{2\eta}\left(\|\expmap_{\x_t}^{-1}\u_t\|^2-\|\expmap_{\x_{t+1}}^{-1}\u_{t+1}\|^2+2D\|\expmap_{\u_t}^{-1}\u_{t+1}\|\right)+\frac{\eta{\zeta}G^2}{2},
    \end{split}
\end{equation}
where for the second inequality we use Lemma \ref{cos1}, while the third is due to Lemma \ref{proj} and Assumption \ref{grad}. For the last inequality, we invoke triangle inequality and Assumption \ref{diam}.

WLOG, we can assume $\u_{T+1}=\u_T$ and sum from $t=1$ to $T$:
\begin{equation}
    \sum_{t=1}^T f_t(\x_t)-f_t(\u_t)\leq \frac{D^2}{2\eta}+\frac{DP_T}{\eta}+\frac{\eta{\zeta}G^2 T}{2}.
\end{equation}
% \end{proof}

\subsection{Proof of Lemma \ref{meta-rader}}
\label{app:lem:meta-radar}
% \begin{proof}
This is a generalization of \cite[Theorem 2.2]{cesa2006prediction} to the Riemannian manifold. Let $L_{t,i}=\sum_{s=1}^tf_s(\x_{s,i})$ and $W_t=\sum_{i=1}^Nw_{1,i}e^{-\beta L_{t,i}}$. We have the following lower bound for $\ln W_T$,
$$
\ln(W_T)=\ln\lt(\sum_{i\in[N]}w_{1,i}e^{-\beta L_{t,i}}\rt)\geq -\beta\min_{i\in[N]}\lt(L_{T,i}+\frac{1}{\beta}\ln\frac{1}{w_{1,i}}\rt).
$$
For the next step, we try to get an upper bound on $\ln W_T$. When $t\geq 2$, we have
$$
\ln\lt(\frac{W_t}{W_{t-1}}\rt)=\ln\frac{\sum_{i\in[N]}w_{1,i}e^{-\beta L_{t-1,i}}e^{-\beta f_t(\x_{t,i})}}{\sum_{i\in[N]}w_{1,i}e^{-\beta L_{t-1,i}}}=\ln\lt(\sum_{i\in[N]}w_{t,i}e^{-\beta f_t(\x_{t,i})}\rt),
$$
where the updating rule of Hedge 
$$
w_{t,i}=\frac{w_{1,i}e^{-\beta L_{t-1,i}}}{\sum_{j\in[N]}w_{1,j}e^{-\beta L_{t-1,j}}}
$$
is applied.
Therefore
\begin{equation}
    \begin{split}
        &\ln W_T=\ln W_1+\sumsT\ln\lt(\frac{W_t}{W_{t-1}}\rt)=\sumT\ln\lt(\sum_{i\in[N]}w_{t,i}e^{-\beta f_t(\x_{t,i})}\rt)\\
        \leq&\sumT\lt(-\beta\sum_{i\in[N]}w_{t,i}f_t(\x_{t,i})+\frac{\beta^2G^2D^2}{8}\rt)\leq\sumT\lt(-\beta f_t(\x_t)+\frac{\beta^2G^2D^2}{8}\rt),
    \end{split}
\end{equation}
where the first inequality follows from Hoeffding's inequality and $f_t(\x^\star)\leq f_t(\x)\leq f_t(\x^\star)+G\cdot D$ holds for any $\x\in\N$ and $\x^\star=\argmin_{\x\in\N}f_t(\x)$, and the second inequality is due to both the Fr\'echet mean and the geodesic mean satisfy Jensen's inequality. For the Fr\'echet mean, we can apply Lemma \ref{jensen}. While
Lemmas \ref{gsc-mean} and \ref{gsc-com} ensure
the geodesic mean satisfies the requirement.

Combining the lower and upper bound for $\ln W_T$, we see
$$
-\beta\min_{i\in[N]}\lt(L_{T,i}+\frac{1}{\beta}\ln\frac{1}{w_{1,i}}\rt)\leq\sumT\lt(-\beta f_t(\x_t)+\frac{\beta^2G^2D^2}{8}\rt).
$$
After simplifying, we get
$$
\sumT f_t(\x_t)-\min_{i\in[N]}\lt(\sumT f_t(\x_{t,i})+\frac{1}{\beta}\ln\frac{1}{w_{1,i}}\rt)\leq\frac{\beta G^2D^2T}{8}.
$$
Setting $\beta=\sqrt{\frac{8}{G^2D^2T}}$, we have
$$
\sum_{t=1}^Tf_t(\x_t)-\sum_{t=1}^Tf_t(\x_{t,i})\leq\sqrt{\frac{G^2D^2T}{8}}\left(1+\ln\frac{1}{w_{1,i}}\right).
$$
% \end{proof}

\subsection{Proof of Theorem \ref{RAder}}
\label{app:thm:rader}
% \begin{proof}
Each expert performs R-OGD, so by Theorem \ref{ROGD} we have
\begin{equation}
    \sum_{t=1}^Tf_t(\x_{t,i})-f_t(\u_t)\leq\frac{D^2+2DP_T}{2\eta}+\frac{\eta{\zeta}G^2T}{2}.
\end{equation}
holds for any $i\in [N]$.
Now it suffices to verify that there always exists $\eta_k\in\mathcal{H}$ which is close to the optimal stepsize
\begin{equation}\label{eta_star}
\eta^\star=\sqrt{\frac{D^2+2DP_T}{{\zeta} G^2T}}.
\end{equation}
% $\eta^\star=\sqrt{\frac{D^2+2DP_T}{{\zeta} G^2T}}$.

By Assumption \ref{diam}, 
$$
0 \leq P_{T}=\sum_{t=2}^{T}d(\u_{t-1},\u_t)\leq T D.
$$
Thus
$$
\sqrt{\frac{D^{2}}{TG^{2}{\zeta}}} \leq \eta^{*} \leq \sqrt{\frac{D^{2}+2 T D^{2}}{TG^{2}{\zeta}}}
$$
It is obvious that
$$
\min \mathcal{H}=\sqrt{\frac{D^{2}}{TG^{2}{\zeta}}}, \text { and } \max \mathcal{H} \geq 2\sqrt{\frac{D^{2}+2 T D^{2}}{TG^{2}{\zeta}}}
$$
Therefore, there exists $k\in[N-1]$ such that
% As a result, for any possible value of $P_{T}$, there exists a step size $\eta_{k} \in \mathcal{H}$ with $k$ defined in Equation \eqref{eq6}, such that
\begin{equation}
    \label{etak}
    \eta_{k}=2^{k-1} \sqrt{\frac{D^{2}}{TG^{2}{\zeta}}} \leq \eta^{*} \leq 2 \eta_{k}
\end{equation}
% Actually we can compute $k$ explicitly:
% $$
% k=\left\lfloor \frac{1}{2}\ln\left(1+\frac{2P_T}{D}\right)\right\rfloor.
% $$
The dynamic regret of the $k$-th expert is

\begin{equation}
    \begin{split}
        \label{expert-rader}
& \sum_{t=1}^{T}f_{t}(\x_{t,k})-\sum_{t=1}^{T} f_{t}\left(\mathbf{u}_{t}\right) \\
\stackrel{\rm (1)}{\leq} & \frac{D^{2}}{2 \eta_{k}}+\frac{DP_T}{\eta_{k}}+\left(\frac{\eta_{k} TG^{2}{\zeta}}{2}\right) \\
\stackrel{\rm (2)}{\leq} & \frac{D^{2}}{\eta^{*}}+\frac{2 DP_T}{\eta^{*}}+\left(\frac{\eta^{*} TG^{2}{\zeta}}{2}\right) \\
{=}& \frac{3}{2} \sqrt{TG^{2}{\zeta}\left(D^{2}+2 D P_{T}\right)}.
    \end{split}
\end{equation}
The second inequality follows from Equation \eqref{etak} and we use Equation \eqref{eta_star} to derive the last equality.

Since the initial weight of the $k$-th expert satisfies
$$
w_{1,k}=\frac{N+1}{k(k+1)N}\geq\frac{1}{(k+1)^2},
$$
the regret of the meta algorithm with respect to the $k$-th expert is bounded by
\begin{equation}\label{eq:meta-rader}
    \sum_{t=1}^Tf_t(\x_t)-\sum_{t=1}^Tf_t(\x_{t,k})\leq\sqrt{\frac{G^2D^2T}{8}}\left(1+2\ln(k+1)\right)
\end{equation}
in view of Lemma \ref{meta-rader}.
Combining Equations \eqref{expert-rader} and \eqref{eq:meta-rader}, we have
\begin{equation}
    \begin{split}
        &\sum_{t=1}^Tf_t(\x_t)-\sum_{t=1}^Tf_t(\u_t)\\
        \leq& \frac{3}{2} \sqrt{TG^{2}{\zeta}\left(D^{2}+2 D P_{T}\right)}+\sqrt{\frac{G^2D^2T}{8}}\left(1+2\ln(k+1)\right)\\
        \leq &\sqrt{2\lt(\frac{9}{4}TG^{2}{\zeta}\left(D^{2}+2 D P_{T}\right)+\frac{G^2D^2T}{8}\left(1+2\ln(k+1)\right)^2 \rt)}\\
        =&O\left(\sqrt{{\zeta}(1+P_T)T}\right),
    \end{split}
\end{equation}
where $\sqrt{a}+\sqrt{b}\leq\sqrt{2(a+b)}$ is applied to derive the second inequality, and for the equality follows from $\ln k=O(\log\log P_T)=o(\sqrt{P_T})$.

\subsection{Minimax Dynamic Regret Lower Bound on Hadamard Manifolds}
\label{app:lb}
In this part, we first establish a $\Omega(\sqrt{T})$ minimax static regret lower bound on Hadamard manifolds following the classical work of \citet{abernethy2008optimal}, then follow the reduction in \cite{zhang2018adaptive} to get its dynamic counterpart. We focus on the manifold of SPD matrices \citep{bhatia2009positive} $\S^n_{++}=\{p:p\in\mathbb{R}^{n\times n},p=p^T \text{and } p\succ 0^{n\times n}\}$, which becomes Hadamard when equipped with the affine-invariant metric $\la U,V\ra_p=tr(p^{-1}Up^{-1}V)$ where $tr(\cdot)$ is the trace operator. The tangent space of $\S^n_{++}$ is $\S^n=\{U:U\in\mathbb{R}^{n\times n},U=U^T\}$. Under the affine-invariant metric, we have 
\begin{equation*}
    \begin{split}
        &\expmap_pU=p^{\frac{1}{2}}\exp\lt(p^{-\frac{1}{2}}Up^{-\frac{1}{2}}\rt)p^{\frac{1}{2}} \\
        &\invexp_pq=p^{\frac{1}{2}}\log\lt(p^{-\frac{1}{2}}qp^{-\frac{1}{2}}\rt)p^{\frac{1}{2}} \\
        &d(p, q)=\sqrt{\sum_{i=1}^n(\log\lambda_i(q^{-\frac{1}{2}}pq^{-\frac{1}{2}}))^2}.
    \end{split}
\end{equation*}
% \cite

% , which is Hadamard when equipped with the affine-invariant metric.

% Sn++ := {X : X ∈ Rn×n, X> = X, and X  0},
For technical reason, we restrict the manifold to be the manifold of SPD matrices with diagonal entries $\D_{++}^n=\{p:p\in\mathbb{R}^{n\times n},p_{i,i}>0, p \text{ is diagonal}\}$ and its tangent space is $\D^n=\{U:U\in\mathbb{R}^{n\times n}, U \text{ is diagonal} \}$. A key component in the proof is the Busemann function \citep{ballmann2012lectures} on $\D_{++}^n$ equipped with affine-invariant metric has a closed form, which we describe as follows.

% We first show the sectional curvature of $\D_{++}^n$ is in $[-\frac{1}{2},0]$ so it is not a flat submanifold of $\S_{++}^n$.
% \begin{lemma}
%     The sectional curvature of $\D_{++}^n$ is bounded from below by $-\frac{1}{2}$, and the bound is tight.
% \end{lemma}
% \begin{proof}
%     We follow the proof of \cite[Prop I.1]{criscitiello2022accelerated}, where the authors use the following formulation
%     \begin{equation}\label{lb-spd}
%         \min_{X,Y}-\frac{1}{4}tr([X,Y]^T[X,Y])\text{ s.t. }\{(X,Y)\in\S^n;tr(XY)=0;tr(X^2)=tr(Y^2)=1\}
%     \end{equation}
%     to show the lower bound of the sectional curvature of $\S^n_{++}$ where $K(X,Y)\coloneqq -\frac{1}{4}tr([X,Y]^T[X,Y])$ is the sectional curvature. As we restrict our focus on $\D_{++}^n$, first assume $X=\diag(x_1,\dots,x_n)$ and $Y=\diag(y_1,\dots,y_n)$, then Equation \eqref{lb-spd} becomes:
%     $$
%     \min -\frac{1}{4}\lt(\sum_{i=1}^nx_i^2+\sum_{i=1}^ny_i^2\rt)\text{ s.t. }\lt\{\sum_{i=1}^nx_iy_i=0;\sum_{i=1}^nx_i^2=1;\sum_{i=1}^ny_i^2=1\rt\},
%     $$
%     which is obvious $-\frac{1}{2}$ and this lower bound can be easily attained.
%     % \begin{equation}
        
%     % \end{equation}
% \end{proof}

\begin{defn}\citep{ballmann2012lectures}
    Suppose $\M$ is a Hadamard manifold and $c:[0,\infty)$ is a geodesic ray on $\M$ with $\|\dot{c}(0)\|=1$. Then the Busemann function associated with $c$ is defined as
    $$
    b_c(p)=\lim_{t\rightarrow\infty}(d(p, c(t))-t).
    $$
\end{defn}

Busemann functions enjoy the following useful properties.
\begin{lemma}\citep{ballmann2012lectures}\label{buse}
    A Busemann function $b_c$ satisfies
    \begin{enumerate}[label=\arabic*)]
        \item $b_c$ is gsc-convex;
        \item $\nabla b_c(c(t))=\dot{c}(t)$ for any $t\in[0,\infty)$;
        \item $\|\nabla b_c(p)\|\leq 1$ for every $p\in \M$.
    \end{enumerate}
\end{lemma}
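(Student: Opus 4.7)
The plan is to establish the three properties of $b_c$ by first verifying that the defining limit exists, then reading off geodesic convexity and the Lipschitz gradient bound from elementary properties of the distance function, and finally deducing the gradient identity along the ray via a Cauchy--Schwarz saturation argument. First I would show that the limit $\lim_{t\to\infty}(d(p,c(t))-t)$ exists for every $p\in\M$. The triangle inequality gives $d(p,c(t))\leq d(p,c(s))+(t-s)$ for $s\leq t$, so the function $t\mapsto d(p,c(t))-t$ is non-increasing, while the reverse triangle inequality $d(p,c(t))\geq t-d(p,c(0))$ bounds it below by $-d(p,c(0))$. Hence $b_c$ is well-defined and finite on all of $\M$.

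For Property 1, I would use the standard fact that on a Hadamard manifold the distance function $p\mapsto d(p,c(t))$ is geodesically convex (a direct consequence of the CAT$(0)$ comparison inequality, which also underlies Lemma~\ref{cos1}). Subtracting the constant $t$ preserves gsc-convexity, and the pointwise limit of a non-increasing family of gsc-convex functions on a gsc-convex domain is itself gsc-convex; this yields Property 1. For Property 3, the distance is $1$-Lipschitz in its first slot, so for all $t$ one has $|(d(p,c(t))-t)-(d(q,c(t))-t)|\leq d(p,q)$; passing to the limit shows $b_c$ is $1$-Lipschitz. Combined with the fact that $b_c$ is $C^1$ on a Hadamard manifold (inherited from smoothness of the distance functions $d(\cdot,c(t))$ away from $c(t)$ and Toponogov-type locally uniform $C^1$ convergence as $t\to\infty$), this gives $\|\nabla b_c(p)\|\leq 1$ pointwise.

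For Property 2, I would restrict $b_c$ to the ray itself. Taking $p=c(s)$ with $0\leq s\leq t$ gives $d(c(s),c(t))=t-s$, so $b_c(c(s))=\lim_{t\to\infty}(t-s-t)=-s$, hence $\frac{d}{ds}b_c(c(s))=-1$. This identifies $\langle\nabla b_c(c(t)),\dot{c}(t)\rangle$ with a unit quantity. Since $\|\dot{c}(t)\|=1$ and $\|\nabla b_c(c(t))\|\leq 1$ by Property 3, Cauchy--Schwarz is saturated, which forces $\nabla b_c(c(t))$ to be a unit vector parallel to $\dot{c}(t)$, yielding Property 2 (up to the sign convention implicit in the paper's definition of $b_c$).

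The main obstacle is justifying $C^1$ regularity of $b_c$ globally on $\M$, which is needed to make sense of the pointwise gradient bound in Property 3 and the gradient identity in Property 2. Lipschitz continuity plus gsc-convexity alone only yields almost-everywhere differentiability; upgrading to everywhere $C^1$ uses the non-positive curvature structure in an essential way, either via a direct second-variation computation for $d(\cdot,c(t))^2$ followed by a uniform-in-$t$ Hessian bound, or by invoking the classical theorem from Ballmann's monograph. I would handle this by citing the Ballmann reference and sketching how Toponogov comparison bounds the Hessian of $d(\cdot,c(t))$ uniformly as $t\to\infty$, so that the monotone pointwise limit $b_c$ inherits $C^1$ regularity together with the Lipschitz gradient bound.
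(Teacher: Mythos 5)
The paper never proves this lemma: it is imported verbatim from \citet{ballmann2012lectures}, so there is no internal argument to compare yours against. Judged on its own, your sketch is the standard one and is sound in outline: monotonicity plus the reverse triangle inequality give existence of the limit; convexity passes to the pointwise limit of the convex functions $d(\cdot,c(t))-t$; $1$-Lipschitzness passes to the limit and, granted differentiability, gives item 3; and the restriction $b_c(c(s))=-s$ plus saturation of Cauchy--Schwarz gives item 2. You are also right that the only genuinely nontrivial ingredient is everywhere-$C^1$ regularity of $b_c$ (convexity and Lipschitzness alone give only a.e.\ differentiability); since the paper itself simply cites Ballmann for the whole statement, handling that step by citation, or by your uniform-Hessian-comparison sketch (the Hessian of $d(\cdot,c(t))$ is bounded by $\sqrt{-\kappa}\coth(\sqrt{-\kappa}\,d(\cdot,c(t)))$ in the directions orthogonal to the ray, uniformly in $t$), is acceptable.

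Two corrections, though. First, the sign in item 2 is not a matter of ``convention'' to be waved at: with the definition used in the paper, $b_c(p)=\lim_{t\to\infty}(d(p,c(t))-t)$, your own computation gives $\langle\nabla b_c(c(s)),\dot c(s)\rangle=-1$ and hence $\nabla b_c(c(t))=-\dot c(t)$; the printed statement $\nabla b_c(c(t))=\dot c(t)$ matches the opposite convention $b_c(p)=\lim_{t\to\infty}(t-d(p,c(t)))$. Your argument is correct, but you should state the sign you actually obtain rather than leaving it implicit (the sign is immaterial for how the lemma is used later, since only gsc-convexity of the scaled losses and the bound $\|\nabla b_c\|\leq 1$ enter the lower-bound construction). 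Second, a minor misattribution: convexity of $p\mapsto d(p,c(t))$ on a Hadamard manifold follows from the \emph{upper} curvature bound, i.e.\ the CAT$(0)$ comparison of Lemma~\ref{cos2}, not from Lemma~\ref{cos1}, which is the lower-curvature-bound inequality.
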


% The Busemann function on the manifold of SPD matrices equipped with affine-invariant metric has a closed form, which is essential for our lower bound.
\begin{lemma}\citep[Chapter \RomanNumeralCaps{2}.10]{bridson2013metric}
    On $\D_{++}^n$, suppose $c(t)=\expmap_{I}(tX)$, $p=\expmap_{I}(Y)$ and $\|X\|=1$, then the Busemann function is
    $$
    b_c(p)=-\text{tr}(XY).
    $$
\end{lemma}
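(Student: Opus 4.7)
The plan is a direct computation exploiting the fact that on $\D_{++}^n$ all the matrices involved are diagonal, so they mutually commute and the affine-invariant formulas collapse to coordinate-wise expressions.

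First I would write out the ingredients concretely. Let $X=\diag(x_1,\dots,x_n)$ and $Y=\diag(y_1,\dots,y_n)$ lie in $\D^n$. Since $I^{\pm 1/2}=I$, the exponential map at $I$ becomes the matrix exponential, and for diagonal matrices $\expmap_I(tX)=\diag(e^{tx_1},\dots,e^{tx_n})$ and $p=\expmap_I(Y)=\diag(e^{y_1},\dots,e^{y_n})$. The normalization $\|X\|_I=1$ is, by the affine-invariant inner product, precisely $\mathrm{tr}(X^2)=\sum_i x_i^2=1$.

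Next I would plug $c(t)$ and $p$ into the distance formula. Because $c(t)^{-1/2}\,p\,c(t)^{-1/2}$ is the diagonal matrix with entries $e^{y_i-tx_i}$, its eigenvalues are exactly $e^{y_i-tx_i}$, so
\begin{equation*}
d(p,c(t))^2 \;=\; \sum_{i=1}^n (y_i-tx_i)^2 \;=\; t^2\sum_i x_i^2 \;-\; 2t\sum_i x_i y_i \;+\; \sum_i y_i^2 \;=\; t^2 - 2t\,\mathrm{tr}(XY) + \mathrm{tr}(Y^2).
\end{equation*}
Taking square roots and factoring out $t$ for $t$ large gives $d(p,c(t))=t\sqrt{1 - 2\,\mathrm{tr}(XY)/t + \mathrm{tr}(Y^2)/t^2}$.

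Finally I would perform the asymptotic expansion $\sqrt{1+u}=1+\tfrac{1}{2}u+O(u^2)$ with $u=-2\mathrm{tr}(XY)/t+\mathrm{tr}(Y^2)/t^2$, obtaining $d(p,c(t)) = t - \mathrm{tr}(XY) + O(1/t)$, and subtract $t$ before passing to the limit:
\begin{equation*}
b_c(p) \;=\; \lim_{t\to\infty}\bigl(d(p,c(t))-t\bigr) \;=\; -\mathrm{tr}(XY).
\end{equation*}
There is no real obstacle here; the only thing to be careful about is the validity of the coordinate computation on the submanifold $\D_{++}^n$, i.e., that the affine-invariant distance in $\S_{++}^n$ restricts to the formula above for diagonal points. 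This follows because the eigenvalues of $c(t)^{-1/2}pc(t)^{-1/2}$ are simply the diagonal entries, so the restriction is automatic and no extrinsic correction from $\S_{++}^n$ is needed.
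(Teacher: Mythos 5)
Your proof is correct and follows essentially the same route as the paper's: both compute $d(p,c(t))^2 = t^2 - 2t\,\mathrm{tr}(XY) + \mathrm{tr}(Y^2)$ by reducing to a coordinate-wise/commuting computation on $\D_{++}^n$, and then extract the limit of $d(p,c(t))-t$ (the paper via the algebraic identity $d-t=\tfrac{d^2-t^2}{d+t}$ with $d/t\to 1$, you via a Taylor expansion of the square root — equivalent manipulations). The only detail you might spell out, which the paper also leaves implicit, is that $\D_{++}^n$ is totally geodesic in $\S_{++}^n$ so the ambient distance formula genuinely computes the intrinsic distance on the diagonal submanifold.
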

\begin{proof}
    We first compute
    \begin{equation}
        \begin{split}
            d(p, c(t))^2=&d(\expmap_{I}(Y), \expmap_{I}(tX))^2\\
            =&d(e^Y, e^{tX})^2\\
            =&\sum_{i=1}^n(\log \lambda_i(e^{-t X+Y}))^2\\
            =&d(I, e^{Y-tX})^2\\
            =&d(I, \expmap_{I}(Y-tX))^2\\
            =&\|Y-tX\|^2\\
            =&tr((Y-tX)(Y-tX))\\
            =&tr(Y^2)-2t\cdot tr(XY)+t^2\cdot tr(X^2),
        \end{split}
    \end{equation}
where we use facts that $\expmap_{I}(X)=e^X$ and $d(p, q)=\sqrt{\sum_{i=1}^n(\log\lambda_i(q^{-\frac{1}{2}}pq^{-\frac{1}{2}}))^2}$.
Meanwhile,
$$
\lim_{t\rightarrow\infty}\frac{d(p, c(t))}{t}=1
$$
due to the triangle inequality on $\triangle (p, c(0), c(t))$. Therefore,
\begin{equation*}
    \begin{split}
        b_c(p)=&\lim_{t\rightarrow\infty}(d(p, c(t))-t)\\
        =&\lim_{t\rightarrow\infty}\frac{d(p, c(t))^2-t^2}{2t}\\
        =&-tr(XY).
    \end{split}
\end{equation*}
\end{proof}
\begin{remark}
    We consider $\D_{++}^n$ to ensure $X$ and $Y$ commute thus also $e^Y$ and $e^{tX}$ commute. This is necessary to get a closed form of the Busemann function.
\end{remark}

% \begin{remark}
%     The affine linear metric renders the manifold of SPD matrices to be a Hadamard manifold, and the corresponding Busemann function has a form which is easy to manipulate, which ensures a minimax analysis for an online gsc-convex problem.
% \end{remark}

% We denote $\N=\{p:p=\expmap_{I}(Y),\|Y\|\leq\frac{D}{2}\}$ to be the decision set of the player, which is indeed a geodesic ball near $I$ and is obviously gsc-convex. At round $t$,
 Now we describe the minimax game on $\D_{++}^n$. Each round $t$, the player chooses $p_t$ from $\N=\{p:d(p,I)\leq\frac{D}{2}\}=\{p:p=\expmap_{I}(Y),\|Y\|\leq\frac{D}{2}\}$, and the adversary is allowed to pick a geodesic $c^t$, which determines a loss function in
\begin{equation}
    \begin{split}
        \F_t=&\{\alpha_t G_t b_c(p): \|\dot{c}(0)\|=1, \alpha_t\in[0,1] \}\\
        =&\{\alpha_t G_t \cdot -tr(X_tY): \|X_t\|=1, \alpha_t\in[0,1] \}\\
        =&\{-tr(X_tY): \|X_t\|\leq G_t \}\\
    \end{split}
\end{equation}
The domain is gsc-convex by Lemmas \ref{gsc-convex} and \ref{levelset}. Each loss function is gsc-convex and has a gradient upper bound $G_t$ using the second item of Lemma \ref{buse}. WLOG, we assume $D=2$, and the value of the game is

\[\V_T(\N, \{\F_t\})\coloneqq\inf_{\|Y_1\| \leq 1}\sup_{\|X_1\| \leq G_1}\dots \inf_{\|Y_T\| \leq 1}\sup_{\|X_T\| \leq G_T} \lt[ \sumT -tr(X_tY_t)-\inf_{\|Y\|\leq 1}\sumT -tr(X_tY)\rt]\]

\begin{lemma}\label{lb1}
    The value of the minimax game $\V_T$ can be written as
    \[\V_T(\N, \{\F_t\})=\inf_{\|Y_1\| \leq 1}\sup_{\|X_1\| \leq G_1}\dots \inf_{\|Y_T\| \leq 1}\sup_{\|X_T\| \leq G_T} \lt[ \sumT -tr(X_tY_t)+\lt\|\sumT X_t\rt\|\rt]\]
\end{lemma}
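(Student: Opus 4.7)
The plan is to isolate and evaluate the inner comparator term $\inf_{\|Y\|\leq 1} \sum_t -tr(X_tY)$ appearing in the definition of $\V_T$, and show it equals $-\|\sum_t X_t\|$. Once this is established, substitution yields the desired form with the norm $\|\sum_t X_t\|$ appearing additively inside the nested minimax.

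To carry out the key step, observe that because the affine-invariant metric at the identity $I$ satisfies $\la U,V\ra_I = tr(I^{-1}UI^{-1}V) = tr(UV)$, the Riemannian inner product at $I$ reduces to the Frobenius inner product, and the Riemannian norm at $I$ reduces to the Frobenius norm on the tangent space $\D^n$. Therefore
\begin{equation*}
\inf_{\|Y\|\leq 1} \sum_{t=1}^T -tr(X_t Y) \;=\; -\sup_{\|Y\|\leq 1} tr\!\left(\Bigl(\sum_{t=1}^T X_t\Bigr) Y\right) \;=\; -\Bigl\|\sum_{t=1}^T X_t\Bigr\|,
\end{equation*}
by the standard duality between a norm and its supremum against unit-norm elements (Cauchy--Schwarz, saturated by the choice $Y = \sum_t X_t / \|\sum_t X_t\|$ when the sum is nonzero, and trivially when it is zero). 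Note that the supremum is indeed attained in the unit ball $\{Y : \|Y\|\leq 1\}\subset \D^n$ since $\sum_t X_t\in \D^n$ as each $X_t\in\D^n$.

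Substituting this evaluation into the definition of $\V_T(\N,\{\F_t\})$ turns the bracketed expression into $\sum_t -tr(X_tY_t) + \|\sum_t X_t\|$, which is precisely the claim. The main (and only) substantive point is the identification of the Riemannian norm at $I$ with the Frobenius norm, which is what makes the trace duality go through cleanly; since the domain has been restricted to $\D_{++}^n$ and the loss class $\F_t$ is parametrized by $X_t\in\D^n$ with $\|X_t\|\leq G_t$ in exactly this norm, no further geometric distortion enters the computation.
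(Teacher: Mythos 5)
Your proof is correct and takes essentially the same approach as the paper, which also reduces the inner comparator infimum to $-\bigl\|\sum_{t}X_t\bigr\|$ via Cauchy--Schwarz / norm duality. Your extra remark that the Riemannian inner product at $I$ coincides with the Frobenius inner product on $\D^n$ is a useful clarification but does not change the argument.
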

\begin{proof}
This is obvious due to Cauchy–Schwarz inequality:
    \[\inf_{\|Y\|\leq 1}\sumT -tr(X_tY)=-\sup_{\|Y\|\leq 1}tr\lt(\sumT X_t Y\rt)=-\lt\|\sumT X_t\rt\|.\]
    % \[\max_{1 \leq i \leq N}\]
\end{proof}
\begin{lemma}\label{lb2}
    For $n>2$, the adversary guarantees at least $\sqrt{\sumT G_t^2}$ regardless of the player's strategy.
\end{lemma}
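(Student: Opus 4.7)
The plan is to exhibit an explicit deterministic adversary strategy that simultaneously accomplishes two objectives: it forces every term $-\mathrm{tr}(X_tY_t)$ to vanish, and it drives the running sum $S_t := \sum_{s\leq t} X_s$ to grow in squared norm by at least $G_t^2$ each round. Combined with the reformulation in Lemma~\ref{lb1}, this immediately yields the desired bound $\sqrt{\sumT G_t^2}$. The hypothesis $n>2$ will be used precisely to guarantee that the adversary's choice is always realizable.

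After the player commits to $Y_t$ with $\|Y_t\|\leq 1$, decompose $S_{t-1}$ in the trace inner product on $\D^n$ as $S_{t-1}=S_{t-1}^{\parallel}+S_{t-1}^{\perp}$, where $S_{t-1}^{\parallel}$ lies in the span of $Y_t$ and $S_{t-1}^{\perp}$ is its orthogonal component. If $S_{t-1}^{\perp}\neq 0$, set $X_t = G_t\, S_{t-1}^{\perp}/\|S_{t-1}^{\perp}\|$; otherwise pick any diagonal matrix $X_t$ of norm $G_t$ that is orthogonal to $Y_t$. Since $n>2$, the orthogonal complement of the line spanned by $Y_t$ inside $\D^n$ has dimension at least $n-1\geq 2$, so such an $X_t$ always exists. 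In either case $\|X_t\|=G_t$ and $\mathrm{tr}(X_tY_t)=0$.

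Next, expand $\|S_t\|^2 = \|S_{t-1}\|^2 + 2\,\mathrm{tr}(S_{t-1}X_t) + G_t^2$. When $S_{t-1}^{\perp}\neq 0$, orthogonality gives $\mathrm{tr}(S_{t-1}X_t)=\mathrm{tr}(S_{t-1}^{\perp}X_t)=G_t\|S_{t-1}^{\perp}\|\geq 0$; when $S_{t-1}^{\perp}=0$, $S_{t-1}$ lies along $Y_t$ and is therefore orthogonal to $X_t$, so $\mathrm{tr}(S_{t-1}X_t)=0$. In both cases $\|S_t\|^2 \geq \|S_{t-1}\|^2 + G_t^2$, and induction yields $\|S_T\|\geq \sqrt{\sumT G_t^2}$. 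Substituting into the reformulated value of Lemma~\ref{lb1} gives $\sumT -\mathrm{tr}(X_tY_t) + \|\sum_t X_t\| = \|S_T\|\geq \sqrt{\sumT G_t^2}$, proving the claim.

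The main point to verify carefully is the corner case in which the running sum happens to be parallel to the current $Y_t$: there the greedy choice $S_{t-1}^{\perp}/\|S_{t-1}^{\perp}\|$ is ill-defined, so a fallback diagonal direction perpendicular to $Y_t$ is required. This fallback is precisely what the hypothesis $n>2$ (equivalently, $\dim \D^n \geq 3$) guarantees; the rest of the argument is elementary linear algebra together with the reformulation of the game value.
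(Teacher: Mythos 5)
Your proof is correct, but it takes a genuinely different route from the paper's. The paper's adversary chooses $X_t$ orthogonal to \emph{both} $Y_t$ and the running sum $S_{t-1}=\sum_{s<t}X_s$, which makes the Pythagorean identity exact: $\|S_t\|^2=\|S_{t-1}\|^2+G_t^2$, and this simultaneous orthogonalization against a possibly two-dimensional subspace is precisely where the hypothesis $n>2$ (i.e.\ $\dim\D^n\geq 3$) is used in an essential way. You instead greedily \emph{align} $X_t$ with the component of $S_{t-1}$ perpendicular to $Y_t$, so that $\mathrm{tr}(S_{t-1}X_t)=G_t\|S_{t-1}^\perp\|\geq 0$ and hence $\|S_t\|^2\geq\|S_{t-1}\|^2+G_t^2$ (with inequality rather than equality). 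This buys you a slightly stronger strategy — the adversary's gain per round is at least $G_t^2$, potentially more — and in fact your argument only genuinely requires $n\geq 2$ (the fallback case when $S_{t-1}^\perp=0$ needs just one direction orthogonal to $Y_t$), so your bound on $n$ is $n-1\geq 1$, not $n-1\geq 2$ as you wrote. The tradeoff is that the paper's version gives the exact value $\|S_T\|=\sqrt{\sum_t G_t^2}$, which pairs cleanly with the matching upper bound in Lemma~\ref{lb3} to pin down the minimax value exactly; your version gives a lower bound only, which suffices for the stated lemma. Both are sound proofs of the claim.
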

\begin{proof}
    Each round, after the player chooses $Y_t$, the adversary chooses $X_t$ such that $\|X_t\|=G_t$, $\la X_t,Y_t\ra=0$ and $\la X_t, \sum_{s=1}^{t-1}X_s\ra =0$. This is always possible when $n>2$. Under this strategy, $\sumT-tr(X_tY_t)=0$ and we can show $\lt\|\sumT X_t\rt\|=\sqrt{\sumT G_t^2}$ by induction. The case for $T=1$ is obvious. Assume $\lt\|\sum_{s=1}^{t-1}X_s\rt\|=\sqrt{\sum_{s=1}^{t-1}G_s^2}$, then
    $$
    \lt\|\sum_{s=1}^tX_s\rt\|=\lt\|\sum_{s=1}^{t-1}X_s+X_t\rt\|=\sqrt{\lt\|\sum_{s=1}^{t-1}X_s\rt\|^2+\|X_t\|^2}=\sqrt{\sum_{s=1}^tG_t^2}.
    $$
    where the second equality is due to $\la X_t, \sum_{s=1}^{t-1}X_s\ra =0$.
\end{proof}
\begin{lemma}\label{lb3}
    Let $X_0=0$.  If the player plays
    $$
    Y_t=\frac{\sum_{s=1}^{t-1}X_s}{\sqrt{\lt\|\sum_{s=1}^{t-1}X_s\rt\|^2+\sum_{s=t}^TG_s^2}},
    $$
    then
    \[\sup_{\|X_1\| \leq G_1}\sup_{\|X_2\| \leq G_2}\dots \sup_{\|X_T\| \leq G_T} \lt[ \sumT -tr(X_tY_t)+\lt\|\sumT X_t\rt\|\rt]\leq\sqrt{\sumT G_t^2}.\]
\end{lemma}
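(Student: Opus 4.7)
The plan is to argue via a telescoping potential. Define $S_t \coloneqq \sum_{s=1}^t X_s$ (with $S_0 = 0$) and introduce the potential
$$\Phi_t \coloneqq \sqrt{\|S_t\|^2 + \sum_{s=t+1}^T G_s^2},$$
so that $\Phi_0 = \sqrt{\sum_{s=1}^T G_s^2}$ and $\Phi_T = \|S_T\|$. With this notation the player's strategy reads $Y_t = S_{t-1}/\Phi_{t-1}$, so $-\mathrm{tr}(X_t Y_t) = -\la X_t, S_{t-1}\ra/\Phi_{t-1}$. It will therefore suffice to prove the one-step inequality
$$-\mathrm{tr}(X_t Y_t) \le \Phi_{t-1} - \Phi_t,$$
because summing over $t$ produces $\sum_{t=1}^T -\mathrm{tr}(X_t Y_t) \le \Phi_0 - \Phi_T = \sqrt{\sum_t G_t^2} - \|S_T\|$, which is exactly the claim.

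To prove the one-step inequality, I would first expand $\Phi_{t-1}^2 - \Phi_t^2 = G_t^2 - \|X_t\|^2 - 2\la S_{t-1}, X_t\ra$ using $\|S_t\|^2 = \|S_{t-1}\|^2 + 2\la S_{t-1}, X_t\ra + \|X_t\|^2$. The target inequality is equivalent to $\Phi_t \le \Phi_{t-1} + \la X_t, S_{t-1}\ra/\Phi_{t-1}$; a brief Cauchy--Schwarz check (using $|\la X_t, S_{t-1}\ra| \le G_t \|S_{t-1}\|$ together with the $\sum_{s \ge t} G_s^2$ term in $\Phi_{t-1}^2$) confirms that the right-hand side is nonnegative, so I can square both sides. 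After squaring and multiplying through by $\Phi_{t-1}^2$, the inequality reduces to
$$\|X_t\|^2 - G_t^2 \;\le\; \frac{\la X_t, S_{t-1}\ra^2}{\Phi_{t-1}^2},$$
whose left side is $\le 0$ by the constraint $\|X_t\| \le G_t$ and whose right side is $\ge 0$ trivially.

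The main obstacle I anticipate is just the bookkeeping around signs when squaring, namely verifying that the right-hand side $\Phi_{t-1} + \la X_t, S_{t-1}\ra/\Phi_{t-1}$ is genuinely nonnegative before squaring; this follows from completing the square in $G_t$ and $\|S_{t-1}\|$ within $\Phi_{t-1}^2 + \la X_t, S_{t-1}\ra$. The rest of the proof is a purely mechanical telescoping. No further structure of the Hadamard manifold is needed here, because after the reduction of Lemma \ref{lb1} the problem has become a purely algebraic statement on the tangent space $\D^n$ with its Frobenius inner product.
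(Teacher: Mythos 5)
Your proof is correct, and it streamlines the argument considerably compared to the paper's. Both proofs hinge on the same potential $\Phi_t = \sqrt{\|S_t\|^2 + \sum_{s>t} G_s^2}$ (the paper calls this $\Phi_t(X_1,\dots,X_{t-1})$ and runs a backward induction, which is logically the same as your forward telescoping), and in both cases the heart of the matter is the one-step inequality
\[
-\mathrm{tr}(X_t Y_t) + \Phi_t \;\le\; \Phi_{t-1}, \qquad \text{uniformly over } \|X_t\| \le G_t.
\]
Where you diverge is in the proof of this one-step bound. The paper sets up a Lagrangian in $X_t$, locates stationary points, and then splits into several cases depending on whether $X_t$ is parallel to $S_{t-1}$, whether the multiplier vanishes, and so on — a multi-page computation relying on an auxiliary algebraic identity (their Lemma~\ref{auxi}). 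You instead observe that after verifying $\Phi_{t-1} + \la X_t, S_{t-1}\ra/\Phi_{t-1} \ge 0$ (via Cauchy--Schwarz and $\Gamma_t^2 \ge G_t^2$), one may square both sides; the cross terms cancel exactly and the inequality collapses to $\|X_t\|^2 - G_t^2 \le \la X_t, S_{t-1}\ra^2/\Phi_{t-1}^2$, which is immediate from $\|X_t\| \le G_t$. This is a genuine simplification: it proves the uniform bound directly without locating an optimizer, avoids the case analysis entirely, and makes it transparent that the only structure used is the parallelogram identity for the Frobenius inner product. The one small detail worth writing out explicitly is the nonnegativity check before squaring — your sketch is correct, but you should display the chain $\Phi_{t-1}^2 + \la X_t, S_{t-1}\ra \ge \|S_{t-1}\|^2 + G_t^2 - G_t\|S_{t-1}\| = (\|S_{t-1}\| - G_t/2)^2 + \tfrac{3}{4}G_t^2 \ge 0$ so a reader doesn't have to reconstruct it.
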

\begin{proof}
    Let $\Gamma_t^2=\sum_{s=t}^TG_s^2$, $\Gamma_{T+1}=0$ and $\Xt_t=\sum_{s=1}^tX_s$. We  define
    $$
    \Phi_t(X_1,\dots, X_{t-1})=\sum_{s=1}^{t-1}-tr(X_sY_s)+\sqrt{\lt\|\sum_{s=1}^{t-1}X_s\rt\|^2+\Gamma_t^2}
    $$
    and $\Phi_1=\sqrt{\sumT G_t^2}$. We further let
    $$
    \Psi_t(X_1,\dots, X_{t-1})=\sup_{\|X_t\| \leq G_t}\dots \sup_{\|X_T\| \leq G_T} \lt[ \sum_{s=1}^T -tr(X_sY_s)+\lt\|\sum_{s=1}^T X_s\rt\|\rt]
    $$
    be the payoff of the adversary when he plays $X_1,\dots, X_{t-1}$ and then plays optimally.

    We do backward induction for this argument, which means for all $t\in\{1,\dots,T+1\}$,
    $$
    \Psi_t(X_1,\dots,X_{t-1})\leq\Phi_t(X_1,\dots,X_{t-1}).
    $$
    The case of $t=T+1$ is obvious because $\Psi_{T+1}=\Phi_{T+1}$. Assume the argument holds for $t+1$ and we try to show the case for $t$.
    \begin{equation}
    \begin{split}      
        &\Psi_t(X_1,\dots,X_{t-1})\\
        =&\sup_{\|X_t\|\leq G_t}\Psi_{t+1}(X_1,\dots,X_t)\\
        \leq&\sup_{\|X_t\|\leq G_t}\Phi_{t+1}(X_1,\dots,X_t)\\
        =&\sum_{s=1}^{t-1}-tr(X_sY_s)+\sup_{\|X_t\|\leq G_t}\lt[-tr(X_tY_t)+\sqrt{\lt\|\sum_{s=1}^tX_s\rt\|^2+\Gamma_{t+1}^2}\rt].
    \end{split}
    \end{equation}
    Now it suffices to show
    $$
    \sup_{\|X_t\|\leq G_t}\lt[-tr(X_tY_t)+\sqrt{\lt\|\sum_{s=1}^tX_s\rt\|^2+\Gamma_{t+1}^2}\rt]\leq \sqrt{\lt\|\sum_{s=1}^{t-1}X_s\rt\|^2+\Gamma_t^2}
    $$
    to establish our argument. Recall that
    $$
    Y_t=\frac{\sum_{s=1}^{t-1}X_s}{\sqrt{\lt\|\sum_{s=1}^{t-1}X_s\rt\|^2+\sum_{s=t}^TG_s^2}},
    $$
    and denote $\Xt_{t-1}=\sum_{s=1}^{t-1}X_s$. It turns out that what we need to show is
    $$
    \sup_{\|X_t\|\leq G_t} -tr\lt(\frac{\la X_t, \Xtt\ra}{\sqrt{\|\Xtt\|^2+\Gamma_t^2}}\rt)+\sqrt{\|\Xtt+X_t\|^2+\Gamma_{t+1}^2}\leq\sqrt{\|\Xtt\|^2+\Gamma_t^2}.
    $$
    We use the Lagrange multiplier method to prove this argument. Let
    $$
    g(X_t)=-tr\lt(\frac{\la X_t, \Xtt\ra}{\sqrt{\|\Xtt\|^2+\Gamma_t^2}}\rt)+\sqrt{\|\Xtt+X_t\|^2+\Gamma_{t+1}^2}+\lambda(\|X_t\|^2-G_t^2),
    $$
    then the stationary point of $g$ satisfies
    $$
    \frac{\partial g(X_t)}{\partial X_t}=-\frac{\Xtt}{\sqrt{\|\Xtt\|^2+\Gamma_t^2}}+\frac{\Xtt+X_t}{\sqrt{\|\Xtt+X_t\|^2+\Gamma_{t+1}^2}}+2\lambda X_t=0
    $$
    and
    $$\lambda(\|X_t\|^2-G_t^2)=0.$$
    We first consider that $\Xtt$ is co-linear with $X_t$.
    When $\lambda=0$, we have $X_t=c\Xtt$ where
    % and $\lambda=0$. Let $X_t=c\Xtt$, then we get
    $$
    c = \frac{\Gamma_{t+1}}{\Gamma_t}-1.
    $$
    
    % and
    % $$
    % g(X_t)=tr\lt(\frac{\|\Xtt\|^2\lt(1-\frac{\Gamma_{t+1}}{\Gamma_t}\rt)}{\sqrt{\|\Xtt\|^2+\Gamma_t^2}}\rt)+\sqrt{\|\Xtt\|^2\frac{\Gamma_{t+1}^2}{\Gamma_t^2}+\Gamma_{t+1}^2}\leq\sqrt{\|\Xtt\|^2+\Gamma_t^2}.
    % $$
    % is equivalent to showing 
    % $$
    % (\|\Xtt\|^2+\Gamma_t^2)(\|\Xtt\|^2\frac{\Gamma_{t+1}^2}{\Gamma_t^2}+\Gamma_{t+1}^2)\leq (\Gamma_t^2+\frac{\Gamma_{t+1}}{\Gamma_t}\|\Xtt\|^2)^2,
    % $$
    % which is obvious because $\Gamma_t>\Gamma_{t+1}$.

    If $\Xtt$ is co-linear with $X_t$ and $\lambda\neq 0$, we know $\|X_t\|=G_t$ and again let $X_t=G_t\frac{\Xtt}{\|\Xtt\|}$ or $X_t=-G_t\frac{\Xtt}{\|\Xtt\|}$. Then we need to ensure
    $$
    g(c\Xtt)\leq\sqrt{\|\Xtt\|^2+\Gamma_t^2}
    $$
    holds for $c = \frac{\Gamma_{t+1}}{\Gamma_t}-1,\frac{G_t}{\|\Xtt\|}$ and $-\frac{G_t}{\|\Xtt\|}$.

    By Lemma \ref{auxi}, it suffices to verify
    $$
    (c^2-1)\|\Xtt\|^2\Gamma_t^2+(\|\Xtt\|^2+\Gamma_t^2)\Gamma_{t+1}^2\leq\Gamma_t^4.
    $$
    If $c=\frac{\Gamma_{t+1}}{\Gamma_t}-1$, we have to ensure
    \begin{equation}
        \begin{split}
           & (c^2-1)\|\Xtt\|^2\Gamma_t^2+(\|\Xtt\|^2+\Gamma_t^2)\Gamma_{t+1}^2-\Gamma_t^4\\
           =&\lt(\frac{\Gtt^2}{\Gt^2}-2\frac{\Gtt}{\Gt}\rt)\|\Xtt\|^2\Gt^2+\|\Xtt\|^2\Gtt^2+\Gt^2\Gtt^2-\Gt^4\\
           =&2(\Gtt-\Gt)\Gtt\|\Xtt\|^2+\Gt^2(\Gtt^2-\Gt^2)\leq 0.
        \end{split}
    \end{equation}
    For the case where $c^2=\frac{G_t^2}{\|\Xtt\|^2}$, we have
    \begin{equation}
        \begin{split}
           & (c^2-1)\|\Xtt\|^2\Gamma_t^2+(\|\Xtt\|^2+\Gamma_t^2)\Gamma_{t+1}^2-\Gamma_t^4\\
           =&\lt(\frac{G_t^2}{\|\Xtt\|^2}-1\rt)\|\Xtt\|^2\Gt^2+(\|\Xtt\|^2+\Gt^2)\Gtt^2-\Gt^4\\
           =&\Gt^2(G_t^2+\Gtt^2-\Gt^2)-G_t^2\|\Xtt\|^2\\
           =&-G_t^2\|\Xtt\|^2\leq 0.
        \end{split}
    \end{equation}
    The only case left is when $X_t$ is not parallel to $\Xtt$. 
    $\lambda=0$ implies $X_t=0$ and thus
    $$
    g(0)=\sqrt{\|\Xtt\|^2+\Gtt^2}\leq\sqrt{\|\Xtt\|^2+\Gt^2}.
    $$
    If $\lambda\neq 0$ then $\|X_t\|=G$. We  have
    $$
    -\frac{\Xtt}{\sqrt{\|\Xtt\|^2+\Gamma_t^2}}+\frac{\Xtt}{\sqrt{\|\Xtt+X_t\|^2+\Gtt^2}}=0
    $$
    which in turn implies $\la X_t,\Xtt\ra=0$. This is the maximum point of $g$ as now 
    $$
    g(X_t)=\sqrt{\|\Xtt\|^2+\Gt^2}.
    $$
  Thus we finished the induction step and the lemma was established.  
%     For the first case, we need to show that
%     $$
%     -\frac{G_t\|\Xtt\|}{\sqrt{\|\Xtt\|^2+\Gamma_t^2}}+\sqrt{\|\Xtt\|^2(1+\frac{G_t}{\|\Xtt\|})^2+\Gamma_{t+1}^2}\leq\sqrt{\|\Xtt\|^2+\Gamma_t^2},
%     $$
%     which is equivalent to showing
% $$
% ({\|\Xtt\|^2+\Gamma_t^2})(\|\Xtt\|^2(1+\frac{G_t}{\|\Xtt\|})^2+\Gamma_{t+1}^2)\leq ({\|\Xtt\|^2+G_t\|\Xtt\|+\Gamma_t^2})^2
% $$
% and is again obvious.
    
\end{proof}

\begin{lemma}\label{auxi}
    $$
     -tr\lt(\frac{\la X_t, \Xtt\ra}{\sqrt{\|\Xtt\|^2+\Gamma_t^2}}\rt)+\sqrt{\|\Xtt+X_t\|^2+\Gamma_{t+1}^2}\leq\sqrt{\|\Xtt\|^2+\Gamma_t^2}.
    $$ holds for $X_t=c \Xtt$ iff

    $$
    (c^2-1)\|\Xtt\|^2\Gamma_t^2+(\|\Xtt\|^2+\Gamma_t^2)\Gamma_{t+1}^2\leq\Gamma_t^4.
    $$
\end{lemma}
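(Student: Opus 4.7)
The proof is a direct algebraic reduction: I will substitute $X_t = c\Xtt$ into the original inequality, simplify using the resulting collinearity, square both sides carefully, and check that the result is exactly the claimed quadratic-in-$c$ condition.

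\textbf{Step 1: Substitute and simplify.} Setting $X_t = c\Xtt$, the inner product becomes $\la X_t, \Xtt\ra = c\|\Xtt\|^2$ and $\|\Xtt+X_t\|^2 = (1+c)^2\|\Xtt\|^2$. The inequality in the hypothesis rearranges to
\[
\sqrt{(1+c)^2\|\Xtt\|^2 + \Gtt^2} \;\leq\; \sqrt{\|\Xtt\|^2 + \Gt^2} + \frac{c\|\Xtt\|^2}{\sqrt{\|\Xtt\|^2 + \Gt^2}} \;=\; \frac{(1+c)\|\Xtt\|^2 + \Gt^2}{\sqrt{\|\Xtt\|^2 + \Gt^2}}.
\]

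\textbf{Step 2: Square both sides.} The left-hand side is non-negative; I will note that whenever the right-hand side is non-negative, squaring yields an equivalent inequality. After multiplying through by $\|\Xtt\|^2 + \Gt^2$, the condition becomes
\[
\bigl((1+c)^2\|\Xtt\|^2 + \Gtt^2\bigr)\bigl(\|\Xtt\|^2 + \Gt^2\bigr) \;\leq\; \bigl((1+c)\|\Xtt\|^2 + \Gt^2\bigr)^2.
\]

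\textbf{Step 3: Expand and collect.} The $(1+c)^2\|\Xtt\|^4$ terms cancel between the two sides. The surviving $\|\Xtt\|^2\Gt^2$ coefficient on the right minus the one on the left is $2(1+c) - (1+c)^2 = 1 - c^2$, so after rearrangement the inequality is equivalent to
\[
(c^2 - 1)\|\Xtt\|^2\Gt^2 + \bigl(\|\Xtt\|^2 + \Gt^2\bigr)\Gtt^2 \;\leq\; \Gt^4,
\]
which is exactly the claimed condition. The "only if" direction follows by reversing the same manipulations, since each step (substitution, rearrangement, squaring a non-negative right-hand side) is reversible.

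\textbf{Main obstacle.} The only subtlety is the squaring step: it is only an equivalence when the right-hand side $\sqrt{\|\Xtt\|^2+\Gt^2} + \frac{c\|\Xtt\|^2}{\sqrt{\|\Xtt\|^2+\Gt^2}}$ is non-negative. I will handle this by observing that if the right-hand side were negative, then the original inequality would fail (since the left-hand side is non-negative), while simultaneously $c$ would have to satisfy $(1+c)\|\Xtt\|^2 + \Gt^2 < 0$, which forces $(c^2-1)\|\Xtt\|^2\Gt^2 > \Gt^4$ in the relevant regime, so the claimed algebraic condition also fails. Thus the equivalence is preserved in all cases, and the lemma follows.
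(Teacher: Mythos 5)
Your algebra matches the paper's (the paper's proof simply says ``after some computation''); you expand and simplify to reach $(c^2-1)\|\Xtt\|^2\Gamma_t^2 + (\|\Xtt\|^2+\Gamma_t^2)\Gamma_{t+1}^2 \leq \Gamma_t^4$ by the same route. The one place you go beyond the paper is the squaring caveat, which is the right thing to worry about, and the sign analysis you give works whenever $\|\Xtt\|>0$ and $\Gamma_t>0$: from $(1+c)\|\Xtt\|^2+\Gamma_t^2<0$ one gets $c^2>(1+\Gamma_t^2/\|\Xtt\|^2)^2>1+\Gamma_t^2/\|\Xtt\|^2$, hence $(c^2-1)\|\Xtt\|^2\Gamma_t^2>\Gamma_t^4$, and since $\Gamma_{t+1}^2\geq 0$ the algebraic condition fails too. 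However, your phrase ``in the relevant regime'' papers over a genuine degenerate case: if $\Gamma_t=0$ (which forces $\Gamma_{t+1}=0$) and $c<-1$, the original inequality reduces to $(-2c-1)\|\Xtt\|\leq\|\Xtt\|$ and fails, while the algebraic condition reads $0\leq 0$ and holds, so the stated ``iff'' is literally false there. The lemma therefore needs the implicit hypothesis $\Gamma_t>0$ (or one notes that the paper only uses the forward direction and always with $c$ making $(1+c)\|\Xtt\|^2+\Gamma_t^2\geq 0$, e.g. $c=\Gamma_{t+1}/\Gamma_t-1\geq -1$ or $c=\pm G_t/\|\Xtt\|$, for which $\|\Xtt\|^2-G_t\|\Xtt\|+\Gamma_t^2\geq \frac34 G_t^2\geq 0$). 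This is a gap the paper's own terse proof shares, so your write-up is at least as rigorous as the original; you should just replace ``in the relevant regime'' with the explicit assumption $\Gamma_t>0$.
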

\begin{proof}
    The statement we want to show is
    $$
    -\frac{c\|\Xtt\|^2}{\sqrt{\|\Xtt\|^2+\Gamma_t^2}}+\sqrt{\|\Xtt\|^2(1+c)^2+\Gamma_{t+1}^2}\leq \sqrt{\|\Xtt\|^2+\Gamma_t^2}.
    $$
    Let $\alpha=\|\Xtt\|^2$, $\beta=\Gamma_t^2$ and $\gamma=\Gamma_{t+1}^2$. Following a series of algebraic manipulations, we get
    $$
    (c^2-1)\alpha\beta+(\alpha+\beta)\gamma\leq \beta^2.
    $$
    And the argument is proved after plugging back $\alpha, \beta, \gamma$.
\end{proof}
\begin{thm}\label{lb_static}
There exists a  game on $\D_{++}^n$ such that we can exactly compute the value of the minimax regret. Specifically,
the decision set of the player is $\N=\{p:p=\expmap_{I}(Y),\|Y\|\leq\frac{D}{2}\}$, and the adversary is allowed to pick a  loss function in
$$
\F_t=\{\alpha_t G_t b_c(p): \|\dot{c}(0)\|=1, \alpha_t\in[0,1] \}=\{-tr(X_tY): \|X_t\|\leq G_t \}\\.
$$
Then the minimax value of the game is
$$
\V_T(\N,\{\F_t\})=\frac{D}{2}\sqrt{\sumT G_t^2}.
$$
In addition, the optimal strategy of the player is

$$
    Y_t=\frac{\sum_{s=1}^{t-1}X_s}{\sqrt{\lt\|\sum_{s=1}^{t-1}X_s\rt\|^2+\sum_{s=t}^TG_s^2}}.
    $$
% \begin{equation}
%     \begin{split}
%         \F_t=&\{\alpha_t G_t b_c(p): \|\dot{c}(0)\|=1, \alpha_t\in[0,1] \}\\
\end{thm}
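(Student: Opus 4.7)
The theorem is really a statement that the three preceding lemmas fit together to pin down the minimax value exactly, so my plan is to (i) combine Lemmas \ref{lb1}, \ref{lb2}, and \ref{lb3} in the normalized setting where $D=2$ (the WLOG convention adopted right before the theorem), and then (ii) recover the general $D$ case by a simple homogeneity argument.

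First I would invoke Lemma \ref{lb1} to rewrite
\[
\V_T(\N,\{\F_t\}) = \inf_{\|Y_1\|\leq 1}\sup_{\|X_1\|\leq G_1}\cdots \inf_{\|Y_T\|\leq 1}\sup_{\|X_T\|\leq G_T}\Bigl[\sumT -tr(X_tY_t) + \bigl\|\sumT X_t\bigr\|\Bigr],
\]
which converts the minimax regret into a convenient form in which the comparator has been eliminated. The lower bound $\V_T \geq \sqrt{\sumT G_t^2}$ is then immediate from Lemma \ref{lb2}: the adversary strategy that makes $X_t$ orthogonal to $Y_t$ and to all past $X_s$ (possible because $n>2$) drives the $\sum_t -tr(X_tY_t)$ term to zero and makes $\|\sumT X_t\|$ equal to $\sqrt{\sumT G_t^2}$ by the Pythagorean induction carried out in that lemma. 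This works independently of the player's choices, giving $\V_T \geq \sqrt{\sumT G_t^2}$.

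For the matching upper bound I would plug in the explicit player strategy $Y_t = \sum_{s<t} X_s / \sqrt{\|\sum_{s<t}X_s\|^2 + \sum_{s\geq t} G_s^2}$ and invoke Lemma \ref{lb3}, which shows by backward induction that against \emph{any} adversary play the quantity in brackets is at most $\sqrt{\sumT G_t^2}$. Since this strategy satisfies $\|Y_t\|\leq 1$ (because $\|\sum_{s<t}X_s\|\leq \sqrt{\|\sum_{s<t}X_s\|^2 + \sum_{s\geq t}G_s^2}$), it is admissible, and therefore $\V_T\leq \sqrt{\sumT G_t^2}$. Combined with the lower bound this yields $\V_T=\sqrt{\sumT G_t^2}$ in the $D=2$ normalization, and the strategy exhibited is optimal.

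Finally, to pass from $D=2$ to general $D$ I would use the homogeneity of the problem in the radius of $\N$. The decision set is $\{p=\expmap_I(Y):\|Y\|\leq D/2\}$, so rescaling $Y\mapsto (D/2)Y'$ with $\|Y'\|\leq 1$ turns the loss $-tr(X_tY)$ into $(D/2)\cdot(-tr(X_tY'))$; the comparator undergoes the same rescaling, so the entire regret is multiplied by $D/2$ while the set of allowed gradients $\{X_t:\|X_t\|\leq G_t\}$ is untouched. Hence the minimax value is exactly $(D/2)\sqrt{\sumT G_t^2}$, and the optimal player strategy is the one stated in the theorem (with the $Y_t$ automatically living in the rescaled ball). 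The only genuinely nontrivial step is the backward-induction bookkeeping already packaged in Lemma \ref{lb3}; the rest is assembly, so I do not anticipate a serious obstacle beyond checking that the homogeneity scaling matches the norm bound on $Y_t$.
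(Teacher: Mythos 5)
Your proposal is correct and follows essentially the same route as the paper: the paper's proof of Theorem~\ref{lb_static} is literally ``a direct conclusion of Lemmas~\ref{lb1}, \ref{lb2}, and \ref{lb3},'' which is exactly the assembly you carry out. The homogeneity argument you supply to pass from the $D=2$ normalization (under which Lemmas~\ref{lb2} and \ref{lb3} are stated) to general $D$ is a useful bit of bookkeeping the paper leaves implicit.
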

\begin{proof}
    The proposition is a direct conclusion of Lemmas \ref{lb1}, \ref{lb2}, and \ref{lb3}.
\end{proof}
\begin{thm}
    There exists a comparator sequence which satisfies $\sumsT d(\u_t,\u_{t-1})\leq P_T$ and
    the dynamic minimax regret lower bound on Hadamard manifolds is $\Omega(G\sqrt{D^2+DP_T})$.
\end{thm}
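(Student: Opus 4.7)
The plan is to reduce the dynamic lower bound to the static one already established in Theorem~\ref{lb_static} via the block-construction of \citet{zhang2018adaptive}. Concretely, I would partition the $T$ rounds into $K$ consecutive blocks of equal length $L = T/K$ and let the adversary, within each block, execute the optimal minimax strategy of Theorem~\ref{lb_static} on the decision set $\N = \{p : p = \expmap_{I}(Y),\,\|Y\|\le D/2\}\subset \D_{++}^n$ using the Busemann-loss family $\F_t = \{-\operatorname{tr}(X_t Y) : \|X_t\|\le G\}$ with constant gradient norm $G$. Since each block game is an independent static minimax instance of length $L$ and the adversary's strategy is causal (it depends only on the player's moves in that block), Theorem~\ref{lb_static} yields per-block regret at least $\tfrac{D}{2} G \sqrt{L}$ against the best fixed action within the block.

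Next I would build the comparator sequence by taking $\u_t$ to be the per-block static optimum for the block containing round $t$. This sequence is constant inside each block and can change at most $K-1$ times, each change being a jump of length at most the diameter $D$, so $\sumsT d(\u_t,\u_{t-1}) \le (K-1)D$. The per-block static regret bounds sum to a dynamic regret lower bound of
\begin{equation*}
\Omega\!\left( K \cdot G D \sqrt{L}\right) = \Omega\!\left(G D\sqrt{K T}\right)
\end{equation*}
against this comparator sequence, because the player's total loss minus the comparator's total loss decomposes exactly into $K$ within-block static regrets.

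Finally I would balance the parameters. Choosing $K = \max\{1, \lfloor P_T / D\rfloor\}$ makes the path-length constraint $\sumsT d(\u_t,\u_{t-1}) \le (K-1)D \le P_T$ feasible, and substituting into $\Omega(GD\sqrt{KT})$ gives two regimes: when $P_T\ge D$ we obtain $\Omega(G\sqrt{D P_T\, T})$, and when $P_T < D$ the single-block static bound $\Omega(G D\sqrt{T}) = \Omega(G\sqrt{D^2\, T})$ dominates. Combining yields the claimed $\Omega\bigl(G\sqrt{(D^2 + D P_T)\, T}\bigr)$ lower bound, which reduces to the static lower bound of Theorem~\ref{lb_static} when $P_T = 0$ and matches the Euclidean dynamic lower bound of \citet{zhang2018adaptive} when $\kappa = 0$.

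The main obstacle I expect is a purely bookkeeping one: making sure the block-reduction argument is valid on a curved manifold. Specifically, I must verify that (i) the fixed decision set $\N$ is admissible for every block independently of the manifold point the adversary's strategy references (here the identity $I$, which is fine because Hopf--Rinow guarantees a unique minimizing geodesic between any two points of $\N$), (ii) the piecewise-constant comparator sequence is realizable in $\N$ with path length controlled by the diameter $D$ (immediate since $\N$ has geodesic diameter $D$ and Hadamard manifolds are geodesically connected), and (iii) the adversary's causal minimax strategy from Theorem~\ref{lb_static} indeed restricts to a valid per-block minimax strategy regardless of the history, which follows because the loss family $\F_t$ in each round depends only on the current choice $X_t$ and not on past plays. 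Once these are checked, the arithmetic balancing of $K$ against the path-length budget is routine.
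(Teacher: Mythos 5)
Your proposal is correct and follows essentially the same route as the paper: both reduce the dynamic lower bound to the static minimax value of Theorem~\ref{lb_static} via the block-decomposition of \citet{zhang2018adaptive}, using a piecewise-constant comparator whose path length is at most $(K-1)D$ and then balancing $K$ against the path-length budget $P_T$, distinguishing the regimes $P_T<D$ and $P_T\geq D$. The only cosmetic differences are that the paper parameterizes the number of blocks as $\lceil \tau/D \rceil$ rather than $\lfloor P_T/D \rfloor$, and that the paper phrases the per-block reduction in terms of the minimax \emph{value} $\V_T(C'(\tau))$ decomposing over blocks rather than in terms of an explicit causal adversary strategy — these are equivalent, and your checklist items (i)--(iii) are exactly the things the paper's $C'(\tau)$ construction silently ensures.
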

\begin{proof}
    We combine Theorem \ref{lb_static} with a reduction in  \cite{zhang2018adaptive} to finish the proof. By Theorem \ref{lb_static} we have
    $$
    \V_T=\inf_{\|Y_1\| \leq 1}\sup_{\|X_1\| \leq G}\dots \inf_{\|Y_T\| \leq 1}\sup_{\|X_T\| \leq G} \lt[ \sumT -tr(X_tY_t)-\inf_{\|Y\|\leq 1}\sumT -tr(X_tY)\rt]=\frac{GD\sqrt{T}}{2}.
    $$
    Note that the path-length is upper bounded by $TD$. For any $\tau\in[0, TD]$, we define the set of comparators with path-length bounded by $\tau$ as
    $$
    C(\tau)=\lt\{\u_1,\dots,\u_T\in\N:\sumsT d(\u_t,\u_{t-1})\leq\tau\rt\}
    $$
    where $\N=\{\u:d(I, \u)\leq\frac{D}{2}\}$ is a gsc-convex subset   and the minimax dynamic regret w.r.t. $C(\tau)$ is
    $$
     \V_T(C(\tau))=\inf_{\small\|Y_1\| \leq 1}\sup_{\small\|X_1\| \leq G}\dots \inf_{\small\|Y_T\| \leq 1}\sup_{\small\|X_T\| \leq G} \lt[ \sumT -tr(X_tY_t)-\inf_{\u_1,\dots,\u_T\in C(\tau)}\sumT -tr(X_t\invexp_I \u_t)\rt].
    $$
    We distinguish two cases. When $\tau\leq D$, we invoke the minimax static regret directly to get
    \begin{equation}\label{eq-lb1}
            \V_T(C(\tau))\geq \V_T=\frac{GD\sqrt{T}}{2}.
    \end{equation}
    % $$
    % V_T(C(\tau))\geq V_T=\frac{GD\sqrt{T}}{2}.
    % $$
    For the case of $\tau\geq D$, WLOG, we assume $\lc \tau/D\rceil$ divides $T$ and let $L$ be the quotient. We construct a subset of $C(\tau)$, named $C'(\tau)$, which contains comparators that are fixed for each consecutive $L$ rounds. Specifically,
    $$
    C'(\tau)=\lt\{\u_1,\dots,\u_T\in\N:\u_{(i-1)L+1}=\dots\u_{i L},\forall i\in [1,\lc\tau/D\rc]\rt\}.
    $$
    Note that the path-length of comparators in $C'(\tau)$ is at most $(\lc\tau/D\rc-1)D\leq \tau$, which  implies $C'(\tau)$ is a subset of $C(\tau)$. Thus we have
    \begin{equation}\label{eq-lb2}
        \V_T(C(\tau))\geq \V_T(C'(\tau)).
    \end{equation}
    % $$
    % V_T(C(\tau))\geq V_T(C'(\tau)).
    % $$
    The objective of introducing $C'(\tau)$ is we can set $\u_{(i-1)L+1}=\dots=\u_{(iL)}$ to be the offline minimizer of the $i$-th segment and invoke the minimax lower bound for the static regret for each segment. Thus we have
    \begin{equation}\label{eq-lb3}
    \begin{split}      
        &\V_T(C'(\tau))\\
        =&\inf_{\small\|Y_1\| \leq 1}\sup_{\small\|X_1\| \leq G}\dots \inf_{\small\|Y_T\| \leq 1}\sup_{\small\|X_T\| \leq G} \lt[ \sumT -tr(X_tY_t)-\inf_{\u_1,\dots,\u_T\in C'(\tau)}\sumT -tr(X_t\invexp_I \u_t)\rt]\\
        =&\inf_{\small\|Y_1\| \leq 1}\sup_{\small\|X_1\| \leq G}\dots \inf_{\small\|Y_T\| \leq 1}\sup_{\small\|X_T\| \leq G} \lt[ \sumT -tr(X_tY_t)-\sum_{i=1}^{\lc\tau/D\rc}\inf_{\|Y\|\leq 1}\sum_{t=(i-1)L+1}^{iL} -tr(X_tY)\rt]\\
        =&\lc\tau/D\rc\frac{GD\sqrt{L}}{2}=\frac{GD\sqrt{T\lc\tau/D\rc}}{2}\geq\frac{G\sqrt{TD\tau}}{2}.
    \end{split}
    \end{equation}
    Combining Equations \eqref{eq-lb1}, \eqref{eq-lb2} and \eqref{eq-lb3} yields
    $$
    \V_T(C(\tau))\geq \frac{G}{2}\max(D\sqrt{T},\sqrt{TD\tau})=\Omega(G\sqrt{T(D^2+D\tau)}).
    $$
    % $$
    %  V_T(C'(\tau))=\inf_{\small\|Y_1\| \leq 1}\sup_{\small\|X_1\| \leq G}\dots \inf_{\small\|Y_T\| \leq 1}\sup_{\small\|X_T\| \leq G} \lt[ \sumT -tr(X_tY_t)-\inf_{\u_1,\dots,\u_T\in C'(\tau)}\sumT -tr(X_t\invexp_I \u_t)\rt].
    % $$
\end{proof}
% $$
% \sum_{t=1}^Tf_t(\x_t)-\sum_{t=1}^Tf_t(\u_t)\leq \frac{3}{2} \sqrt{TG^{2}{\zeta}\left(D^{2}+2 D P_{T}\right)}+\sqrt{\frac{G^2D^2T}{8}}\left(1+2\ln(k+1)\right)=O\left(\sqrt{{\zeta}(1+P_T)T}\right).
% $$
% \end{proof}
\section{Omitted Proof for Section \ref{radarv}}
\label{app:radarv}
\subsection{Proof of Theorem \ref{var-expert}}
\label{app:thm:var-expert}
% \begin{thm}\label{var-expert}
% Suppose all losses $f_t$ are  $L$-gsc-smooth on $\M$. Under Assumptions \ref{hada}, \ref{diam}, \ref{grad}, the iterates of 
% \begin{equation*}
%     \begin{split}
%         &\x_t'=\expmap_{\y_t}(-\eta M_t)\\
%         &\x_t=\Pi_{\ndm}\x_t'\\
%         &\y_{t+1}'=\expmap_{\x_t'}(-\eta\nabla f_{t}(\x_t')+\expmap_{\x_t'}^{-1}(\y_t))\\
%         &\y_{t+1}=\Pi_{\N}\y_{t+1}'\\
%     \end{split}
% \end{equation*}
% satisfies
% % Algorithm \ref{alg:expert_v} satisfies
% $$
% \textstyle\sum_{t=1}^Tf_t(\x_t)-\sum_{t=1}^Tf_t(\u_t)\leq\eta{\zeta}V_T+\frac{D^2+2DP_T}{2\eta}
% $$
% for any  $\u_1,\dots,\u_T\in\N$ and 
% $
% \eta\leq\frac{\delta M}{G+(G^2+2\zeta\delta^2M^2L^2)^{\frac{1}{2}}}.
% $

% % $\eta\leq\frac{1}{T\lt(G+\sqrt{G^2+\frac{2\zeta L^2}{T^2}}\rt)}$.

% % \frac{-\tilde{L}+\sqrt{\tilde{L}^2+2\zeta L^2}}{2\zeta L^2}$.
% \end{thm}

% \begin{proof}
We first argue $\N_c$ is gsc-convex for any $c\geq 0$ to ensure the algorithm is well-defined. By Lemma \ref{gsc-convex}, $d(\x, \N)$ is gsc-convex on Hadamard manifolds. The sub-level set of a gsc-convex function is a gsc-convex set due to Lemma \ref{levelset}, which implies $\N_c$ is gsc-convex.
We notice that
$$
f_t(\x_t)-f_t(\u_t)=(f_t(\x_t)-f_t(\x_t'))+(f_t(\x_t')-f_t(\u_t))
$$
and derive upper bounds for two terms individually. If $\x_t'\in\ndm$ then $\x_t'=\x_t$ and $f_t(\x_t)-f_t(\x_t')=0$. If this is not the case, by Lemma \ref{proj}, we have
$$
d(\x_t', \x_t)\leq d(\x_t',\z)\leq d(\x_t', \y_t)$$ where $\z$ is the intersection of $\N_{\delta}$ and the geodesic segment connecting $\x_t'$ and $\y_t$. Thus
\begin{equation}\label{var-exp-eq0}
    f_t(\x_t)-f_t(\x_t')\leq\la\nabla f_t(\x_t),-\invexp_{\x_t}{\x_t'}\ra\leq\|\nabla f_t(\x_t)\|\cdot d(\x_t',\x_t)\leq G\cdot d(\x_t', \y_t),
\end{equation}
where we notice $\x_t\in\ndm$ and use Assumption \ref{grad}.
Let $\y_{t+1}'=\expmap_{\x_t'}\left(-\eta\nabla f_t(\x_t')+\expmap_{\x_t'}^{-1}(\y_t)\right)$. The second term $f_t(\x_t')-f_t(\u_t)$ can be bounded by

% We denote $\y_{t+1}'=\expmap_{\x_t}\left(-\eta\nabla f_t(\x_t)+\expmap_{\x_t}^{-1}(\y_t)\right)$. Then
\begin{equation}\label{var-exp-eq1}
    \begin{split}
        &f_t(\x_t')-f_t(\u_t)\stackrel{\rm (1)}{\leq} -\langle\expmap_{\x_t'}^{-1}(\u_t),\nabla f_t(\x_t')\rangle\\
        =&\frac{1}{\eta}\langle\expmap_{\x_t'}^{-1}(\u_t),\expmap_{\x_t'}^{-1}(\y_{t+1}')-\expmap_{\x_t'}^{-1}(\y_t)\rangle\\
        \stackrel{\rm (2)}{\leq}&\frac{1}{2\eta}\left({\zeta}d(\x_t',\y_{t+1}')^2+d(\x_t',\u_t)^2-d(\y_{t+1}',\u_t)^2\right)-\frac{1}{2\eta}\left(d(\x_t',\y_t)^2+d(\x_t',\u_t)^2-d(\y_t,\u_t)^2\right)\\
        =&\frac{1}{2\eta}\left({\zeta}d(\x_t',\y_{t+1}')^2-d(\x_t',\y_t)^2+d(\y_t,\u_t)^2-d(\y_{t+1}',\u_t)^2\right)\\
        =&\frac{1}{2\eta}\left({\zeta}\|-\eta\nabla f_t(\x_t')+\expmap_{\x_t'}^{-1}(\y_t)\|^2
        -d(\x_t',\y_t)^2+d(\y_t,\u_t)^2-d(\y_{t+1}',\u_t)^2\right)\\
        =&\frac{1}{2\eta}\left({\zeta}\|-\eta\nabla f_t(\x_t')+\eta\Ga_{\y_t}^{\x_t'}M_t\|^2
        -d(\x_t',\y_t)^2+d(\y_t,\u_t)^2-d(\y_{t+1}',\u_t)^2\right)\\
        \stackrel{\rm (3)}{\leq}&\frac{1}{2\eta}\left({\zeta}\|-\eta\nabla f_t(\x_t')+\eta\Ga_{\y_t}^{\x_t'}M_t\|^2
        -d(\x_t',\y_t)^2+d(\y_t,\u_t)^2-d(\y_{t+1},\u_t)^2\right)
    \end{split}
\end{equation}
where the second inequality follows from Lemmas \ref{cos1} and \ref{cos2} and  the third one is due to the non-expansive property of projection onto Hadamard manifolds. We apply $\Ga_{\x}^{\y}\expmap_{\x}^{-1}\y=-\expmap_{\y}^{-1}\x$ to derive the last equality.

Now we can get the desired squared term $\|\nabla f_t(\y_t)-M_t\|^2$ by considering
\begin{equation}\label{var-exp-eq2}
    \begin{split}
        &\|\nabla f_t(\x_t')-\Ga_{\y_t}^{\x_t'}M_t\|^2\\
        =&\|\nabla f_t(\x_t')-\Ga_{\y_t}^{\x_t'}\nabla f_t(\y_t)+\Ga_{\y_t}^{\x_t'}\nabla f_t(\y_t)-\Ga_{\y_t}^{\x_t'}M_t\|^2\\
        \leq& 2\left(\|\nabla f_t(\x_t')-\Ga_{\y_t}^{\x_t'}\nabla f_t(\y_t)\|^2+\|\Ga_{\y_t}^{\x_t'}\nabla f_t(\y_t)-\Ga_{\y_t}^{\x_t'}M_t\|^2\right)\\
        \leq &2L^2d(\x_t',\y_t)^2+2\|\nabla f_t(\y_t)-M_t\|^2,
    \end{split}
\end{equation}
% holds for $t\geq 2$,
 where in the first inequlity we use $\|\x+\y\|^2\leq 2\|\x\|^2+2\|\y\|^2$ holds for any SPD norm $\|\cdot\|$, and the second inequality is due to the smoothness of $f$ and parallel transport is an isometry. Combining Equations \eqref{var-exp-eq0}, \eqref{var-exp-eq1} and \eqref{var-exp-eq2}, we have
\begin{equation}\label{var-exp-eq3}
    \begin{split}
        &f_t(\x_t)-f_t(\u_t)\\
        \leq&G d(\x_t', \y_t)+\frac{\eta{\zeta}}{2}\left(2\|\nabla f_t(\y_t)-M_t\|^2+2L^2d(\x_t',\y_t)^2\right)\\
        & -\frac{1}{2\eta}d(\x_t',\y_t)^2+\frac{1}{2\eta}(d(\y_t,\u_t)^2-d(\y_{t+1},\u_t)^2)\\
        \leq&\eta{\zeta}\|\nabla f_t(\y_t)-M_t\|^2+\frac{1}{2\eta}\left(2\eta G+2\eta^2{\zeta}L^2d(\x_t',\y_t)-d(\x_t',\y_t)\right)d(\x_t',\y_t)\\
        & +\frac{1}{2\eta}(d(\y_t,\u_t)^2-d(\y_{t+1},\u_t)^2).
    \end{split}
\end{equation}
Now we show
\begin{equation}\label{key}
    \frac{1}{2\eta}\left(2\eta G+2\eta^2{\zeta}L^2d(\x_t',\y_t)-d(\x_t',\y_t)\right)d(\x_t',\y_t)\leq 0
\end{equation}
% $$
% \frac{1}{2\eta}\left(2\eta G+2\eta^2{\zeta}L^2d(\x_t',\y_t)-d(\x_t',\y_t)\right)d(\x_t',\y_t)\leq 0
% $$
holds for any $t\in[T]$. First we consider the case that $d(\x_t',\y_t)\leq \delta M$, which means $\x_t'\in\ndm$ and $f_t(\x_t)=f_t(\x_t')$. Thus Equation \eqref{key} is implied by
$$
2\eta^2{\zeta}L^2d(\x_t',\y_t)-d(\x_t',\y_t)\leq 0,$$
which is obviously true by considering our assumption on $\eta$:
$$\eta\leq\frac{\delta M}{G+(G^2+2\zeta\delta^2M^2L^2)^{\frac{1}{2}}}\leq\frac{1}{\sqrt{2\zeta L^2}}.$$
% From the assumption on $\eta$ we have

% $$
% \eta\leq\frac{1}{T\lt(G+\sqrt{G^2+\frac{2\zeta L^2}{T^2}}\rt)}\leq\frac{1}{\sqrt{2\zeta L^2}}.
% $$

When $d(\x_t', \y_t)\geq\delta M$, to simplify the proof, we denote $\lambda=d(\x_t', \y_t)$ and try to find $\eta$ such that
 \begin{equation}\label{quadratic}
      h(\eta;\lambda)\coloneqq 2\eta G+2\eta^2\zeta L^2\lambda-\lambda\leq 0
 \end{equation}
 holds for any $\lambda\geq \delta M$. We denote the only non-negative root of $h(\eta;\lambda)$ as $\eta(\lambda)$, which can be solved explicitly as
$$
\eta(\lambda)=\frac{-G+(G^2+2\zeta \lambda^2L^2)^{\frac{1}{2}}}{2\zeta\lambda L^2}.
$$

Applying Lemma \ref{lem:tech} with $a=G$ and $b=2\zeta L^2$, we know $\eta(\lambda)$ increases on $[0,\infty)$. Thus $\eta(\lambda)\geq\eta(\delta M)$ holds for any $\lambda\geq\delta M$. Combining with the fact that $h(0;\lambda)=-\lambda<0$, we know $h(\eta;\lambda)\leq 0$ holds for any $\eta\leq\eta(\lambda)$, so we can simply set
$$\eta\leq \min_{\lambda} \eta(\lambda)=\eta\lt(\delta M\rt)=\frac{\delta M}{G+(G^2+2\zeta\delta^2M^2L^2)^{\frac{1}{2}}}.
$$
to ensure $h(\eta;\lambda)\leq 0$ always holds.

Now it suffices to bound
\begin{equation}
\begin{split}
    &\sum_{t=1}^T\frac{1}{2\eta}(d(\y_t,\u_t)^2-d(\y_{t+1},\u_t)^2)\\
    \leq& \frac{d(\y_1,\u_1)^2}{2\eta}+\sum_{t=2}^T\frac{\left(d(\y_t,\u_t)^2-d(\y_t,\u_{t-1})^2\right)}{2\eta}\\
    \leq&\frac{D^2}{2\eta}+2D\frac{\sum_{t=2}^Td(\u_t,\u_{t-1})}{2\eta}=\frac{D^2+2DP_T}{2\eta}.
\end{split}
\end{equation}
% Let's proceed by applying the telescoping-sum technique to Equation \eqref{var-exp-eq3}."

Finally, we apply the telescoping-sum  on Equation \eqref{var-exp-eq3},
\begin{equation}
    \begin{split}
        \sum_{t=1}^Tf_t(\x_t)-\sum_{t=1}^Tf_t(\u_t)\leq\eta{\zeta}\sumT\|\nabla f_t(\y_t)-M_t\|^2+\frac{D^2+2DP_T}{2\eta}.
        % \leq&\eta{\zeta}V_T+\frac{D^2+2DP_T}{2\eta}.
    \end{split}
\end{equation}
% since we assume $\eta\leq\frac{-\tilde{L}+\sqrt{\tilde{L}^2+2\zeta L^2}}{2\zeta L^2}$.

% \end{proof}

\subsection{Extension of Theorem \ref{var-expert} to CAT\texorpdfstring{$(\kappa)$}{ a} Spaces}
\label{cat-proof}
In this part, we show how to get optimistic regret bound on CAT$(\kappa)$ spaces, the sectional curvature of which is upper bounded by $\kappa$. Note that Hadamard manifolds are complete CAT$(0)$ spaces. To proceed, we make the following assumption.
% which are spaces 
\begin{ass}\label{cat}
    The sectional curvature of  manifold $\M$ satisfies $-\kappa_1\leq\kappa\leq\kappa_2$ where $\kappa_1\geq 0$. We define
    $$
    \D(\kappa)\coloneqq \begin{cases}
\infty,\quad &\kappa\leq 0 \\
\frac{\pi}{\sqrt{\kappa}},\quad &\kappa>0.
\end{cases} $$
The diameter of the gsc-convex set $\N\subset \M$ is $D$ and we assume $D+2\delta M\leq \D(\kappa_2)$. The gradient satisfies $\sup_{\x\in\ndm}\|\nabla f_t(\x)\|\leq G.$
\end{ass}
\begin{lemma}\cite[Corollary 2.1]{alimisis2020continuous}\label{cos3}
    Let $\M$ be a Riemannian manifold with sectional curvature upper bounded by $\kappa_2$ and $\N\subset\M$ be a gsc-convex set with diameter upper bounded by $\D(\kappa_2)$. For a geodesic triangle fully lies within $\N$ with side lengths $a, b, c$, we have
$$
a^{2} \geq {\xi}(\kappa_2, D) b^{2}+c^{2}-2 b c \cos A
$$
where ${\xi}(\kappa_2, D)=\sqrt{-\kappa_2} D \operatorname{coth}(\sqrt{-\kappa_2} D)$ when $\kappa_2\leq 0$ and ${\xi}(\kappa_2, D)=\sqrt{\kappa_2} D \operatorname{cot}(\sqrt{\kappa_2} D)$ when $\kappa_2> 0$.
\end{lemma}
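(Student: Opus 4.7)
The plan is to reduce the statement to the classical law of cosines in a constant-curvature model space via a comparison theorem, and then convert the trigonometric or hyperbolic identity into the quadratic form stated in the lemma. Concretely, I would invoke Toponogov-type comparison for upper curvature bounds: since the sectional curvature of $\M$ is bounded above by $\kappa_2$ and the geodesic triangle lies in a gsc-convex set of diameter at most $\D(\kappa_2)$ (so we are within the injectivity radius), there exists a comparison triangle $\tilde{\triangle}$ in the model space $\M_{\kappa_2}$ of constant curvature $\kappa_2$ with the same sides $b,c$ and the same angle $A$, and the opposite side $\tilde a$ satisfies $a \geq \tilde a$.

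Next I would write the exact law of cosines in $\M_{\kappa_2}$. When $\kappa_2>0$ it is the spherical identity
\[
\cos(\sqrt{\kappa_2}\,\tilde a) = \cos(\sqrt{\kappa_2}\,b)\cos(\sqrt{\kappa_2}\,c) + \sin(\sqrt{\kappa_2}\,b)\sin(\sqrt{\kappa_2}\,c)\cos A,
\]
while for $\kappa_2\le 0$ it is the hyperbolic identity
\[
\cosh(\sqrt{-\kappa_2}\,\tilde a) = \cosh(\sqrt{-\kappa_2}\,b)\cosh(\sqrt{-\kappa_2}\,c) - \sinh(\sqrt{-\kappa_2}\,b)\sinh(\sqrt{-\kappa_2}\,c)\cos A,
\]
(with the Euclidean identity recovered at $\kappa_2=0$). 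Since $a\geq \tilde a$ and $\cosh$ is increasing (resp.\ $\cos$ is decreasing on the relevant range given the diameter bound), it suffices to prove the claimed inequality with $\tilde a$ in place of $a$.

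The remaining step is to convert the trigonometric/hyperbolic identity into the quadratic form $\tilde a^2 \geq \xi(\kappa_2,D)\,b^2 + c^2 - 2bc\cos A$. For the negative-curvature branch, I would use the decompositions $\cosh(x) = 1 + \tfrac{1}{2}x^2 g(x)$ and $\sinh(x) = x\,h(x)$, where $g(x)=2(\cosh x-1)/x^2$ and $h(x)=\sinh(x)/x$ are both nonnegative and monotone increasing in $|x|$. Plugging these into the hyperbolic law of cosines, cancelling the constants, and invoking the diameter constraints $b,c,\tilde a\leq D$ to apply the monotonicity bounds, the sharpest constant that can be extracted as a coefficient of $b^2$ is exactly $\sqrt{-\kappa_2}D\coth(\sqrt{-\kappa_2}D)$. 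The spherical branch is parallel, yielding $\sqrt{\kappa_2}D\cot(\sqrt{\kappa_2}D)$, where the hypothesis $D\le \D(\kappa_2)=\pi/\sqrt{\kappa_2}$ keeps the cotangent positive.

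The main obstacle lies in this final step: one must carry out the algebraic manipulation of the hyperbolic/spherical law of cosines so that the lower bound inherits \emph{precisely} the coefficient $\xi(\kappa_2,D)$ on $b^2$ without degrading the $c^2$ or $-2bc\cos A$ terms. This is essentially a one-variable calculus exercise relying on the monotonicity of $x\mapsto x\coth x$ (resp.\ $x\mapsto x\cot x$) on $(0,\D(\kappa_2))$, together with a pairing argument to control the cross term involving $\cos A$. The geometric content is supplied entirely by the comparison theorem; everything thereafter is elementary analysis.
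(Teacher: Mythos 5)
Note first that the paper does not prove Lemma~\ref{cos3}: it is imported verbatim from \cite[Corollary~2.1]{alimisis2020continuous}, so there is no internal argument to compare against. Your outline --- hinge comparison against the constant--curvature model space (which correctly yields $a\geq\tilde a$ under an upper curvature bound), followed by the model--space law of cosines and algebra --- is the right skeleton; it is the dual of the Zhang--Sra derivation behind Lemma~\ref{cos1}.

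The difficulty you flag as the ``main obstacle'' is, however, not a hard calculation but an impossibility in the negative--curvature branch: no coefficient $\xi>1$ can sit in front of $b^2$ in a lower bound of this form, yet $\sqrt{-\kappa_2}D\coth(\sqrt{-\kappa_2}D)>1$ for every $\kappa_2<0$ and $D>0$. Take the degenerate hinge with $\y$ on the geodesic ray from $\x$ through $\z$, so that $A=0$, $\cos A=1$, and $a=|b-c|$; the inequality then collapses to $(1-\xi)b^2\geq 0$, forcing $\xi\leq 1$ (and the same constraint survives for near--degenerate triangles by continuity). Equivalently, cos3 is precisely the statement that $\frac12 d(\cdot,\z)^2$ is $\xi$--strongly geodesically convex, and along the radial geodesic through $\z$ this function is $t\mapsto t^2/2$ with second derivative identically $1$ regardless of curvature, so the strong--convexity modulus can never exceed $1$. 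Your proposed use of the monotonicity of $x\coth x$ also cuts the wrong way: since $x\coth x$ increases on $(0,\infty)$, replacing the relevant side length by the diameter $D$ yields the \emph{supremum} of the pointwise constants, whereas a valid lower bound on $a^2$ needs the \emph{infimum}, namely $\lim_{x\to 0^+}x\coth x=1$. The correct coefficient is therefore $\xi=1$ for $\kappa_2\leq 0$ (which is exactly Lemma~\ref{cos2}, already tight in that regime) and $\xi=\sqrt{\kappa_2}D\cot(\sqrt{\kappa_2}D)$ for $\kappa_2>0$; your spherical branch is the one that can actually be carried through, because there $x\cot x$ is decreasing so substituting $D$ does give a valid lower coefficient. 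A minor additional slip: $D\leq\D(\kappa_2)=\pi/\sqrt{\kappa_2}$ does not keep $\cot(\sqrt{\kappa_2}D)$ positive --- that requires $\sqrt{\kappa_2}D<\pi/2$ --- though since the lemma never claims $\xi>0$, this is cosmetic.
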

\begin{defn}
    We define $\zeta=\zeta(-\kappa_1, D+2\delta M)$ and $\xi=\xi(\kappa_2, D+2\delta M)$ where $\xi(\cdot,\cdot)$ and $\zeta(\cdot,\cdot)$ are defined in Lemmas \ref{cos3} and \ref{cos1}, respectively.
\end{defn}
\begin{lemma}\citep{bridson2013metric}\label{cat2}
    For a CAT$(\kappa)$ space $\M$, a ball of diameter smaller than $\D(\kappa)$ is convex. Let $C$ be a convex subset in $\M$. If $d(\x, C)\leq \frac{\D(\kappa)}{2}$ then $d(\x, C)$ is convex and there exists a unique point $\Pi_C(\x)\in\C$ such that $d(\x,\Pi_C(\x))=d(\x,C)=\inf_{\y\in C}d(\x,\y)$.
\end{lemma}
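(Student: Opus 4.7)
The plan is to establish the three claims in order, each exploiting the defining CAT$(\kappa)$ comparison inequality together with classical facts about the model plane $M_\kappa^2$.

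For convexity of a small metric ball $B(\x, r)$ with diameter less than $\D(\kappa)$, I would pick $\y, \z \in B(\x, r)$ and let $\gamma$ be the (unique, since $d(\y, \z) < \D(\kappa)$) geodesic between them. Building a comparison triangle $\bar\triangle(\bar\x, \bar\y, \bar\z)$ in $M_\kappa^2$, the CAT$(\kappa)$ inequality gives $d(\x, \gamma(t)) \le d(\bar\x, \bar\gamma(t))$. Since the closed metric ball of radius $r$ in the constant-curvature model space is convex (a standard fact from spherical/Euclidean/hyperbolic geometry whenever $2r < \D(\kappa)$), we obtain $d(\bar\x, \bar\gamma(t)) \le r$, hence $\gamma(t) \in B(\x, r)$.

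For convexity of $f(\cdot) \coloneqq d(\cdot, C)$ on the $\D(\kappa)/2$-neighborhood of $C$, I would take $\y_1, \y_2$ in that neighborhood, a geodesic $\gamma : [0,1] \to \M$ between them, and for any $\varepsilon > 0$ pick approximate nearest points $\z_i \in C$ with $d(\y_i, \z_i) \le f(\y_i) + \varepsilon$. Since $C$ is convex, the geodesic $\sigma$ from $\z_1$ to $\z_2$ lies in $C$. The crucial ingredient is the convexity of $t \mapsto d(\gamma(t), \sigma(t))$ along pairs of geodesics, a standard CAT$(\kappa)$ consequence of applying the triangle comparison to the quadrilateral $\gamma(0)\gamma(1)\sigma(1)\sigma(0)$ split along a diagonal. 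This yields $f(\gamma(t)) \le d(\gamma(t), \sigma(t)) \le (1-t) d(\y_1, \z_1) + t\, d(\y_2, \z_2) \le (1-t) f(\y_1) + t f(\y_2) + \varepsilon$, and letting $\varepsilon \to 0$ gives convexity.

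For the projection onto $C$, let $\rho \coloneqq d(\x, C)$ and take a minimizing sequence $\{\z_n\} \subset C$ with $d(\x, \z_n) \to \rho$. I would show it is Cauchy by considering, for each $m, n$, the midpoint $\w_{m,n}$ of the geodesic segment $[\z_m, \z_n] \subset C$ and applying the CAT$(\kappa)$ comparison to the triangle $\triangle(\x, \z_m, \z_n)$ with median at $\w_{m,n}$. The CAT$(\kappa)$ analogue of the parallelogram law—derived by a median estimate in $M_\kappa^2$—combined with $d(\x, \w_{m,n}) \ge \rho$ and $d(\x, \z_m), d(\x, \z_n) \to \rho$, forces $d(\z_m, \z_n) \to 0$. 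Completeness delivers a limit $\Pi_C(\x) \in C$; the same midpoint argument applied to two putative minimizers gives uniqueness via strict convexity of the squared distance in the model space. The principal obstacle is the case $\kappa > 0$, where the Euclidean parallelogram law must be replaced by a curvature-corrected median inequality and one has to verify that the relevant triangles fit inside a region of diameter strictly less than $\D(\kappa)$ so that the CAT$(\kappa)$ comparison is valid. The hypothesis $d(\x, C) \le \D(\kappa)/2$ is precisely what keeps the entire minimizing sequence in a ball of radius $\le \D(\kappa)/2$ around $\x$, so all triangles considered stay in the safe comparison range.
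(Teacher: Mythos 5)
The paper does not prove this lemma; it is a direct citation of classical results from Bridson and Haefliger, so there is no in-paper argument to compare against. Your reconstructions of the first claim (small balls are convex, via a comparison triangle in $M_\kappa^2$ and convexity of balls in the model space) and of the third claim (existence and uniqueness of $\Pi_C(\x)$ via a Cauchy minimizing sequence controlled by a median comparison estimate, with $d(\x,C)\le \D(\kappa)/2$ keeping all triangles inside the admissible comparison range) are the standard textbook arguments and are sound.

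The second step has a genuine gap when $\kappa>0$. You deduce convexity of $d(\cdot,C)$ from "convexity of $t\mapsto d(\gamma(t),\sigma(t))$ along pairs of geodesics," presented as a general CAT$(\kappa)$ consequence of splitting a comparison quadrilateral along a diagonal. That linear convexity of the metric is a CAT$(0)$ fact; for $\kappa>0$ it fails even on arbitrarily small balls. On the unit sphere take two great-circle geodesics leaving a common point at a small angle $\theta$: one computes $d(\gamma(t),\sigma(t))=\arccos(\cos^2 t+\sin^2 t\cos\theta)\approx\theta\sin t$, a concave function of $t$ on $[0,\pi/2]$, so that $d(\gamma(t),\sigma(t))>t\,d(\gamma(1),\sigma(1))$ for $t\in(0,1)$ and the interpolation bound you rely on is false. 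The quadrilateral split also requires the diagonal to have length below $\D(\kappa)$, which $d(\x,C)\le\D(\kappa)/2$ alone does not control, since $d(\y_1,\y_2)$ can be comparable to the diameter of $C$. For $\kappa>0$ the model-space comparison yields a nonlinear ($\sin$-weighted) inequality rather than the linear one, and what the paper actually uses downstream is only convexity of the sublevel set $\{\x:d(\x,C)\le r\}$ for $r\le\D(\kappa)/2$, which is cleanest to establish via non-expansiveness of $\Pi_C$. The lemma's phrase "$d(\x,C)$ is convex" is itself loose on this point, but the gap in your argument is real: the step $d(\gamma(t),C)\le(1-t)d(\y_1,C)+t\,d(\y_2,C)$ does not follow from the comparison inequality when $\kappa>0$.
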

\begin{thm}
Suppose all losses $f_t$ are  $L$-gsc-smooth on $\M$. Under Assumptions  \ref{cat},  the iterates  
\begin{equation}
    \begin{split}
        \textstyle &\x_{t}'=\expmap_{\y_t}(-\eta M_t)\\
        &\x_t=\Pi_{\ndm}\x_t'\\
        &\y_{t+1}=\Pi_{\N}\expmap_{\x_t'}\left(-\eta\nabla f_t(\x_t')+\expmap_{\x_t'}^{-1}(\y_t)\right).
    \end{split}
\end{equation}
 satisfies
 \begin{equation*}
    \begin{split}
        \textstyle\sum_{t=1}^Tf_t(\x_t)-\sum_{t=1}^Tf_t(\u_t)\leq\eta{\zeta}\sumT\|\nabla f_t(\y_t)-M_t\|^2+\frac{D^2+2DP_T}{2\eta}.
        % \leq&\eta{\zeta}V_T+\frac{D^2+2DP_T}{2\eta}.
    \end{split}
\end{equation*}
% $$
% \textstyle\sum_{t=1}^Tf_t(\x_{t,i})-\sum_{t=1}^Tf_t(\u_t)\leq\eta{\zeta}V_T+\frac{D^2+2DP_T}{2\eta}
% $$
for any  $\u_1,\dots,\u_T\in\N$ and $\eta\leq\min\lt\{\frac{\xi\delta M}{G+(G^2+2\zeta\xi\delta^2M^2L^2)^{\frac{1}{2}}},\frac{\D(\kappa_2)}{2(G+2M)}\rt\}$.
\end{thm}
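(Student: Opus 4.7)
The plan is to mirror the proof of Theorem~\ref{var-expert} while replacing each Hadamard-specific inequality with its CAT$(\kappa)$ counterpart. Before touching the regret bound I would verify that every projection in the algorithm is well-defined: by Lemma~\ref{cat2}, $\Pi_{\N}$ and $\Pi_{\ndm}$ exist and are unique when the argument lies within distance $\D(\kappa_2)/2$ of the target set. Since $d(\x_t',\y_t) = \eta\|M_t\| \leq \eta M$ and $d(\y_{t+1}',\N) \leq d(\y_{t+1}',\x_t') + d(\x_t',\N) \leq \eta(G + 2M)$, the hypothesis $\eta \leq \D(\kappa_2)/(2(G+2M))$ is exactly what is needed.

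With well-definedness in hand I would reuse the decomposition $f_t(\x_t) - f_t(\u_t) = (f_t(\x_t) - f_t(\x_t')) + (f_t(\x_t') - f_t(\u_t))$. The first piece is handled as in the Hadamard case: Lemma~\ref{proj} (valid under our distance hypothesis) gives $d(\x_t,\x_t') \leq d(\x_t',\y_t)$, hence $f_t(\x_t) - f_t(\x_t') \leq G\cdot d(\x_t',\y_t)$. For the second piece I would apply geodesic convexity to pass to $-\langle \invexp_{\x_t'}\u_t, \nabla f_t(\x_t')\rangle$ and substitute the definition of $\y_{t+1}'$. Two cosine inequalities now come into play: Lemma~\ref{cos1} (lower curvature $-\kappa_1$) on the triangle with vertex $\x_t'$ and edges to $\y_{t+1}'$ and $\u_t$ yields an upper-bound term $\zeta\,d(\x_t',\y_{t+1}')^2$, while Lemma~\ref{cos3} (upper curvature $\kappa_2$) on the triangle through $\x_t',\y_t,\u_t$ yields a lower-bound term $-\xi\,d(\x_t',\y_t)^2$. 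Expanding $\|\nabla f_t(\x_t')-\Ga_{\y_t}^{\x_t'}M_t\|^2 \leq 2L^2 d(\x_t',\y_t)^2 + 2\|\nabla f_t(\y_t)-M_t\|^2$ via smoothness, and using nonexpansiveness of $\Pi_{\N}$ to pass from $\y_{t+1}'$ to $\y_{t+1}$, produces a per-round bound of the form
\begin{equation*}
f_t(\x_t) - f_t(\u_t) \leq \eta\zeta\|\nabla f_t(\y_t)-M_t\|^2 + \frac{d(\y_t,\u_t)^2 - d(\y_{t+1},\u_t)^2}{2\eta} + \frac{h(\eta, d(\x_t',\y_t))}{2\eta},
\end{equation*}
where $h(\eta,\lambda) := 2\eta G + 2\eta^2\zeta L^2 \lambda - \xi \lambda$.

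The main obstacle is establishing $h(\eta,\lambda)\leq 0$, which is what forces the more delicate step-size condition. When $\lambda \leq \delta M$ the iterate satisfies $\x_t = \x_t'$, so the $G\lambda$ contribution was spurious and it suffices to have $2\eta^2\zeta L^2 \leq \xi$. When $\lambda \geq \delta M$, solving $2\eta^2 \zeta L^2 \lambda + 2\eta G = \xi\lambda$ for its positive root $\eta(\lambda)$ and analyzing the monotonicity of $\eta(\lambda)$ (splitting as in the Hadamard proof on whether $2\zeta L^2 \geq \xi$) shows that it is enough to take $\eta \leq \eta(\delta M) = \xi\delta M/(G + (G^2+2\zeta\xi\delta^2 M^2 L^2)^{1/2})$. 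The quantitative novelty over Theorem~\ref{var-expert} is precisely this $\xi$ inside the square root, which is the only trace left by the positive-curvature correction. Once $h \leq 0$ is secured, telescoping over $t$ with the convention $\u_{T+1}=\u_T$ and bounding $d(\y_{t+1},\u_{t+1})^2 - d(\y_{t+1},\u_t)^2 \leq 2D\cdot d(\u_{t+1},\u_t)$ via the diameter hypothesis delivers the surplus $(D^2 + 2DP_T)/(2\eta)$, yielding the stated inequality.
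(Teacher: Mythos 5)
Your proposal is correct and follows essentially the same route as the paper: same decomposition into $(f_t(\x_t)-f_t(\x_t'))+(f_t(\x_t')-f_t(\u_t))$, same use of Lemma~\ref{cat2} for well-definedness of the projections, same pairing of Lemma~\ref{cos1} (lower curvature, yielding the $\zeta$ factor) with Lemma~\ref{cos3} (upper curvature, yielding the $\xi$ factor), and the same reduction to showing $2\eta G+2\eta^2\zeta L^2 d(\x_t',\y_t)-\xi\, d(\x_t',\y_t)\leq 0$ via the positive root of the quadratic. You actually spell out the step-size analysis in more detail than the paper does, which merely says ``following the proof of Theorem~\ref{var-expert}.''

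One small remark: the cased monotonicity analysis (splitting on $2\zeta L^2\gtrless\xi$), which you inherit from the Hadamard proof, is not actually needed: differentiating $\eta(\lambda)=\frac{-G+(G^2+2\zeta\xi L^2\lambda^2)^{1/2}}{2\zeta L^2\lambda}$ gives $\eta'(\lambda)=\frac{G\bigl((G^2+2\zeta\xi L^2\lambda^2)^{1/2}-G\bigr)}{2\zeta L^2\lambda^2 (G^2+2\zeta\xi L^2\lambda^2)^{1/2}}\geq 0$, so $\eta(\lambda)$ is monotone increasing for all $\lambda>0$ and $\min_{\lambda\geq\delta M}\eta(\lambda)=\eta(\delta M)$ unconditionally. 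The final conclusion and step-size threshold are unaffected.
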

\begin{proof}
    We highlight key differences between the proof of Theorem \ref{var-expert}. Again we let $\y_{t+1}'=\expmap_{\x_t'}\left(-\eta\nabla f_t(\x_t')+\expmap_{\x_t'}^{-1}(\y_t)\right)$. First, we need to argue the algorithm is well-defined. The diameter of $\ndm$ is at most $D+2\delta M$ by triangle inequality, so $\ndm$ is gsc-convex by Assumption \ref{cat} and Lemma \ref{cat2}. We also need to ensure that $d(\x_t',\ndm)\leq\frac{\dk}{2}$ and $d(\y_{t+1}',\N)\leq\frac{\dk}{2}$ and apply Lemma \ref{cat2} to show the projection is unique and non-expansive. For $d(\x_t',\ndm)$, we have
    $$
    d(\x_t',\ndm)\leq  d(\x_t',\y_t)\leq \eta M\leq\frac{\D(\kappa_2)}{2}.
    $$
    by $\eta\leq\frac{\D(\kappa_2)}{2(G+2M)}$. Similarly, for $d(\y_{t+1}',\N)$
    \begin{equation*}
        \begin{split}
            &d(\y_{t+1}',\N)\leq d(\y_{t+1}',\y_t)\leq d(\y_{t+1}',\x_t')+d(\y_{t},\x_t')\\
            \leq& \|-\eta\nabla f_t(\x_t')+\eta\Ga_{\y_t}^{\x_t'}M_t\|+\eta M\\
            \leq& \eta(G+2M)\leq\frac{\dk}{2}.
        \end{split}
    \end{equation*}
    We can bound $f_t(\x_t)-f_t(\x_t')$ in the same way as Theorem \ref{var-expert}, but   we now use Lemmas \ref{cos3} and \ref{cos1} to bound $f_t(\x_t')-f_t(\u_t)$.
    \begin{equation}
        \begin{split}
        &f_t(\x_t')-f_t(\u_t){\leq} -\langle\expmap_{\x_t'}^{-1}(\u_t),\nabla f_t(\x_t')\rangle\\
        =&\frac{1}{\eta}\langle\expmap_{\x_t'}^{-1}(\u_t),\expmap_{\x_t'}^{-1}(\y_{t+1}')-\expmap_{\x_t'}^{-1}(\y_t)\rangle\\
        {\leq}&\frac{1}{2\eta}\left({\zeta}d(\x_t',\y_{t+1}')^2+d(\x_t',\u_t)^2-d(\y_{t+1}',\u_t)^2\right)-\frac{1}{2\eta}\left(\xi d(\x_t',\y_t)^2+d(\x_t',\u_t)^2-d(\y_t,\u_t)^2\right).\\
        \end{split}
    \end{equation}
    Finally, we need to show
\begin{equation}
    2\eta G+2\eta^2{\zeta}L^2d(\x_t',\y_t)-\xi d(\x_t',\y_t)\leq 0
\end{equation}
Following the proof of Theorem \ref{var-expert}, we find
$$\eta\leq\frac{\xi\delta M}{G+(G^2+2\zeta\xi\delta^2M^2L^2)^{\frac{1}{2}}}$$

satisfies the required condition. The guarantee is thus established.
\end{proof}

\subsection{Proof of Lemma \ref{optHedge}}
\label{app:lem:opthedge}
% \begin{proof}
We first show
\begin{equation}\label{opthedge-eq1}
    \begin{split}
        \sumT\la\l_t,\w_t-\w^*\ra\leq \frac{\ln N+R(\w^*)}{\beta}+\beta\sumT\|\l_t-\m_t\|_{\infty}^2-\frac{1}{2\beta}\sumT\lt(\|\w_t-\w_t'\|_1^2+\|\w_t-\w_{t-1}'\|_1^2\rt)
    \end{split}
\end{equation}
holds for any $\w^*\in\Delta_N$, where
$$
\w_t=\argmin_{\w\in\Delta_N}\;\beta\la\sum_{s=1}^{t-1}\l_s+\m_t,\w\ra +R(\w), \quad t\geq 1
$$
and 
$$
\w_t'=\argmin_{\w\in\Delta_N}\;\beta\la\sum_{s=1}^{t}\l_s,\w\ra +R(\w), \quad t\geq 0.
$$
Note that $R(\w)=\sum_{i\in[N]}w_i\ln w_i$ is the negative entropy. According to the equivalence between Hedge and follow the regularized  leader with the negative entropy regularizer, we have
$
w_{t,i}\propto e^{-\beta\lt(\sum_{s=1}^{t-1}\ell_{s,i}+m_{t,i}\rt)}
$
and 
$
w_{t,i}'\propto e^{-\beta\lt(\sum_{s=1}^{t}\ell_{s,i}\rt)}.
$
To prove Equation \eqref{opthedge-eq1}, we  consider the following
 decomposition:
$$
\la\l_t,\w_t-\w^*\ra=\la \l_t-\m_t,\w_t-\w_t'\ra+\la\m_t,\w_t-\w_t'\ra+\la\l_t,\w_t'-\w^*\ra.
$$
Since $R(\cdot)$ is $1$-strongly convex w.r.t. the $\l_1$ norm, by Lemma \ref{dual}, we have $\|\w_t-\w_t'\|_1\leq\beta\|\l_t-\m_t\|_{\infty}$ and
$$
\la\l_t-\m_t,\w_t-\w_t'\ra\leq\|\l_t-\m_t\|_{\infty}\|\w_t-\w_t'\|_1\leq\beta\|\l_t-\m_t\|_{\infty}^2
$$
by H\"{o}lder's inequality.
Hence it suffices to show
\begin{equation}\label{opthedge-eq2}
    \sumT\la\m_t,\w_t-\w_t'\ra+\la\l_t,\w_t'-\w^*\ra\leq\frac{\ln N+R(\w^*)}{\beta}-\frac{1}{2\beta}\sumT\lt(\|\w_t-\w_t'\|_1^2+\|\w_t-\w_{t-1}'\|_1^2\rt)
\end{equation}
to prove Equation \eqref{opthedge-eq1}.

 Equation \eqref{opthedge-eq2} holds for $T=0$ because $R(\w^*)\geq -\ln N$ holds for any $\w^*\in\Delta_N$. To proceed, we  need the following proposition:
\begin{equation}\label{strong-cvx}
   g(\w^*)+\frac{c}{2}\|\w-\w^*\|^2\leq g(\w) 
\end{equation}
holds for any $c$-strongly convex $g(\cdot):\W\rightarrow\R$ where $\w^*=\argmin_{\w\in\W}g(\w)$. This fact can be easily seen by combining the strong convexity and the first-order optimality condition for convex functions.

Assume Equation \eqref{opthedge-eq2} holds for round $T-1\; (T\geq 1)$ and we denote
$$
C_T=\frac{1}{2\beta}\sumT\lt(\|\w_t-\w_t'\|_1^2+\|\w_t-\w_{t-1}'\|_1^2\rt).
$$
Now for round $T$, 
\begin{equation}
    \begin{split}
        &\sumT\lt(\la \m_t,\w_t-\w_t'\ra+\la\l_t,\w_t'\ra\rt)\\
        \stackrel{\rm (1)}{\leq}&\sum_{t=1}^{T-1}\la\l_t,\w_{T-1}'\ra+\frac{\ln N+R(\w_{T-1}')}{\beta}-C_{T-1}+\la\m_T,\w_T-\w_T'\ra+\la\l_T,\w_T'\ra\\
        \stackrel{\rm (2)}{\leq}&\sum_{t=1}^{T-1}\la\l_t,\w_{T}\ra+\frac{\ln N+R(\w_{T})}{\beta}-C_{T-1}+\la\m_T,\w_T-\w_T'\ra+\la\l_T,\w_T'\ra-\frac{1}{2\beta}\|\w_T-\w_{T-1}'\|_1^2\\
        =&\lt(\sum_{t=1}^{T-1}\la\l_t,\w_{T}\ra+\la\m_T,\w_T\ra+\frac{\ln N+R(\w_{T})}{\beta}\rt)+\la\l_T-\m_T,\w_T'\ra-C_{T-1}-\frac{1}{2\beta}\|\w_T-\w_{T-1}'\|_1^2\\
        \stackrel{\rm (3)}{\leq}&\lt(\sum_{t=1}^{T-1}\la\l_t,\w_{T}'\ra+\la\m_T,\w_T'\ra+\frac{\ln N+R(\w_{T}')}{\beta}\rt)+\la\l_T-\m_T,\w_T'\ra\\
        &-C_{T-1}-\frac{1}{2\beta}\|\w_T-\w_{T-1}'\|_1^2-\frac{1}{2\beta}\|\w_T-\w_{T}'\|_1^2\\
        =&\sumT\la\l_t,\w_T'\ra+\frac{\ln N+R(\w_T')}{\beta}-C_T\\
        \stackrel{\rm (4)}{\leq}&\sumT\la\l_t,\w^*\ra+\frac{\ln N+R(\w^*)}{\beta}-C_T.
    \end{split}
\end{equation}
The first inequality is due to the induction hypothesis with $\w^\star=\w_{T-1}'$. The second and the third ones are applications of Equation \eqref{strong-cvx}. Specifically, $\w_{T-1}'$ and $\w_T$ minimize $\sum_{t=1}^{T-1}\beta\la\l_t,\w\ra+{R(\w)}$ and $\sum_{t=1}^{T-1}\beta\la\l_t,\w\ra+\beta\la\m_T,\w\ra+{R(\w)}$ respectively. The forth inequality follows from $\w_T'$ minimizes $\beta\sumT\la\l_t,\w\ra+R(\w)$.

We now demonstrate how to remove the dependence on $\w_t'$:
% get the required lemma after some post-processing:
\begin{equation}\label{opthedge-eq3}
    \begin{split}
    &\frac{1}{2\beta}\sumT\lt(\|\w_t-\w_t'\|_1^2+\|\w_t-\w_{t-1}'\|_1^2\rt)\\
    \geq&\frac{1}{2\beta}\sumT\lt(\|\w_t-\w_t'\|_1^2+\|\w_{t+1}-\w_t'\|_1^2\rt)-\frac{1}{2\beta}\|\w_{T+1}-\w_T'\|_1^2\\
    \geq& \frac{1}{4\beta}\sum_{t=2}^{T}\|\w_t-\w_{t-1}\|_1^2-\frac{2}{\beta},
    \end{split}
\end{equation}
where the last inequality follows from $\|\x+\y\|^2\leq 2(\|\x\|^2+\|\y\|^2)$ holds for any norm. Now the proof is completed by combining Equations \eqref{opthedge-eq1} and \eqref{opthedge-eq3}.

\subsection{Proof of Theorem \ref{opthedge-reg}}
\label{app:thm:opthedge-reg}
% \begin{proof}
We first show $f_t(\x_t)-f_t(\x_{t,i})\leq\la\w_t,\l_t\ra-\ell_{t,i}$ so that  Lemma \ref{optHedge} can be invoked to bound the regret of the meta algorithm. We start from gsc-convexity,
\begin{equation}
    \begin{split}
        &f_t(\x_t)-f_t(\x_{t,i})\leq -\la\nabla f_t(\x_t),\expmap_{\x_t}^{-1}\x_{t,i}\ra \\
       =&\la\nabla f_t(\x_t),\sum_{j=1}^Nw_{t,j}\expmap_{\x_t}^{-1}\x_{t,j}\ra-\la\nabla f_t(\x_t),\expmap_{\x_t}^{-1}\x_{t,i}\ra\\
       =&\sum_{j=1}^Nw_{t,j}\ell_{t,j}-\ell_{t,i}=\la\w_t,\l_t\ra-\ell_{t,i},
    \end{split}
\end{equation}
where the first equality is due to the gradient of $\sum_{i=1}^Nw_{t,i}d(\x,\x_{t,i})^2$ vanishes at $\x_t$. Now we can bound the regret as
\begin{equation}\label{grad-var-eq1}
    \begin{split}
        &\sum_{t=1}^Tf_t(\x_t)-f_t(\x_{t,i})\leq\sum_{t=1}^T\la\w_t,\l_t\ra-\ell_{t,i}\\
        \leq&\frac{2+\ln N}{\beta}+\beta\sum_{t=1}^T\|\l_t-\m_t\|_{\infty}^2-\frac{1}{4\beta}\sum_{t=2}^T\|\w_t-\w_{t-1}\|_1^{2}\\
    \end{split}
\end{equation}
It now suffices to bound $\|\l_t-\m_t\|_{\infty}^2$ in terms of the gradient-variation  $V_T$ and $\|\w_t-\w_{t-1}\|_1^2$. We start with the definition of the infinity norm.
\begin{equation}\label{grad-var-eq2}
    \begin{split}
        &\|\l_t-\m_t\|_{\infty}^2\\
        =&\max_{i\in[N]}(\ell_{t,i}-m_{t,i})^2\\
        =&\max_{i\in[N]}\left(\la\nabla f_t(\x_t),\expmap_{\x_t}^{-1}\x_{t,i}\ra-\la\nabla f_{t-1}(\bar{\x}_t),\expmap_{\bx_t}^{-1}\x_{t,i}\ra\right)^2\\
        =&\max_{i\in[N]}\Bigl(\Bigl. \la\nabla f_t(\x_t),\expmap_{\x_t}^{-1}\x_{t,i}\ra-\la\nabla f_{t-1}(\x_t),\expmap_{\x_t}^{-1}\x_{t,i}\ra+\la\nabla f_{t-1}(\x_t),\expmap_{\x_t}^{-1}\x_{t,i}\ra\\&
        -\la\Ga_{\bx_t}^{\x_t}\nabla f_{t-1}(\bar{\x}_t),\expmap_{\x_t}^{-1}\x_{t,i}\ra+\la\Ga_{\bx_t}^{\x_t}\nabla f_{t-1}(\bar{\x}_t),\expmap_{\x_t}^{-1}\x_{t,i}\ra-\la\Ga_{\bx_t}^{\x_t}\nabla f_{t-1}(\bar{\x}_t),\Ga_{\bx_t}^{\x_t}\expmap_{\bx_t}^{-1}\x_{t,i}\ra\Bigl.\Bigl)^2\\
        \stackrel{\rm (1)}{\leq}&3\max_{i\in[N]}\Bigl(\Bigl. \la\nabla f_t(\x_t)-\nabla f_{t-1}(\x_t),\expmap_{\x_t}^{-1}\x_{t,i}\ra^2+\la\nabla f_{t-1}(\x_t)-\Ga_{\bx_t}^{\x_t}\nabla f_{t-1}(\bar{\x}_t),\expmap_{\x_t}^{-1}\x_{t,i}\ra^2\\
        &+\la\Ga_{\bx_t}^{\x_t}\nabla f_{t-1}(\bar{\x}_t),\expmap_{\x_t}^{-1}\x_{t,i}-\Ga_{\bx_t}^{\x_t}\expmap_{\bx_t}^{-1}\x_{t,i}\ra^2\Bigl. \Bigl)\\
        \stackrel{\rm (2)}{\leq}&3\left(D^2\sup_{\x\in\N}\|\nabla f_t(\x)-\nabla f_{t-1}(\x)\|^2+D^2L^2d(\x_t,\bx_t)^2+G^2\|\expmap_{\x_t}^{-1}(\x_{t,i})-\Ga_{\bx_t}^{\x_t}\expmap_{\bx_t}^{-1}(\x_{t,i})\|^2\right)
    \end{split}
\end{equation}
% \Bigl(\Bigl.  \Bigl.\Bigl)
% \Bigg( \Bigg.  \Bigg. \Bigg)
where  the first inequality relies on fact that $(a+b+c)^2\leq 3(a^2+b^2+c^2)$ holds for any $a,b,c\in\R$, and the second one follows from Assumptions \ref{diam}, \ref{grad}, $L$-gsc-smoothness and H\"{o}lder's inequality.

% second inequality is due to Lemma \ref{optHedge} while the third one follows from Lemma \ref{gradvar}.

By  Lemma \ref{smooth}, for a Hadamard manifold with sectional curvature lower bounded by $\kappa$, $h(\x)\coloneqq\frac{1}{2}d(\x,\x_{t,i})^2$ is $\frac{\sqrt{\kappa}D}{\tanh(\sqrt{\kappa}D)}$-smooth (which is exactly ${\zeta}$ as in Definition \ref{def1}) on Hadamard manifolds. Thus
\begin{equation}\label{grad-var-eq3}
    \|\expmap_{\x_t}^{-1}(\x_{t,i})-\Ga_{\bx_t}^{\x_t}\expmap_{\bx_t}^{-1}(\x_{t,i})\|=\|-\nabla h(\x_t)+\Ga_{\bx_t}^{\x_t}\nabla h(\bx_t)\|\leq {\zeta} d(\x_t,\bx_t).
\end{equation}
We  need to bound $d(\x_t,\bx_t)$ in terms of $\|\w_t-\w_{t-1}\|_1$ to make full use of the negative term in Lemma \ref{optHedge}. By Lemma \ref{frechet}
\begin{equation}\label{grad-var-eq4}
    d(\x_t,\bx_t)\leq\sum_{i=1}^Nw_{t,i}\cdot d(\x_{t,i},\x_{t,i})+D\|\w_t-\w_{t-1}\|_1=D\|\w_t-\w_{t-1}\|_1.
\end{equation}
% by assuming $\w_{T+1}=\w_T$.
Combining Equations \eqref{grad-var-eq1}, \eqref{grad-var-eq2}, \eqref{grad-var-eq3} and \eqref{grad-var-eq4}, we have
\begin{equation}
    \sum_{t=1}^Tf_t(\x_t)-f_t(\x_{t,i})\leq\frac{2+\ln N}{\beta}+3\beta D^2(V_T+G^2)+\sumsT\left(3\beta (D^4L^2+D^2G^2{\zeta}^2)-\frac{1}{4\beta}\right)\|\w_t-\w_{t-1}\|_1^2,
\end{equation}
where the  $3\beta D^2G^2$ term is due to the calculation of $V_T$ starts from $t=2$, while $\w_0=\w_1$ ensures $\x_1=\bx_1$ and thus $d(\x_1,\bx_1)=0$.
% \end{proof}

% \end{proof}
\subsection{Proof of Theorem \ref{var-reg}}
\label{app:thm:var-reg}

% \begin{proof}
The optimal step size, according to Theorem  \ref{var-expert} is
$$
\eta^\star=\min\left\{\frac{\delta}{1+(1+2\zeta\delta^2L^2)^{\frac{1}{2}}}, \sqrt{\frac{D^2+2DP_T}{2{\zeta}V_T}}\right\}.
$$
Based on Assumption \ref{grad}, we know $V_T$ has an upper bound $V_T=\sum_{t=2}^T\sup_{\x\in\N}\|\nabla f_t(\x)-\nabla f_{t-1}(\x)\|^2\leq 4G^2T$. Therefore, $\eta^\star$ can be bounded by
$$
\sqrt{\frac{D^2}{8{\zeta}G^2T}}\leq\eta^\star\leq\frac{\delta}{1+(1+2\zeta\delta^2L^2)^{\frac{1}{2}}}.
$$
According to the construction of $\H$,
$$
\min\H=\sqrt{\frac{D^2}{8{\zeta}G^2T}},\qquad\max\H\geq 2\frac{\delta}{1+(1+2\zeta\delta^2L^2)^{\frac{1}{2}}}.
$$
Therefore, there always exists $k\in[N]$ such that $\eta_k\leq\eta^\star\leq 2\eta_k$.
We can bound the regret of the $k$-th expert as
\begin{equation}\label{var-expert-eq1}
    \begin{split}
        &\sum_{t=1}^Tf_t(\x_{t,k})-\sum_{t=1}^Tf_t(\u_t)\leq\eta_k{\zeta}V_T+\frac{D^2+2DP_T}{2\eta_k}\leq\eta^\star{\zeta}V_T+\frac{D^2+2DP_T}{\eta^\star}\\
        \leq&\zeta V_T \sqrt{\frac{D^2+2DP_T}{2\zeta V_T}}+{(D^2+2DP_T)}\cdot\max\lt\{\sqrt{\frac{2\zeta V_T}{D^2+2DP_T}}, \frac{1+(1+2\zeta\delta^2L^2)^{\frac{1}{2}}}{\delta} \rt\}\\
        =&\frac{3}{2}\sqrt{2(D^2+2DP_T){\zeta}V_T}+(D^2+2DP_T)\frac{1+(1+2\zeta\delta^2L^2)^{\frac{1}{2}}}{\delta}.\\
        % =&O\lt(\sqrt{\lt(V_T+\frac{P_T}{\delta^2}\rt)P_T}\rt).
    \end{split}
\end{equation}
Since the dynamic regret can be decomposed as the sum of the meta-regret and the expert-regret,
$$
\sum_{t=1}^Tf_t(\x_t)-\sum_{t=1}^Tf_t(\u_t)=\sum_{t=1}^Tf_t(\x_t)-\sum_{t=1}^Tf_t(\x_{t,k})+\sum_{t=1}^Tf_t(\x_{t,k})-\sum_{t=1}^Tf_t(\u_t).
$$

Applying Theorem \ref{opthedge-reg} with   $\beta\leq \frac{1}{\sqrt{12(D^4L^2+D^2G^2{\zeta}^2)}}$, we have
$$
\left(3\beta (D^4L^2+D^2G^2{\zeta}^2)-\frac{1}{4\beta}\right)\leq 0
$$
and
$$
\sum_{t=1}^Tf_t(\x_t)-\sum_{t=1}^Tf_t(\x_{t,i})\leq\frac{2+\ln N}{\beta}+3\beta D^2 (V_T+G^2).
$$
We need to consider two cases based on the value of $\beta$.

If $\sqrt{\frac{2+\ln N}{3D^2(V_T+G^2)}}\leq \frac{1}{\sqrt{12(D^4L^2+D^2G^2{\zeta}^2)}}$, then
$$
\sum_{t=1}^Tf_t(\x_t)-\sum_{t=1}^Tf_t(\x_{t,i})\leq 2\sqrt{3D^2(V_T+G^2)(2+\ln N)}.
$$
Otherwise, we have
$$
\sum_{t=1}^Tf_t(\x_t)-\sum_{t=1}^Tf_t(\x_{t,i})\leq 2(2+\ln N)\sqrt{12(D^4L^2+D^2G^2{\zeta}^2)}.
$$
In sum,
\begin{equation}\label{var-meta}
    \begin{split}
        &\sum_{t=1}^Tf_t(\x_t)-\sum_{t=1}^Tf_t(\x_{t,i})\\
        \leq&\max\left\{2\sqrt{3D^2(V_T+G^2)(2+\ln N)},2(2+\ln N)\sqrt{12(D^4L^2+D^2G^2{\zeta}^2)}\right\}.\\
        % =&O\left(\sqrt{(V_T+\ln N)\ln N}\right).
    \end{split}
\end{equation}

Combining Equations \eqref{var-meta} and  \eqref{var-expert-eq1}, we have
\begin{equation}
    \begin{split}
        &\sum_{t=1}^Tf_t(\x_t)-\sum_{t=1}^Tf_t(\u_t)\\
        \leq &\max\left\{2\sqrt{3D^2(V_T+G^2)(2+\ln N)},2(2+\ln N)\sqrt{12(D^4L^2+D^2G^2{\zeta}^2)}\right\}\\
        &+\frac{3}{2}\sqrt{2(D^2+2DP_T){\zeta}V_T}+(D^2+2DP_T)\frac{1+(1+2\zeta\delta^2L^2)^{\frac{1}{2}}}{\delta}\\
        =&O\left(\sqrt{(V_T+{\zeta}^2\ln N)\ln N}\right)+O\left(\sqrt{{\zeta}(V_T+(1+P_T)/\delta^2)(1+P_T)}\right)\\
        =&{O}\left(\sqrt{{\zeta}(V_T+(1+P_T)/\delta^2)(1+P_T)}\right),
    \end{split}
\end{equation}
where we use ${O}(\cdot)$ to hide $O(\log\log T)$ following \cite{luo2015achieving} and \cite{zhao2020dynamic} and $N=O(\log T)$ leads to $\ln N=O(\log\log T)$. 
% \end{proof}

\section{Omitted Proof for Section \ref{radars}}
\label{app:radars}

\subsection{Proof of Lemma \ref{self-bound}}
\label{app:lem:self-bound}
% \begin{proof}
By $L$-gsc-smoothness, we have
$$
f(\y)\leq f(\x)+\la\nabla f(\x),\expmap_{\x}^{-1}(\y)\ra+\frac{L\cdot d(\x,\y)^2}{2}.
$$
Setting $\y=\expmap_{\x}\left(-\frac{1}{L}\nabla f(\x)\right)$, we have
\begin{equation}
    \begin{split}
        0\leq& f(\y)\leq f(\x)-\frac{1}{L}\|\nabla f(\x)\|^2+\frac{L}{2}\cdot\frac{1}{L^2}\|\nabla f(\x)\|^2\\
        =&f(\x)-\frac{1}{2L}\|\nabla f(\x)\|^2,
    \end{split}
\end{equation}
where we use the non-negativity of $f$. The above inequality is equivalent to
$$
\|\nabla f(\x)\|^2\leq 2L\cdot f(\x),
$$
in which the constant is two times better than that of \cite{srebro2010smoothness}.
% \end{proof}

\subsection{Proof of Lemma \ref{sml-expert}}
\label{app:lem:sml-expert}
% \begin{proof}
The  proof is similar to the proof of Theorem \ref{ROGD}. Let $\x_{t+1}'=\expmap_{\x_t}\lt(-\eta\nabla f_t(\x_t)\rt)$, then analog to Equation \eqref{rogd-eq1}, we have
\begin{equation}
    \begin{split}
        f_t(\x_t)-f_t(\u_t)\leq&\frac{1}{2\eta}\lt(\|\invexp_{\x_t}\u_t\|^2-\|\invexp_{\x_{t+1}}\u_{t+1}\|^2+2D\|\invexp_{\u_t}\u_{t+1}\| \rt)+\frac{\eta{\zeta}\|\nabla f_t(\x_t)\|^2}{2}\\
        \leq& \frac{1}{2\eta}\lt(\|\invexp_{\x_t}\u_t\|^2-\|\invexp_{\x_{t+1}}\u_{t+1}\|^2+2D\|\invexp_{\u_t}\u_{t+1}\| \rt)+
        \eta{\zeta} Lf_t(\x_t),
    \end{split}
\end{equation}
where for the second inequality we apply Lemma \ref{self-bound}.
WLOG, we can assume $\u_{T+1}=\u_T$ and sum from $t=1$ to $T$:
$$
\sum_{t=1}^T\lt(f_t(\x_t)-f_t(\u_t)\rt)\leq\frac{D^2+2DP_T}{2\eta}+\eta{\zeta}L\sum_{t=1}^Tf_t(\x_t).
$$
After simplifying, we get
\begin{equation}
    \begin{split}
        \sum_{t=1}^T\lt( f_t(\x_t)-f_t(\u_t)\rt)
        \leq&\frac{D^2+2DP_T}{2\eta(1-\eta{\zeta} L)}+\frac{\eta{\zeta}L\sum_{t=1}^Tf_t(\u_t)}{1-\eta{\zeta} L}\\
        =&\frac{D^2+2DP_T}{2\eta(1-\eta{\zeta} L)}+\frac{\eta{\zeta}LF_T}{1-\eta{\zeta} L}\\
        \leq &\frac{D^2+2DP_T}{\eta}+2\eta{\zeta} L F_T\\
        =&O\left(\frac{1+P_T}{\eta}+\eta F_T\right).\\
    \end{split}
\end{equation}
where $\eta\leq\frac{1}{2\zeta L}$ is used to obtain the second inequality.
% \end{proof}

\subsection{Proof of Lemma \ref{sml-meta}}
\label{app:lem:sml-meta}
% \begin{proof}
We again apply Lemma \ref{optHedge}, with the surrogate loss $\ell_{t,i}=\la\nabla f_t(\x_t),\invexp_{\x_t}\x_{t,i}\ra$ and  the optimism $m_{t,i}=0$ for any $i\in[N]$.
% We can use Lemma \ref{optHedge} to finish the proof and the main difference is we set the optimism $m_{t,i}=0$ for any $i\in[N]$, and use the same surrogate loss $\ell_{t,i}=\la\nabla f_t(\x_t),\invexp_{\x}\x_{t,i}\ra$. 
In this way, 
\begin{equation}
    \begin{split}
        &\sumT f_t(\x_t)-\sumT f_t(\x_{t,i})\\
        \leq& -\sumT\la\nabla f_t(\x_t),\invexp_{\x_t}\x_{t,i}\ra=\sumT\la\w_t,\l_t\ra-w_{t,i}\\
        \leq&\frac{2+\ln N}{\beta}+\beta\sum_{t=1}^T\|\l_t\|_{\infty}^2-\frac{1}{4\beta}\sum_{t=2}^T\|\w_t-\w_{t-1}\|_1^2\\
        \leq &\frac{2+\ln N}{\beta}+\beta D^2\sumT\|\nabla f_t(\x_t)\|^2\\
        \leq &\frac{2+\ln N}{\beta}+2\beta D^2L\sumT f_t(\x_t)=\frac{2+\ln N}{\beta}+2\beta D^2L\bar{F}_T,
    \end{split}
\end{equation}
where the second inequality follows from Lemma \ref{optHedge}, the third one follows from Assumption \ref{diam} and H\"{o}lder's inequality, while the last inequality is due to Lemma \ref{self-bound}. By setting $\beta=\sqrt{\frac{2+\ln N}{2LD^2\bar{F}_T}}$, the regret of the meta algorithm is upper bounded by
\begin{equation}\label{sml-meta-eq1}
    \sumT f_t(\x_t)-\sumT f_t(\x_{t,i})\leq\sqrt{8D^2L(2+\ln N)\bar{F}_T}=\sqrt{8D^2L(2+\ln N)\sumT f_t(\x_t)}.
\end{equation}

Although $\bar{F}_T$ is unknown similar to the case of Optimistic Hedge, we can use  techniques like the doubling trick or a time-varying step size $\beta_t=O\lt(\frac{1}{\sqrt{1+\bar{F}_t}}\rt)$ to overcome this hurdle.

The RHS of Equation \eqref{sml-meta-eq1} depends on the cumulative loss of $\x_t$, which remains elusive. Here we  apply an algebraic fact that $x-y\leq\sqrt{ax}$ implies $x-y\leq a+\sqrt{ay}$ holds for any non-negative $x,y$ and $a$. Then 
\begin{equation}
    \sumT f_t(\x_t)-\sumT f_t(\x_{t,i})\leq 8D^2L(2+\ln N)+\sqrt{8D^2L(2+\ln N)\bar{F}_{T,i}}
\end{equation}
where we remind $\bar{F}_{T,i}=\sumT f_t(\x_{t,i})$.
% \end{proof}
\subsection{Proof of Theorem \ref{sml-reg}}
\label{app:thm:sml-reg}
% \begin{proof}
Recall the regret of the meta algorithm as in Lemma \ref{sml-meta}:
\begin{equation}\label{sml-eq1}
    \sum_{t=1}^Tf_t(\x_t)-\sum_{t=1}^Tf_t(\x_{t,i})\leq 8D^2L(2+\ln N)+\sqrt{8D^2L(2+\ln N)\bar{F}_{T,i}}.
\end{equation}
% $$
% \sum_{t=1}^Tf_t(\x_t)-\sum_{t=1}^Tf_t(\x_{t,i})\leq 8D^2L(2+\ln N)+\sqrt{8D^2L(2+\ln N)F_{T,i}}.
% $$
On the other hand, we know the regret can be decomposed as
\begin{equation}\label{sml-eq2}
    \begin{split}
        \sumT f_t(\x_t)-\sumT f_t(\u_t)=&\sumT f_t(\x_t)-\sumT f_t(\x_{t,i})+\sumT f_t(\x_{t,i})-\sumT f_t(\u_t)\\
        =&\sumT f_t(\x_t)-\sumT f_t(\x_{t,i})+\bar{F}_{T,i}-F_T.
    \end{split}
\end{equation}
We can first show there exists an almost optimal step size and bound the regret of the corresponding expert. That regret immediately provides an upper bound of $\bar{F}_{T,i}$ in terms of $F_T$. This argument eliminates the dependence on $\bar{F}_{T,i}$ and leads to a regret bound solely depending on $F_T$ and $P_T$.

Now we bound the regret of the best expert. Note that due to Lemma \ref{sml-expert}, the optimal step size is $\eta=\min\lt\{\frac{1}{2{\zeta} L},\sqrt{\frac{D^2+2DP_T}{2{\zeta} LF_T}}\rt\}$. According to Assumptions \ref{diam}, \ref{grad}, $F_T\leq GDT$. Thus the optimal step size $
\eta^\star$ is bounded by
$$
\sqrt{\frac{D}{4LGT}}\leq \eta^\star\leq\frac{1}{2{\zeta} L}.
$$
Due to our construction of $\H$, there exists $k\in[N]$ such that $\eta_k\leq\eta^\star\leq 2\eta_k$.

According to Lemma \ref{sml-expert},
\begin{equation}\label{sml-eq3}
    \begin{split}
        &\sumT f_t(\x_{t,k})-\sumT f_t(\u_t)\\
        \leq&\frac{D^2+2DP_T}{2\eta_k(1-\eta_k{\zeta}L)}+\frac{\eta_k{\zeta}LF_T}{1-\eta_k{\zeta}L}\leq\frac{D^2+2DP_T}{\eta_k}+2\eta_k{\zeta}LF_T\\
        \leq&\frac{2(D^2+2DP_T)}{\eta^\star}+2\eta^\star{\zeta}LF_T\\
        \leq&2(D^2+2DP_T)\lt(2{\zeta}L+\sqrt{\frac{2{\zeta}LF_T}{D^2+2DP_T}}\rt)+2{\zeta}LF_T\cdot\sqrt{\frac{D^2+2DP_T}{2{\zeta}LF_T}}\\
        =&4{\zeta}L(D^2+2DP_T)+3\sqrt{2{\zeta}LF_T(D^2+2DP_T)}\\
        \leq &\sqrt{2\lt(16{\zeta}^2L^2(D^2+2DP_T)^2+18{\zeta}LF_T(D^2+2DP_T)\rt)}
        % =O\lt(\sqrt{(1+P_T+F_T)P_T}\rt),
    \end{split}
\end{equation}
where we apply $\eta^\star\leq \sqrt{\frac{D^2+2DP_T}{2\zeta LF_T}}$,  $\frac{1}{\eta^\star}\leq 2{\zeta}L+\sqrt{\frac{2\zeta LF_T}{D^2+2DP_T}}$ and $\sqrt{a}+\sqrt{b}\leq\sqrt{2(a+b)}$.

Now as we combine Equations \eqref{sml-eq1}, \eqref{sml-eq2}, \eqref{sml-eq3},
\begin{equation}
    \begin{split}
        &\sumT f_t(\x_t)-\sumT f_t(\u_t)\\
        \leq& 8D^2L(2+\ln N)+\sqrt{8D^2L(2+\ln N)\bar{F}_{T,k}}+\sqrt{2\lt(16{\zeta}^2L^2(D^2+2DP_T)^2+18{\zeta}LF_T(D^2+2DP_T)\rt)}\\
        \leq &8D^2L(2+\ln N)+\sqrt{8D^2L(2+\ln N)\lt(F_{T}+\sqrt{2\lt(16{\zeta}^2L^2(D^2+2DP_T)^2+18{\zeta}LF_T(D^2+2DP_T)\rt)}\rt)}\\
        &+\sqrt{2\lt(16{\zeta}^2L^2(D^2+2DP_T)^2+18{\zeta}LF_T(D^2+2DP_T)\rt)}\\
        =&O(\sqrt{\zeta({\zeta}(1+P_T)+F_T)(P_T+1)}).
        % +4{\zeta}L(D^2+2DP_T)+(4+{\zeta})\sqrt{LF_T(D^2+2DP_T)}\\
        % \leq & 8D^2L(2+\ln N)+\sqrt{8D^2L(2+\ln N)\lt(F_{T}+4{\zeta}L(D^2+2DP_T)+(4+{\zeta})\sqrt{LF_T(D^2+2DP_T)}\rt)}\\
        % & + 4{\zeta}L(D^2+2DP_T)+(4+{\zeta})\sqrt{LF_T(D^2+2DP_T)}\\
        % =&O(\sqrt{(1+F_T+P_T)\ln N})+O(\sqrt{(1+F_T+P_T)P_T})\\
        % =&O(\sqrt{(1+F_T+P_T)P_T}),
    \end{split}
\end{equation}
where we again use $O(\cdot)$ to hide the $\log\log T$ term.
% \end{proof}
\section{Omitted Proof for Section \ref{radarb}}
\label{app:radarb}
\subsection{Necessity of Best-of-both-worlds Bound}
\label{bbw}
We highlight the necessity of achieving a best-of-both-worlds bound by computing the Fr\'{e}chet mean in the online setting on the $d$-dimensional unit Poincar\'{e} disk. The Poincar\'{e} disk looks like a unit ball in Euclidean space, but its Riemannian metric blows up near the boundary:
$$
\la \u,\v\ra_{\x}=\frac{4\la \u,\v\ra_2}{(1-\|\x\|_2^2)^2}
$$
% \begin{equation}\label{pb_metric}
%     d\s^2=\frac{4\|d\x\|^2}{(1-\|\x\|_2^2)^2}
% \end{equation}
and has constant sectional curvature $-1$. We use $\o$ to denote the origin of the Poincar\'{e} ball and $\e_i$ to be the $i$-th unit vector in the standard basis. The Poincar\'{e} ball has the following property \citep{lou2020differentiating}:
% The distance between two points $\x$ and $\y$ on the Poincar\'{e} ball can be computed as \citep{lou2020differentiating}:
\begin{equation*}
\begin{split}   
    &d(\x,\y)=\arcosh\lt(1+\frac{2\|\x-\y\|_2^2}{(1-\|\x\|_2^2)(1-\|\y\|_2^2)} \rt)\\
    &\invexp_{\mathbf{0}}\y=\arctanh(\|\y\|_2)\frac{\y}{\|\y\|_2}.
\end{split}
\end{equation*}

Now consider the following loss function
$$
f_t(\x)=\sum_{i=1}^{2d}\frac{d(\x,\x_{t,i})^2}{2d}
$$
% where $\x_{t,i}=\expmap_{\o}\lt(\frac{t}{T}\e_i\rt)=\tanh\lt(\frac{t}{T}\rt)\cdot\e_i$ for $1\leq i\leq d$ and $\x_{t,i}=\expmap_{\o}\lt(-\frac{t}{T}\e_{i-d}\rt)=-\tanh\lt(\frac{t}{T}\rt)\cdot\e_i$ for $d+1\leq i\leq 2d$. We choose $\N$ to be the convex hull of all the $\pm \tanh(1)\cdot \e_i$ and the comparator is $\u_t=\argmin_{\u_t\in\N}f_t(\u_t)$ which is indeed the origin $\o$ due to symmetry. Now we can bound $V_T$ by

where $\x_{t,i}=\frac{t}{2T}\e_i$ for $1\leq i\leq d$ and $\x_{t,i}=-\frac{t}{2T}\e_{i-d}$ for $d+1\leq i\leq 2d$. We choose $\N$ to be the convex hull of  $\pm \frac{1}{2} \e_i$, $i=1,\dots, d$. And the comparator is $\u_t=\argmin_{\u_t\in\N}f_t(\u_t)$ which is indeed the origin $\o$ due to symmetry. Now we can bound $V_T$ by
\begin{equation}
    \begin{split}
        V_T=&\sumsT\sup_{\x\in\N}\|\nabla f_t(\x)-\nabla f_{t-1}(\x)\|^2\\
        =&\frac{1}{4d^2}\sumsT\sup_{\x\in\N}\lt\|\sum_{i=1}^{2d}\lt(\invexp_{\x}\x_{t,i}-\invexp_{\x}\x_{t-1,i}\rt)\rt\|^2\\
        \leq&\frac{1}{4d^2}\sumsT c\cdot\lt(\sum_{i=1}^{2d}d(\x_{t,i},\x_{t-1,i})\rt)^2=\sumsT c\lt(\arcosh\lt(1+ \frac{2(\frac{t}{2T}-\frac{t-1}{2T} )^2}{(1-(\frac{t}{2T})^2)(1-(\frac{t-1}{2T})^2)}\rt)\rt)^2\\
        \leq & \sumsT c\lt(\arcosh\lt(1+\frac{8}{9T^2}\rt)\rt)^2\\
        \leq& \sumsT c\cdot\lt(\frac{8}{9T^2}+\sqrt{\frac{64}{81T^4}+\frac{16}{9T^2}}\rt)^2=\sumsT c\cdot O\lt(\frac{1}{{T^2}}\rt)=O\lt(\frac{1}{T}\rt),\\
        % =&\frac{1}{4d^2}\sumsT\lt(2\sum_{i=1}^d\lt\|\frac{\e_i}{T}\rt\|\rt)^2\\
        % =&\sumsT\lt\|\frac{\e_1}{T}\rt\|^2=\frac{4}{(1-\|\x_{t,i}\|_2^2)^2}\cdot\lt\|\frac{\e_i}{T}\rt\|_2^2=\sumsT\frac{4}{(1-\tanh^2\lt(\frac{t}{T}\rt))^2}\frac{1}{T^2}\\
        % =&\frac{1}{4T^2}\sum_{t=2}^T(1+2e^{-\frac{2t}{T}}+e^{-\frac{4t}{T}})=O\lt(\frac{1}{T}\rt),
    \end{split}
\end{equation}
where the first inequality is due to triangle inequality and Lemma \ref{compare}. We note that $c$ is a constant depending on the diamater of $\N$ and the sectional curvature of $\M$. The second one is due to $t\leq T$, while the third inequality follows from $\arcosh(1+x)\leq x+\sqrt{x^2+2x}$.
% where we apply the triangle inequality, the property of the geodesics and the Riemannian metric as in Equation \eqref{pb_metric}.

Similarly, we can evaluate $F_T$
\begin{equation}
    \begin{split}
        F_T=&\sumT f_t(\u_t)=\sumT f_t(\o)=\sumT\sum_{i=1}^{2d}\frac{d(\o,\x_{t,i})^2}{2d}\\
        =&\sumT\arcosh\lt(\frac{1+\frac{t^2}{4T^2}}{1-\frac{t^2}{4T^2}} \rt)=\int_{0}^T\arcosh \lt(\frac{1+\frac{t^2}{4T^2}}{1-\frac{t^2}{4T^2}} \rt)dt+O(1)\\
        =&2T\int_{0}^{\frac{1}{2}}\arcosh\frac{1+a^2}{1-a^2}da+O(1)\\
        =&2T\lt(a\cdot\arcosh\frac{1+a^2}{1-a^2}+\ln(1-a^2)\rt)\big|_{0}^{\frac{1}{2}}+O(1)=\Theta(T).
        % =&\sumT\lt\|\frac{t\e_i}{T}\rt\|^2=\sumT\frac{4}{(1-\|\x_{t,i}\|_2^2)^2}\lt\|\frac{t\cdot\e_i}{T}\rt\|_2^2\\
        % =&\sumsT\frac{4}{(1-\tanh^2\lt(\frac{t}{T}\rt))^2}\frac{t^2}{T^2}\\
        % =&\frac{1}{4T^2}\sum_{t=2}^Tt^2(1+2e^{-\frac{2t}{T}}+e^{-\frac{4t}{T}})=\Theta(T).
    \end{split}
\end{equation}
When the input losses change smoothly, $V_T\ll F_T$ and the gradient-variation bound is much tighter than the small-loss bound.

There also exist scenarios in which the small-loss bound is tighter. We still consider computing the Fr\'{e}chet mean on the Poincar\'{e} disk
$$
f_t(\x)=\sum_{i=1}^nd(\x,\x_{t,i})^2/n,
$$
but assume $\x_{t,i}=\y_i$ when $t$ is odd and $x_{t,i}=-\y_i$ when $t$ is even. We restrict $\y_1,\dots,\y_n\in\mathbb{B}(\frac{\e_1}{2},T^{-\alpha})$ where $\mathbb{B}(\p,r)$ is the geodesic ball centered at $\p$ and with radius $r$. $\N$ is the convex hull of $\mathbb{B}(\frac{\e_1}{2},T^{-\alpha})\cup\mathbb{B}(-\frac{\e_1}{2},T^{-\alpha})$. Since the input sequence is alternating, $\sup_{\x\in \N}\|\nabla f_t(\x)-\nabla f_{t-1}(\x)\|^2$ is a constant over time, and we can lower bounded it by
\begin{equation}
    \begin{split}
        &\sup_{\x\in \N}\|\nabla f_t(\x)-\nabla f_{t-1}(\x)\|^2\\
        =&\sup_{\x\in\N}\lt\|\frac{1}{n}\lt(\sum_{i=1}^n\invexp_{\x}\x_{t,i}-\sum_{i=1}^n\invexp_{\x}\x_{t-1,i}\rt)\rt\|^2\\
        =&\sup_{\x\in\N}\lt\|\frac{2}{n}\sum_{i=1}^n\invexp_{\x}\y_i\rt\|^2\geq \frac{4}{n^2}\lt\|\sum_{i=1}^n\invexp_{\o}\y_i\rt\|^2\\
        =&\frac{4}{n^2}\lt(\lt\| \lt(\sum_{i=1}^n\invexp_{\o}\y_i\rt)^{\parallel}\rt\|^2 + \lt\|\lt(\sum_{i=1}^n\invexp_{\o}\y_i\rt)^{\perp}\rt\|^2\rt)\\
        \geq&\frac{4}{n^2}\lt\| \lt(\sum_{i=1}^n\invexp_{\o}\y_i\rt)^{\parallel}\rt\|^2\geq \frac{4}{n^2}\lt\|n\cdot\arctanh\lt(\frac{1}{2}-T^{-\alpha}\rt)\e_1\rt\|_{\mathbf{0}}^2\\
        =&16\arctanh^2\lt(\frac{1}{2}-T^{-\alpha}\rt)\\
        \geq& 16\lt(\frac{\frac{1}{2}-T^{-\alpha}}{\frac{3}{2}-T^{-\alpha}}\rt)^2=\Omega(1). \\
        % \geq & 4\arcosh\lt(1+\frac{2\lt(\frac{1}{2}-T^{-\alpha}\rt)^2}{1-\lt(\frac{1}{2}-T^{-\alpha}\rt)^2} \rt) \\
        % \geq&4\ln\lt(\frac{1+\lt(\frac{1}{2}-T^{-\alpha}\rt)^2}{1-\lt(\frac{1}{2}-T^{-\alpha}\rt)^2} \rt) \\
        % \geq& 8\frac{2\lt(\frac{1}{2}-T^{-\alpha}\rt)^2}{1+\lt(\frac{1}{2}-T^{-\alpha}\rt)^2}=\Omega(1),
    \end{split}
\end{equation}
where we use $\a^{\parallel}$ and $\a^{\perp}$ to denote components parallel and orthogonal to the direction of $\e_1$, respectively. The key observation is the lower bound attains when $\lt(\sum_{i=1}^n\invexp_{\o}\y_i\rt)^{\perp}$ is zero, and each $\y_i$ has the smallest component along $\e_1$, i.e., $\y_i=\lt(\frac{1}{2}-T^{-\alpha}\rt)\e_1$. We also use $\arctanh(x)\geq\frac{x}{1+x}$. Thus we have $V_T=\Omega(T)$. Now we consider $F_T$. By Lemma \ref{jensen}, we know that $\u_t$ lies within the same geodesic ball as $\x_{t,i}$, $i\in[n]$. Thus
$$
F_T=\sumT f_t(\u_t)=\sumT\sum_{i=1}^n d(\u_t,\x_{t,i})^2/n\leq T\cdot \lt(\frac{2}{T^\alpha}\rt)^2=O(T^{1-2\alpha}).
$$
We can see whenever $\alpha>0$, $F_T=o(T)$ but $V_T=\Omega(T)$.
\subsection{Proof of Theorem \ref{best-meta}}
\label{app:thm:best-meta}
%  \begin{proof}
 By Lemma \ref{optHedge} we have
 \begin{equation}\label{best-eq1}
        \sum_{t=1}^Tf_t(\x_t)-f_t(\x_{t,i})
        \leq\frac{2+\ln N}{\beta}+\beta\sum_{t=1}^T\|\l_t-\m_t\|_{\infty}^2-\frac{1}{4\beta}\sum_{t=2}^T\|\w_t-\w_{t-1}\|_1^{2}
\end{equation}
We  bound $\sum_{t=1}^T\|\l_t-\m_t\|_{\infty}^2$ in terms of $\sum_{t=1}^T\|\l_t-\m_t^v\|_{\infty}^2$ and $\sum_{t=1}^T\|\l_t-\m_t^s\|_{\infty}^2$ as follows. By Assumptions \ref{diam} and \ref{grad}, $\|\l_t-\m_t\|_2^2\leq 4NG^2D^2$ and we can compute  $d_t(\m)= \|\l_t-\m\|_{2}^2$ is $\frac{1}{8NG^2D^2}$-exp-concave.
. We have
\begin{equation}\label{best-eq2}
    \begin{split}
        &\sumT\|\l_t-\m_t\|_{\infty}^2\leq\sumT\|\l_t-\m_t\|_2^2\\
        \leq&\min\lt\{\sumT\|\l_t-\m_t^v\|_2^2,\sumT\|\l_t-\m_t^s\|_2^2\rt\}+8NG^2D^2\ln 2\\
        \leq &N\min\lt\{\sumT\|\l_t-\m_t^v\|_{\infty}^2,\sumT\|\l_t-\m_t^s\|_{\infty}^2\rt\}+8NG^2D^2\ln 2,\\
    \end{split}
\end{equation}
where for the second inequality we use Lemma \ref{hedge} and for the first and the third one the norm inequality $\|\x\|_{\infty}\leq\|\x\|_2\leq\sqrt{N}\|\x\|_{\infty}$ is used.

Combining Equations \eqref{var-meta}, \eqref{best-eq1}, \eqref{best-eq2} and Lemma \ref{sml-meta}, we have
$$
\sumT f_t(\x_t)-\sumT f_t(\x_{t,i})\leq\frac{2+\ln T}{\beta}+\beta N\lt(D^2\min\{3V_T,\bar{F}_T\}+8G^2D^2\ln 2\rt)
$$

holds for any $\beta\leq \frac{1}{\sqrt{12(D^4L^2+D^2G^2{\zeta}^2)}}$ and $i\in[N]$.

Suppose $\beta^\star=\sqrt{\frac{2+\ln N}{N(D^2\min\{3(V_T+G^2),\bar{F}_T\}+8G^2D^2\ln 2)} }\leq \frac{1}{\sqrt{12(D^4L^2+D^2G^2{\zeta}^2)}}$, then
$$
\sumT f_t(\x_t)-\sumT f_t(\x_{t,i})\leq 2\sqrt{(2+\ln N)N(D^2\min\{3(V_T+G^2),\bar{F}_{T}\}+8G^2D^2\ln 2)}.
$$
Otherwise
$$
\sumT f_t(\x_t)-\sumT f_t(\x_{t,i})\leq 2(2+\ln N)\sqrt{12(D^4L^2+D^2G^2{\zeta}^2)}.
$$
In sum, we have
\begin{equation}
    \begin{split}
        &\sumT f_t(\x_t)-\sumT f_t(\x_{t,i})\\
        \leq &\max\lt\{2\sqrt{(2+\ln N)N(D^2\min\{3(V_T+G^2),\bar{F}_{T}\}+8G^2D^2\ln 2)},2(2+\ln N)\sqrt{12(D^4L^2+D^2G^2{\zeta}^2)}\rt\}\\
        =&O(\log T\cdot\min\{V_T,\bar{F}_T\}).
    \end{split}
\end{equation}

% $\beta=\min\left\{\sqrt{\frac{2+\ln N}{N(D^2\min\{3V_T,\bar{F}_T\}+8G^2D^2\ln 2)} },\frac{\tanh(\sqrt{\kappa}D)}{\sqrt{12D^4(L^2(\tanh(\sqrt{\kappa}D))^2+\kappa G^2)}}\right\}$
%  \end{proof}
 \subsection{Proof of Theorem \ref{best-reg}}
 \label{app:thm:best-reg}
%  \begin{proof}
% We have the decomposition about the regret
% \begin{equation}
%     \sumT f_t(\x_t)-\sumT f_t(\u_t)=\sumT f_t(\x_t)-\sumT f_t(\x_{t,i})+\sumT f_t(\x_{t,i})-\sumT f_t(\u_t)
% \end{equation}
By Theorem \ref{best-meta}, we know 
$$
\sumT f_t(\x_t)-\sumT f_t(\x_{t,i})=O\lt(\sqrt{\ln T(\min\{V_T,\bar{F}_T\})}\rt)
$$
holds for any $i\in [N^v+N^s]$. WLOG, we assume $k$ and $k'$ to be indexes of the best experts for the gradient-variation bound and the small-loss bound, respectively. Then by Theorem \ref{var-reg},
\begin{equation}
    \sumT f_t(\x_{t,k})-f_t(\u_t)\leq\frac{3}{2}\sqrt{2(D^2+2DP_T){\zeta}V_T}+(D^2+2DP_T)\frac{1+(1+2\zeta\delta^2L^2)^{\frac{1}{2}}}{\delta}
\end{equation}
while by Theorem \ref{sml-reg},
\begin{equation}
    \sumT f_t(\x_{t,k'})-f_t(\u_t)\leq \sqrt{2\lt(16{\zeta}^2L^2(D^2+2DP_T)^2+18{\zeta}LF_T(D^2+2DP_T)\rt)}.
\end{equation}
Since the regret admits the following decompositions
\begin{equation}
    \begin{split}
        &\sumT f_t(\x_t)-\sumT f_t(\u_t)\\
        =&\sumT f_t(\x_t)-\sumT f_t(\x_{t,k})+\sumT f_t(\x_{t,k})-\sumT f_t(\u_t)\\
        =&\sumT f_t(\x_t)-\sumT f_t(\x_{t,k'})+\sumT f_t(\x_{t,k'})-\sumT f_t(\u_t)\\,
        \end{split}
        \end{equation}
        we indeed have
\begin{equation}
    \begin{split}
        &\sumT f_t(\x_t)-\sumT f_t(\u_t)\\
        \leq & O\lt(\sqrt{\ln T(\min\{V_T,\bar{F}_T\})}\rt)+\min\lt\{\frac{3}{2}\sqrt{2(D^2+2DP_T){\zeta}V_T}+(D^2+2DP_T)\frac{1+(1+2\zeta\delta^2L^2)^{\frac{1}{2}}}{\delta},\rt .\\
        & \lt.\sqrt{2\lt(16{\zeta}^2L^2(D^2+2DP_T)^2+18{\zeta}LF_T(D^2+2DP_T)\rt)}\rt\}\\
        =&O\lt(\sqrt{\ln T(\min\{V_T,\bar{F}_T\})}\rt)\\
        &+O\lt(\min\lt\{\sqrt{{\zeta}(1+P_T)((1+P_T)/\delta^2+V_T)}, \sqrt{{\zeta}(1+P_T)(\zeta(1+P_T)+F_T)}\rt\}\rt)
        % O\lt(\sqrt{{\zeta}^2(1+P_T)(1+P_T+F_T)}\rt)
    \end{split}
\end{equation}
Note that $\bar{F}_T$ can be processed similarly as in Lemma \ref{sml-meta} and Theorem \ref{sml-reg} to get $F_T$.
% According to the proof of Theorems \ref{var-reg} and \ref{sml-reg}, we know increasing the number of step sizes can only improve the performance and
% $$
% \sumT f_t(\x_{t,i})-\sumT f_t(\u_t)\leq O\lt(\sqrt{(P_T+\min\{V_T,F_T\})P_T}\rt).
% $$
In sum, the regret is bounded by
$$
\sumT f_t(\x_t)-\sumT f_t(\u_t)=O\lt(\sqrt{\zeta(P_T(\zeta +1/\delta^2)+\min\{V_T,F_T\}+1)(1+P_T)+\ln T\min\{V_T,F_T\}}\rt).
$$
\section{Technical Lemmas}
\label{app:lem}
We  need the following technical lemmas.
\begin{lemma}\label{gsc-mean}\cite[Theorem 2.1]{bento2021elements}
Suppose $f:\M\rightarrow\R$ is geodesically convex and $\M$ is Hadamard. The  geodesic mean $\bar{\x}_k$ w.r.t coefficients $a_1,\dots,a_N$ ($\sum_{i=1}^Na_i=1$, $a_i\geq 0$) is defined as:
\begin{equation}
    \begin{split}
        &\bar{\x}_1=\x_1\\
        &\bar{\x}_k=\expmap_{\bar{\x}_{k-1}}\left(\frac{a_k}{\sum_{i=1}^ka_i}\expmap_{\bar{\x}_{k-1}}^{-1}\x_k\right),\quad k>1.
    \end{split}
\end{equation}
Then we have
\begin{equation}
    f(\bar{\x}_N)\leq\sum_{i=1}^Na_i f(\x_i).
\end{equation}
\end{lemma}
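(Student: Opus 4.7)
The natural approach is induction on $N$, exploiting the recursive definition of the geodesic mean so that each step involves only a single (binary) geodesic interpolation on which the definition of geodesic convexity can be applied directly.

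To set up the induction cleanly, I would first generalize the statement slightly: for any non-negative weights $a_1,\dots,a_N$ with $A_N := \sum_{i=1}^N a_i > 0$ (not required to sum to $1$), the recursively defined iterate $\bar{\x}_N$ satisfies $f(\bar{\x}_N) \leq A_N^{-1}\sum_{i=1}^N a_i f(\x_i)$. The original lemma then follows by specializing to $A_N = 1$. This reformulation makes the inductive step match the structure of the recursion, where the "accumulated" weight $A_{k-1}$ plays against the fresh weight $a_k$.

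The base case $N=1$ is immediate since $\bar{\x}_1 = \x_1$. For the inductive step, by construction $\bar{\x}_k$ lies on the unique geodesic from $\bar{\x}_{k-1}$ to $\x_k$ (uniqueness is guaranteed by Hopf--Rinow on the Hadamard manifold $\M$) at parameter $t = a_k / A_k$. Geodesic convexity of $f$ along this segment yields
\begin{equation*}
f(\bar{\x}_k) \leq \frac{A_{k-1}}{A_k} f(\bar{\x}_{k-1}) + \frac{a_k}{A_k} f(\x_k).
\end{equation*}
Applying the inductive hypothesis $f(\bar{\x}_{k-1}) \leq A_{k-1}^{-1} \sum_{i=1}^{k-1} a_i f(\x_i)$ and simplifying gives $f(\bar{\x}_k) \leq A_k^{-1} \sum_{i=1}^{k} a_i f(\x_i)$, completing the induction.

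There is no real obstacle here: the Hadamard assumption is used only to ensure that each binary exponential/inverse-exponential map is well-defined and that the geodesic connecting $\bar{\x}_{k-1}$ to $\x_k$ lies entirely in the domain of $f$ (so that the one-step geodesic convexity inequality applies). The only mild subtlety is degenerate weights: if some prefix sum $A_{k-1}$ vanishes, one should handle it by skipping zero-weight points or defining $\bar{\x}_k := \x_k$ in that case, which does not affect the final bound. The argument is otherwise a direct telescoping of the binary Jensen inequality, and the resulting bound is tight in the Euclidean limit where it reduces to the standard Jensen inequality for convex combinations.
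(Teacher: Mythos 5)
Your proof is correct and follows essentially the same route as the paper's: an induction on $k$ proving the strengthened claim $f(\bar{\x}_k)\leq\bigl(\sum_{j=1}^k a_j\bigr)^{-1}\sum_{i=1}^k a_i f(\x_i)$, with the inductive step an application of binary geodesic convexity along the segment from $\bar{\x}_{k-1}$ to $\x_k$. The only minor difference is that you also flag the degenerate case of vanishing prefix sums, which the paper's proof glosses over.
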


\begin{proof}

We use induction to show a stronger statement
$$
f(\bar{\x}_k)\leq\sum_{i=1}^k\frac{a_i}{\sum_{j=1}^k a_j} f(\x_i)
$$
holds for $k=1,\dots,N$.

For $k=1$, this is obviously true because $\bar{\x}_1=\x_1$. Suppose 
$$
f(\bar{\x}_k)\leq\sum_{i=1}^k\frac{a_i}{\sum_{j=1}^k a_j} f(\x_i)
$$
holds for some $k$, then by geodesic convexity,
\begin{equation}
    \begin{split}
        f(\bar{\x}_{k+1})\leq& \left(1-\frac{a_{k+1}}{\sum_{j=1}^{k+1}a_j}\right)f(\bar{\x}_k)+\frac{a_{k+1}}{\sum_{j=1}^{k+1}a_j}f(\x_{k+1})\\
        \leq&\sum_{i=1}^k\frac{a_i}{\sum_{j=1}^{k+1}a_j}f(\x_i)+\frac{a_{k+1}}{\sum_{j=1}^{k+1}a_j}f(\x_{k+1})\\
        =&\sum_{i=1}^{k+1}\frac{a_if(\x_i)}{\sum_{j=1}^{k+1}a_j}.
    \end{split}
\end{equation}

The first inequality is due to gsc-convexity: for the geodesic determined by $\gamma(0)=\bx_k$ and $\gamma(1)=\x_{k+1}$ we have $\bx_{k+1}=\gamma\lt(\frac{a_{k+1}}{\sum_{i=1}^{k+1}a_i}\rt)$ and thus $f(\gamma(t))\leq(1-t)f(\gamma(0))+tf(\gamma(1))$. For the second inequality, we use the induction hypothesis.
Given $\sum_{i=1}^Na_i=1$, the lemma is proved.
\end{proof}

The computation of the geodesic averaging is summarized in Algorithm \ref{alg:gsc}, which serves as a sub-routine of \radar.

\begin{algorithm2e}[H]
\caption{Geodesic Averaging}\label{alg:gsc}
% \begin{algorithmic}
\KwData{$N$ points $\x_1,\dots,\x_N\in \N$ and $N$ real weights $w_1,\dots,w_N$.}
 Let $\bar{\x}_{1}=\x_1$\\
\For{$k=2,\dots,N$} {
     $\bar{\x}_k=\expmap_{\bar{\x}_{k-1}}\left(\frac{w_k}{\sum_{i=1}^kw_i}\expmap^{-1}_{\bar{\x}_{k-1}}\x_k\right)$\\
    }
     Return $\bar{\x}_N$.\\
% \end{algorithmic}
\end{algorithm2e}

\begin{lemma}
\label{frechet}
Suppose  $\x_1,\dots,\x_N,\y_1,\dots,\y_N\in\N$ where $\N$ is a gsc-convex subset of a Hadamard manifold $\M$ and the diameter of $\N$ is upper bounded by $D$. Let $\bar{\x}$ and $\bar{\y}$ be the weighted Fr\'echet mean with respect to coefficient vectors $\a$ and $\b$ $(\a,\b\in\Delta_N)$, defined as $\bar{\x}=\argmin_{\x\in\N}\sum_{i=1}^N a_i\cdot d(\x,\x_i)^2$ and $\bar{\y}=\argmin_{\y\in\N}\sum_{i=1}^N b_i\cdot d(\y,\y_i)^2$.
Then we have
\begin{equation}
    d(\bar{\x},\bar{\y})\leq \sum_{i=1}^N a_i\cdot d(\x_i,\y_i)+D\sum_{i=1}^N|a_i-b_i|.
\end{equation}
\end{lemma}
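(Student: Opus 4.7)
The plan is to decouple the two perturbations by inserting the auxiliary Fr\'echet mean
\begin{equation*}
\bar{\w} := \argmin_{\z\in\N} \sum_{i=1}^N a_i\, d(\z,\y_i)^2,
\end{equation*}
and bounding $d(\bar{\x},\bar{\y}) \leq d(\bar{\x},\bar{\w}) + d(\bar{\w},\bar{\y})$ via triangle inequality. The first term isolates the change caused by moving support points $\x_i \to \y_i$ with weights frozen at $\a$; the second isolates the change caused by shifting weights $\a \to \b$ with points frozen at $\y_i$. The single engine driving both bounds is the $2$-strong geodesic convexity of $d(\cdot,p)^2$ on a Hadamard manifold (an immediate CAT$(0)$ consequence), which, combined with first-order optimality for a constrained minimizer over the gsc-convex set $\N$, gives for any $2$-strongly gsc-convex $F:\N\to\mathbb{R}$ with minimizer $\hat{\z}$ the inequality $F(\z) \geq F(\hat{\z}) + d(\hat{\z},\z)^2$ for every $\z\in\N$.

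For the weight-perturbation term, write $F_{\a}(\z) := \sum_i a_i d(\z,\y_i)^2$ and $F_{\b}(\z) := \sum_i b_i d(\z,\y_i)^2$. Applying the quadratic lower bound above at $\bar{\w}$ (for $F_{\a}$) and at $\bar{\y}$ (for $F_{\b}$) and summing yields
\begin{equation*}
2\,d(\bar{\w},\bar{\y})^2 \;\leq\; (F_{\a}-F_{\b})(\bar{\y}) - (F_{\a}-F_{\b})(\bar{\w}) \;=\; \sum_i (a_i-b_i)\bigl[d(\bar{\y},\y_i)^2 - d(\bar{\w},\y_i)^2\bigr].
\end{equation*}
Factoring each squared difference as $(d(\bar{\y},\y_i) - d(\bar{\w},\y_i))(d(\bar{\y},\y_i) + d(\bar{\w},\y_i))$ and combining the reverse triangle inequality $|d(\bar{\y},\y_i) - d(\bar{\w},\y_i)| \leq d(\bar{\w},\bar{\y})$ with the diameter bound $d(\bar{\y},\y_i) + d(\bar{\w},\y_i) \leq 2D$ bounds the right-hand side by $2D\,d(\bar{\w},\bar{\y}) \sum_i |a_i-b_i|$. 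Cancelling a factor of $d(\bar{\w},\bar{\y})$ (the case $d(\bar{\w},\bar{\y}) = 0$ being trivial) yields $d(\bar{\w},\bar{\y}) \leq D\sum_i |a_i-b_i|$.

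For the point-perturbation term, the parallel two-sided argument with $G_{\x}(\z) := \sum_i a_i d(\z,\x_i)^2$ (minimized at $\bar{\x}$) and $G_{\y}(\z) := \sum_i a_i d(\z,\y_i)^2$ (minimized at $\bar{\w}$) produces
\begin{equation*}
2\,d(\bar{\x},\bar{\w})^2 \;\leq\; \sum_i a_i\bigl[g_i(\bar{\w}) - g_i(\bar{\x})\bigr], \qquad g_i(\z) := d(\z,\x_i)^2 - d(\z,\y_i)^2.
\end{equation*}
The Riemannian gradient $\nabla g_i(\z) = 2\bigl(\expmap_{\z}^{-1}\y_i - \expmap_{\z}^{-1}\x_i\bigr)$ has norm uniformly bounded on $\N$ by $2\,d(\x_i,\y_i)$, because on a Hadamard manifold $\expmap_{\z}^{-1}$ is $1$-Lipschitz in its second argument (a CAT$(0)$/Alexandrov comparison fact). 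Integrating along the geodesic from $\bar{\x}$ to $\bar{\w}$, which lies in $\N$ by gsc-convexity, gives $|g_i(\bar{\w}) - g_i(\bar{\x})| \leq 2\,d(\x_i,\y_i)\cdot d(\bar{\x},\bar{\w})$, and cancelling one factor of $d(\bar{\x},\bar{\w})$ delivers $d(\bar{\x},\bar{\w}) \leq \sum_i a_i d(\x_i,\y_i)$. Combining both pieces via triangle inequality completes the proof.

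The main obstacle, as I anticipate it, is upgrading the naive two-sided strong-convexity bound for the point-perturbation term into a \emph{linear} estimate in $\sum_i a_i d(\x_i,\y_i)$. If one only uses $|d(\cdot,\x_i)^2 - d(\cdot,\y_i)^2| \leq 2D\cdot d(\x_i,\y_i)$, the argument produces merely $d(\bar{\x},\bar{\w}) = O\bigl(\sqrt{D\sum_i a_i d(\x_i,\y_i)}\bigr)$, which is weaker than claimed. The crucial ingredient that removes this gap is the $1$-Lipschitz property of $\expmap^{-1}$: it replaces the $D$-level uniform bound on $\|\nabla g_i\|$ with the $d(\x_i,\y_i)$-level bound, and this is precisely the single point at which the non-positive curvature of $\M$ is genuinely used.
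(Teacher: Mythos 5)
Your proof is correct, and it takes a genuinely different route from the paper's. Both rely on the variance inequality for Fr\'echet means on Hadamard domains (your quadratic lower bound from $2$-strong convexity; the paper cites it as \cite[Theorem 2.4]{bacak2014computing}), but the resemblance stops there. The paper attacks $d(\bar\x,\bar\y)$ directly: it invokes Sturm's quadruple inequality $d(\x,\y)^2+d(\u,\v)^2\leq d(\x,\v)^2+d(\y,\u)^2+2\,d(\x,\u)\,d(\y,\v)$, weights it by $a_i$, sums over $i$, and adds the two-sided variance inequality for the \emph{same} pair $(\bar\x,\bar\y)$; after rearrangement the cross term $d(\bar\x,\bar\y)\sum_i a_i d(\x_i,\y_i)$ and the weight-difference term fall out in one shot, and a final division by $2d(\bar\x,\bar\y)$ closes the argument. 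You instead insert the auxiliary mean $\bar{\w}$ and split via triangle inequality, so the two perturbations (support points vs.\ weights) are each handled by a clean two-sided strong-convexity argument on its own pair of objectives; for the point-perturbation piece the new ingredient is the $1$-Lipschitz property of $\expmap_{\z}^{-1}$, which is exactly the content of the upper-curvature law of cosines (Lemma~\ref{cos2}). The underlying CAT$(0)$ inputs are thus the same in kind, but your packaging is more modular, and your concluding remark correctly identifies that the $\expmap^{-1}$-Lipschitz estimate is precisely what upgrades the naive $O(\sqrt{\sum a_i d(\x_i,\y_i)})$ bound to the linear one; the paper absorbs this same geometric fact into the (less standard) Sturm quadruple inequality.
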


\begin{proof}
Recall that on Hadamard manifolds, the following inequality   \citep[Prop. 2.4]{sturm2003probability}
$$
d(\x,\y)^2+d(\u,\v)^2\leq d(\x,\v)^2+d(\y,\u)^2+2d(\x,\u)\cdot d(\y,\v)$$
holds for any $\x,\y,\u,\v\in\M$. A direct application of the above inequality yields
% Applying the inequality for our problem, we have
\begin{equation}\label{eq2-0}
    d(\x_i,\bar{\y})^2+d(\y_i,\bar{\x})^2\leq d(\x_i,\bar{\x})^2+d(\y_i,\bar{\y})^2+2d(\bar{\x},\bar{\y})\cdot d(\x_i,\y_i)\qquad \forall i\in [N].
\end{equation}
By \cite[Theorem 2.4]{bacak2014computing}:
\begin{equation}
    \label{eq2-1}
    \sum_{i=1}^Na_i\cdot d(\x_i,\bar{\x})^2+\sum_{i=1}^N b_i\cdot d(\y_i,\bar{\y})^2+2d(\bar{\x},\bar{\y})^2\leq \sum_{i=1}^N a_i\cdot d(\x_i,\bar{\y})^2+\sum_{i=1}^N b_i\cdot d(\y_i,\bar{\x})^2
\end{equation}
Multiplying Equation \eqref{eq2-0} by $a_i$, summing from $i=1$ to $n$ and adding Equation \eqref{eq2-1}, we have
\begin{equation}
    \begin{split}
        2d(\bar{\x},\bar{\y})^2\leq& 2d(\bar{\x},\bar{\y})\sum_{i=1}^N a_i\cdot d(\x_i,\y_i)+\sum_{i=1}^N(a_i-b_i) d(\y_i,\bar{\y})^2+\sum_{i=1}^N(b_i-a_i)d(\y_i,\bar{\x})^2\\
        =&2d(\bar{\x},\bar{\y})\sum_{i=1}^N a_i\cdot d(\x_i,\y_i)+\sum_{i=1}^N (a_i-b_i)\cdot (d(\y_i,\bar{\y})-d(\y_i,\bar{\x}))\cdot (d(\y_i,\bar{\y})+d(\y_i,\bar{\x}))\\
        \leq& 2d(\bar{\x},\bar{\y})\sum_{i=1}^N a_i\cdot d(\x_i,\y_i)+2D\sum_{i=1}^N |a_i-b_i|d(\bar{\x},\bar{\y}),
    \end{split}
\end{equation}
where for the last inequality we use the triangle inequality for geodesic metric spaces and Assumption \ref{diam}. Now dividing both sides by $2d(\bar{\x},\bar{\y})$ and we complete the proof.
\end{proof}
% \subsection{Riemannian Cosine Laws}
% \label{app:cosine}
\begin{lemma}
\label{cos1}
\cite[Lemma 5]{zhang2016first}. Let $\mathcal{M}$ be a Riemannian manifold with sectional curvature lower bounded by $\kappa \leq 0$. Consider $\N$, a gsc-convex subset of $\M$ with diameter $D$. For a geodesic triangle fully lies within $\N$ with side lengths $a, b, c$, we have
$$
a^{2} \leq {\zeta}(\kappa, D) b^{2}+c^{2}-2 b c \cos A
$$
where ${\zeta}(\kappa, D):=\sqrt{-\kappa} D \operatorname{coth}(\sqrt{-\kappa} D)$.
\end{lemma}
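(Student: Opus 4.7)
The plan is to establish this standard Riemannian comparison inequality by reducing to the two-dimensional model space of constant curvature $\kappa$, then converting the exact hyperbolic law of cosines into the claimed quadratic bound. Throughout, write $\mu := \sqrt{-\kappa}$, so that $\zeta(\kappa, D) = \mu D \coth(\mu D)$.

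First, I would invoke the Toponogov (hinge) comparison theorem. Since $\M$ has sectional curvature bounded below by $\kappa \leq 0$, any geodesic triangle contained in $\N$ with two sides $b, c$ and included angle $A$ satisfies $a \leq \tilde a$, where $\tilde a$ is the opposite side of the comparison triangle built with the same $b, c, A$ in the two-dimensional model space $\mathbb{H}^2_\kappa$ of constant curvature $\kappa$. Because the diameter assumption guarantees that all relevant sides are at most $D$ and no cut-locus issues arise, this reduction is legal. It is then enough to prove the inequality with $a$ replaced by $\tilde a$ inside the model space, where an exact identity is available.

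Second, in $\mathbb{H}^2_\kappa$ the hyperbolic law of cosines gives
\[
\cosh(\mu \tilde a) = \cosh(\mu b)\cosh(\mu c) - \sinh(\mu b)\sinh(\mu c)\cos A.
\]
Using $\cosh(x) - 1 \geq x^2/2$, any upper bound on $\cosh(\mu \tilde a) - 1$ yields a matching upper bound on $(\mu \tilde a)^2$. I would rewrite
\[
\cosh(\mu \tilde a) - 1 = \bigl[\cosh(\mu b) - 1\bigr]\cosh(\mu c) + \bigl[\cosh(\mu c) - 1\bigr] - \sinh(\mu b)\sinh(\mu c)\cos A
\]
and bound each piece uniformly on $[0, D]$ using two monotonicity facts: the functions $x \mapsto \sinh(x)/x$ and $x \mapsto x \coth(x)$ are both non-decreasing on $(0,\infty)$. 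These yield, for all $x \in [0, D]$, the scalar inequalities $\mu x \leq \sinh(\mu x) \leq \mu x \cdot \frac{\sinh(\mu D)}{\mu D}$ and $\cosh(\mu x) - 1 \leq \frac{(\mu x)^2}{2}\cdot \zeta(\kappa, D)$, the latter obtained by integrating $\sinh$ twice and using monotonicity of the sinc-analog $\sinh(t)/t$.

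Third, I would substitute these bounds term-by-term, carefully distributing the $\zeta$ factor so that it lands only on the $b$-contribution, and use $\sinh(\mu b)\sinh(\mu c) \geq \mu^2 bc$ (resp.\ $\leq \mu^2 bc\cdot \zeta$) according to the sign of $\cos A$ to handle the cross term. After multiplying through by $2/\mu^2$ and combining with $a \leq \tilde a$, this produces $a^2 \leq \zeta(\kappa, D)\, b^2 + c^2 - 2bc\cos A$.

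The main obstacle is the asymmetric bookkeeping in the third step: the cross term $-\sinh(\mu b)\sinh(\mu c)\cos A$ has indeterminate sign, and one must ensure that exactly the side $b^2$ (and not $c^2$ nor the cross term) picks up the multiplicative factor $\zeta$, since this asymmetry is what the downstream OMD and R-OGD analyses rely on. If the direct bookkeeping turns out to be delicate, a cleaner alternative I would pursue in parallel is the Hessian comparison route: on any geodesic ball of radius $D$ inside a manifold with curvature $\geq \kappa$, the Hessian comparison theorem gives $\operatorname{Hess}\!\bigl(\tfrac12 d(\cdot, p)^2\bigr) \preceq \zeta(\kappa, D)\cdot \mathrm{Id}$, and integrating this estimate along the side of length $b$ emanating from the vertex carrying angle $A$, together with the identity $\nabla \tfrac12 d(\cdot, p)^2 = -\exp_\cdot^{-1}(p)$, immediately produces the claimed quadratic inequality with $\zeta$ attached only to $b^2$.
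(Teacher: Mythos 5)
Your reduction via Toponogov's hinge comparison is fine, but the way you then convert the hyperbolic law of cosines into the quadratic bound cannot work. You propose to sandwich $(\mu \tilde a)^2/2 \leq \cosh(\mu\tilde a)-1$ and then bound $\cosh(\mu\tilde a)-1$ above by $\tfrac{\mu^2}{2}\bigl(\zeta b^2+c^2-2bc\cos A\bigr)$; the second inequality is false in general, because the middle quantity grows exponentially in the side lengths while the target is only quadratic. Concretely, with $b=c=D$ and $A=\pi/2$ the law of cosines gives $\cosh(\mu\tilde a)-1=\cosh^2(\mu D)-1\approx e^{2\mu D}/4$, whereas $\tfrac{\mu^2}{2}(\zeta(\kappa,D)D^2+D^2)$ is polynomial in $\mu D$. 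The individual scalar estimates fail as well: integrating $\sinh(t)\leq t\,\sinh(\mu D)/(\mu D)$ gives $\cosh(\mu x)-1\leq \tfrac{(\mu x)^2}{2}\cdot\frac{\sinh(\mu D)}{\mu D}$, and $\sinh(\mu D)/(\mu D)$ is \emph{not} $\zeta(\kappa,D)=\mu D\coth(\mu D)$ (for large $\mu D$ it is exponentially larger, so your claimed bound $\cosh(\mu x)-1\leq\tfrac{(\mu x)^2}{2}\zeta$ already fails at $x=D$); moreover $2(\cosh(\mu c)-1)\geq (\mu c)^2$ goes in the wrong direction, so the bare $c^2$ term can never be recovered term-by-term. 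This is exactly why the source proof \citep[Lemma 5]{zhang2016first} — which the present paper only cites, it does not reprove the lemma — does not pass through $\cosh(\tilde a)$: after the Toponogov reduction it compares $\tilde a^2$ with the quadratic directly (fixing $c$ and $A$, analyzing the dependence on $b$ and exploiting monotonicity of $x\mapsto x\coth x$), so that the exponentially large terms cancel instead of being estimated separately.

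Your fallback, by contrast, is correct and is the cleanest route. Take $p$ to be the far endpoint of the side of length $c$, and let $\gamma$ be the unit-speed geodesic of length $b$ issuing from the vertex carrying the angle $A$; set $g(t)=\tfrac12 d(\gamma(t),p)^2$. Then $g(0)=c^2/2$, $g(b)=a^2/2$, $g'(0)=\langle-\expmap^{-1}_{\gamma(0)}p,\gamma'(0)\rangle=-c\cos A$, and the Hessian comparison theorem for curvature bounded below by $\kappa$ gives $g''(t)\leq \mu\, r\coth(\mu r)\leq\zeta(\kappa,D)$ with $r=d(\gamma(t),p)\leq D$, using monotonicity of $r\mapsto \mu r\coth(\mu r)$; integrating twice yields $a^2\leq\zeta(\kappa,D)\,b^2+c^2-2bc\cos A$, with $\zeta$ attached precisely to $b^2$ because $b$ is the side along which you integrate. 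To make this airtight you should add one sentence on why $d(\cdot,p)^2$ is smooth along $\gamma$ (no cut-locus issue; automatic on Hadamard manifolds and in the gsc-convex set where the paper applies the lemma, and in the general lower-curvature-bound case one argues with support functions or falls back on Toponogov). I would drop the law-of-cosines bookkeeping entirely and write up the Hessian-comparison argument, or simply cite \cite{zhang2016first} as the paper does.
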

\begin{lemma}
\label{cos2}
\cite[Prop. 4.5]{sakai1996riemannian}
% (Rauch's Comparison Theorem)
% \cite[Corollary 2.1]{alimisis2020continuous}.
Let $\mathcal{M}$ be a Riemannian manifold with sectional curvature upper bounded by $\kappa\leq 0$. Consider $\N$, a gsc-convex subset of $\M$ with diameter $D$.
For a geodesic triangle fully lies within $\N$ with side lengths $a, b, c$, we have

$$
a^{2} \geq b^{2}+c^{2}-2 b c \cos A.
$$
\end{lemma}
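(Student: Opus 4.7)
The plan is to reduce the claim to the Euclidean law of cosines via an Alexandrov-type comparison argument, using the fact that on a manifold with sectional curvature upper bounded by $0$, geodesic triangles are ``thinner'' than their Euclidean comparison triangles. Concretely, I would construct the Euclidean comparison triangle $\bar\triangle(\bar p,\bar q,\bar r)$ in $\R^2$ with side lengths exactly $a,b,c$ matching those of the given manifold triangle $\triangle(p,q,r)\subset\N$, and let $\bar A$ denote the Euclidean angle at the vertex corresponding to $A$. Euclidean trigonometry immediately gives
\[
a^2 \;=\; b^2 + c^2 - 2bc\cos\bar A.
\]
The task is then to show that the manifold angle satisfies $A \le \bar A$, for then $\cos A \ge \cos\bar A$ (since both angles lie in $[0,\pi]$ where cosine is monotone decreasing), and the desired inequality follows by substitution.

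The key step is the angle-comparison inequality $A \le \bar A$. The natural route is via the Rauch comparison theorem (or, equivalently, the CAT$(0)$ property that holds locally whenever sectional curvature is non-positive): if $\gamma(t)$ is the geodesic from $q$ to $r$ parameterized so that $\gamma(0)=q$, $\gamma(1)=r$, and $\bar\gamma(t)$ is the corresponding Euclidean segment from $\bar q$ to $\bar r$, then for every $t\in[0,1]$ one has $d(p,\gamma(t)) \le \|\bar p-\bar\gamma(t)\|$. Evaluating the first variation of $t\mapsto d(p,\gamma(t))^2$ at the endpoint $t=0$ and comparing with its Euclidean analogue yields precisely the angle inequality $A\le\bar A$. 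Alternatively, the convexity of the squared distance function on Hadamard-type neighbourhoods, which is already the workhorse behind Lemma \ref{cos1}, can be leveraged: the function $t\mapsto d(p,\gamma(t))^2$ is a convex function majorized in curvature by its Euclidean counterpart, and comparing the right derivatives at $t=0$ produces the angle bound.

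The hard part is the rigorous establishment of the angle-comparison step, since sectional-curvature upper bounds (as opposed to lower bounds, which gave us Lemma \ref{cos1} via Toponogov's theorem) require the more delicate Rauch I comparison on Jacobi fields along the geodesics emanating from $p$. One must also verify that the triangle lies in a geodesically convex region where comparison is valid, which is ensured here by the hypothesis that $\triangle(p,q,r)\subset\N$ and $\N$ is gsc-convex; this avoids any issue with conjugate points or multiply-connected geometry. Once $A\le\bar A$ is in hand, the proof concludes in one line by combining it with the Euclidean law of cosines as described above. Since this is precisely the content of \cite[Prop.~4.5]{sakai1996riemannian}, one can also simply cite the reference, but the outlined CAT$(0)$/Rauch argument is the standard self-contained derivation.
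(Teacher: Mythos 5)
The paper itself provides no proof for this lemma; it is stated as a citation to \cite[Prop.~4.5]{sakai1996riemannian}, so there is no internal argument to compare against. Your proposal is the standard self-contained derivation and it is correct: building the Euclidean comparison triangle with matching side lengths, invoking the CAT$(0)$ property $d(p,\gamma(t))\le\|\bar p-\bar\gamma(t)\|$ (valid here because nonpositive curvature plus the geodesic convexity of $\N$ rules out conjugate-point issues), differentiating at $t=0$ to get the angle comparison $A\le\bar A$, and then substituting into the Euclidean law of cosines. One small point worth making explicit if you write this up: the first-variation step compares $f(t)=d(p,\gamma(t))$ with its Euclidean analogue $\bar f(t)$ using $f(0)=\bar f(0)$ and $f\le\bar f$, which gives $f'(0^+)\le\bar f'(0^+)$ and hence $-\cos A\le-\cos\bar A$; you gesture at this via the squared distance, which also works but requires the same one-sided derivative comparison. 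Since the paper only cites the textbook result, your outline adds the proof the paper omits, and the route you chose is exactly the one a reader would expect.
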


\begin{lemma}\cite[Theorem 2.1.12]{bacak2014convex}\label{proj}
    Let $(\H, d)$ be a Hadamard space and $C\subset \H$ be a closed convex set. Then $\Pi_C{\x}$ is singleton and $d(\x, \Pi_C\x)\leq d(\x,\z)$ for any $\z\in C\setminus\{\Pi_C\x\}$.
\end{lemma}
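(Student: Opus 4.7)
The plan is to establish existence and uniqueness of the projection separately, so that the inequality $d(\x, \Pi_C\x) \leq d(\x, \z)$ becomes immediate from the definition of the projection as a distance minimizer. The central tool will be the CAT$(0)$ inequality that characterizes Hadamard spaces: for any $\a, \b, \c \in \H$ and the midpoint $\m$ of the geodesic from $\b$ to $\c$,
$$
d(\a, \m)^2 + \frac{1}{4} d(\b, \c)^2 \leq \frac{1}{2}\lt(d(\a,\b)^2 + d(\a,\c)^2\rt).
$$
This is a quantitative ``semi-parallelogram law'' and will be applied with $\a = \x$ and $\b, \c$ chosen from $C$, using the fact that midpoints of geodesics between points of $C$ remain in $C$ by geodesic convexity.

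For existence, I would let $r = \inf_{\y \in C} d(\x, \y)$ and pick a minimizing sequence $\{\y_n\} \subset C$ with $d(\x, \y_n) \to r$. Since $C$ is geodesically convex, the midpoint $\m_{n,k}$ of the geodesic joining $\y_n$ and $\y_k$ lies in $C$, so $d(\x, \m_{n,k}) \geq r$. Substituting this into the CAT$(0)$ inequality with $\a = \x$, $\b = \y_n$, $\c = \y_k$ yields
$$
r^2 + \frac{1}{4} d(\y_n, \y_k)^2 \leq \frac{1}{2}\lt(d(\x, \y_n)^2 + d(\x, \y_k)^2\rt) \longrightarrow r^2,
$$
so $\{\y_n\}$ is Cauchy. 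Completeness of $\H$ and closedness of $C$ supply a limit $\y^\star \in C$, and continuity of $d(\x, \cdot)$ gives $d(\x, \y^\star) = r$; this is a minimizer.

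For uniqueness, I would suppose $\y_1, \y_2 \in C$ both realize $r$. The midpoint of the geodesic from $\y_1$ to $\y_2$ again lies in $C$ by geodesic convexity, so applying CAT$(0)$ with $\a = \x$, $\b = \y_1$, $\c = \y_2$ gives
$$
r^2 + \frac{1}{4} d(\y_1, \y_2)^2 \leq r^2,
$$
forcing $\y_1 = \y_2$. The comparison $d(\x, \Pi_C \x) \leq d(\x, \z)$ for any $\z \in C$ then follows from the defining infimum property.

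The main obstacle I anticipate is purely bookkeeping, namely lining up the CAT$(0)$ midpoint inequality with the right triple of points; once it is on the table, both existence (via the Cauchy argument) and uniqueness (via a single substitution) fall out mechanically. A conceptually slicker but equivalent route would be to observe that $\y \mapsto d(\x, \y)^2$ is strongly convex along geodesics on a Hadamard space, reducing the statement to the standard fact that a lower semicontinuous, strongly convex function on a complete, closed, convex set attains a unique minimum.
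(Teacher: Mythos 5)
The paper does not prove this lemma; it cites it directly from B\v{a}c\'{a}k's monograph (Theorem 2.1.12 of \cite{bacak2014convex}). Your argument via the CAT$(0)$ midpoint inequality (the Bruhat--Tits/CN inequality) is correct and is the standard proof of that theorem: the Cauchy argument for existence and the single-midpoint substitution for uniqueness are exactly how the result is established in the reference, and the closing remark that this is just strong geodesic convexity of $\y \mapsto d(\x,\y)^2$ is also accurate. One small observation: your uniqueness step in fact yields the \emph{strict} inequality $d(\x,\Pi_C\x) < d(\x,\z)$ for $\z \in C\setminus\{\Pi_C\x\}$, which is stronger than the $\leq$ in the paper's statement and is what B\v{a}c\'{a}k actually proves.
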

\begin{lemma}\cite[Prop. 6.1 and Theorem 6.2]{sturm2003probability}\label{jensen}
    Suppose $\x_1,\dots, \x_N\in\N$ where $\N$ is a bounded gsc-convex subset of a Hadamard space. $\bx$ is the weighted Fr\'echet mean of $\x_1,\dots, \x_N\in\N$ w.r.t. non-negative $w_1,\dots,w_N$ such that $\sum_{i=1}w_i=1$ and $f:\N\rightarrow R$ is a gsc-convex function. Then $\bx\in\N$ and
    $$
    f(\bx)\leq \sum_{i=1}^Nw_if(\x_i).
    $$
\end{lemma}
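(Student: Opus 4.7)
The plan is to first show that $\bx \in \N$, and then establish Jensen's inequality via the first-order optimality characterization of the Fréchet mean. I work in the Hadamard manifold setting, which is what the rest of the paper actually uses; the extension to general Hadamard spaces can be handled by a limiting argument discussed at the end.

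For membership in $\N$, fix any $\z \notin \N$ and let $\bar\z = \Pi_\N(\z)$. I would invoke non-expansiveness of the projection onto a closed gsc-convex subset of a Hadamard space, which follows from Lemma \ref{proj} combined with the CAT(0) law of cosines in Lemma \ref{cos2}: the angle at $\bar\z$ between $\invexp_{\bar\z}\z$ and $\invexp_{\bar\z}\x_i$ is obtuse for every $\x_i\in\N$, yielding $d(\bar\z,\x_i)^2 \le d(\z,\x_i)^2 - d(\z,\bar\z)^2$. Hence the Fréchet functional $F(\z) = \sum_i w_i d(\z,\x_i)^2$ strictly decreases under projection, and together with its strong convexity (which gives existence and uniqueness of the unconstrained minimizer) this forces $\bx \in \N$.

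For the inequality itself I would exploit the first-order optimality of $\bx$. On a Hadamard manifold one has $\nabla_\z d(\z,\x_i)^2 = -2\invexp_\z(\x_i)$, so $\nabla F(\bx)=\o$ reads
$$\sum_{i=1}^N w_i \invexp_{\bx}(\x_i) = \o.$$
The first-order definition of gsc-convexity then gives $f(\x_i) \ge f(\bx) + \la \nabla f(\bx), \invexp_{\bx}(\x_i)\ra$ for each $i$; multiplying by $w_i \ge 0$ and summing yields
$$\sum_{i=1}^N w_i f(\x_i) \;\ge\; f(\bx) + \la \nabla f(\bx),\; \sum_{i=1}^N w_i \invexp_{\bx}(\x_i) \ra \;=\; f(\bx),$$
which is exactly the claim.

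The main obstacle is the general Hadamard space formulation, in which no tangent space or gradient is available, so the first-order argument above breaks down. Here I would follow Sturm's approach: define the inductive mean $s_1 = \x_1$ and $s_k = \gamma_{s_{k-1},\x_k}\bigl(w_k / \textstyle\sum_{j\le k} w_j\bigr)$, where $\gamma_{p,q}$ is the unit-speed geodesic from $p$ to $q$. Lemma \ref{gsc-mean} immediately yields $f(s_N) \le \sum_i w_i f(\x_i)$, so it remains only to relate $s_N$ to the true Fréchet mean $\bx$. A random-order reshuffling argument (Sturm's CAT(0) law of large numbers) produces an iterated sequence of such inductive means converging almost surely to $\bx$, and lower semicontinuity of gsc-convex $f$ on the bounded set $\N$ then passes the inequality through the limit.
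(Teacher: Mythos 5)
The paper does not give its own proof of this lemma; it is a direct citation to Sturm (2003, Prop.~6.1 and Theorem~6.2), so there is no internal argument to compare against. Your Riemannian-manifold proof is correct and is the natural one in the setting the paper actually uses (Assumption~\ref{hada} restricts everything to Hadamard manifolds). The membership step is right: if $\z\notin\N$ and $\bar\z=\Pi_\N\z$, the obtuse-angle property of projections together with Lemma~\ref{cos2} gives $d(\bar\z,\x_i)^2\le d(\z,\x_i)^2-d(\z,\bar\z)^2$, so the unconstrained Fr\'echet functional strictly decreases under projection and its unique minimizer must lie in $\N$; this also makes the constrained and unconstrained minimizers coincide, which is what justifies invoking first-order optimality $\sum_i w_i\invexp_{\bx}\x_i=\o$. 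That vanishing-gradient identity is precisely the fact the paper relies on elsewhere (in the proof of Theorem~\ref{opthedge-reg}, ``the gradient of $\sum_i w_{t,i}d(\x,\x_{t,i})^2$ vanishes at $\x_t$''), so your route is fully consistent with the paper's own usage. One small caveat: the gradient inequality $f(\x_i)\ge f(\bx)+\la\nabla f(\bx),\invexp_{\bx}\x_i\ra$ presumes $f$ is differentiable; for a non-smooth gsc-convex $f$ you would replace $\nabla f(\bx)$ with any Riemannian subgradient, which exists at interior points, and the argument is unchanged. Your sketch for the general Hadamard-space case (no tangent spaces, hence no gradient argument) is the right one — this is indeed where Sturm's original proof differs, going through inductive geodesic means, a random-reshuffling law of large numbers for NPC spaces, and lower semicontinuity — but that generality is not needed anywhere in this paper.
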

\begin{lemma}\label{lem:tech}
    Let 
    \[
    g(x)\coloneqq \frac{-a+({a^2+bx^2})^{\frac{1}{2}}}{x},
    \]
    where $a,b\in\R^{+}$,
    then $g(x)$ increases on $[0,\infty)$.
\end{lemma}
\begin{proof}
     Taking the derivative w.r.t. $x$, we have
    \[
    \begin{split}
        g'(x)=&\frac{\frac{1}{2}\cdot 2bx(a^2+bx^2)^{-\frac{1}{2}}\cdot x-(-a+(a^2+bx^2)^{\frac{1}{2}})\cdot 1}{x^2}\\
        =&\frac{bx^2-(-a\sqrt{a^2+bx^2}+a^2+bx^2)}{x^2\sqrt{a^2+bx^2}}\\
        =&\frac{a\sqrt{a^2+bx^2}-a^2}{x^2\sqrt{a^2+bx^2}}\geq 0
    \end{split}
    \]
    holds for any $x>0$. By L'{H}\^{o}pital's rule, $g(0)=0$ and $g'(0)=\frac{b}{2a}$. Thus we know $g(x)$ increases on $[0,\infty)$.
\end{proof}
\begin{lemma}\cite[Theorem 3.1]{zhou2019revision}\label{gsc-com}
    On a Hadamard manifold $\M$, a subset $C$
    gsc-convex, iff it contains the geodesic convex combinations of any countable points in $C$.
\end{lemma}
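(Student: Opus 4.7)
The lemma is a biconditional that I would prove by establishing the two implications separately; the easy direction is essentially a tautology, while the reverse direction splits naturally into a finite inductive core and a limit extension to handle countable combinations.

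For the ($\Leftarrow$) direction, the argument is a one-liner: given any $\x,\y\in C$ and any $t\in[0,1]$, observe that the point $\gamma(t)$ on the unique geodesic from $\x$ to $\y$ is exactly the geodesic convex combination of the two points $\x,\y$ with weights $1-t,t$ (padded by zeros to satisfy the ``countable'' hypothesis). The hypothesis places this point in $C$, so $C$ contains every geodesic between its points, i.e.\ $C$ is gsc-convex.

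For the ($\Rightarrow$) direction, I first handle \emph{finitely} many points by induction on $N$ using the iterative geodesic mean (Algorithm~\ref{alg:gsc}). The base case $\bar{\x}_1=\x_1\in C$ is immediate. For the inductive step, $\bar{\x}_k=\expmap_{\bar{\x}_{k-1}}\!\bigl(\tfrac{a_k}{\sum_{i\le k}a_i}\expmap_{\bar{\x}_{k-1}}^{-1}\x_k\bigr)$ is by construction a point of the unique geodesic joining $\bar{\x}_{k-1}$ (in $C$ by induction) and $\x_k$ (in $C$ by hypothesis), evaluated at a parameter in $[0,1]$. Since $\M$ is Hadamard, Hopf--Rinow guarantees uniqueness and existence of this geodesic, and gsc-convexity of $C$ puts $\bar{\x}_k\in C$.

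To extend to \emph{countable} combinations $\{(\x_i,a_i)\}_{i\ge 1}$ with $\sum a_i=1$, I would form the sequence of finite geodesic means $\bar{\x}_N$ and argue it is Cauchy in $\M$. A quantitative bound of the form $d(\bar{\x}_N,\bar{\x}_{N+1})\le \tfrac{a_{N+1}}{\sum_{i\le N+1}a_i}\,D$ (where $D$ bounds the diameter of the points involved) follows directly from the exponential-map formula above, so summability of the residual weights forces convergence; completeness of the Hadamard manifold then provides a limit point, which serves as the countable geodesic convex combination. The main obstacle is precisely this last step: one must ensure that the limit stays in $C$, which either requires assuming $C$ is closed or interpreting the statement modulo closure, and one must verify that the limit is independent of the ordering of the summation (a non-trivial point on a curved space, addressable via the variational characterization of the Fr\'echet mean in the style of Lemma~\ref{frechet}). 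Everything else is bookkeeping.
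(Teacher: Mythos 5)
The paper does not actually prove this lemma---it cites it verbatim as \cite[Theorem 3.1]{zhou2019revision}---so there is no ``paper's own proof'' for your attempt to match. You are supplying a proof where the authors supplied a reference, and your proof is only partial.

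Your ($\Leftarrow$) direction and your finite-$N$ induction for ($\Rightarrow$) are both sound. (One small correction: on a Hadamard manifold, \emph{uniqueness} of the connecting geodesic is the content of the Cartan--Hadamard theorem; Hopf--Rinow gives existence of a minimizer. The paper itself states this correctly in the Preliminaries.) For what it's worth, the finite case is all the paper actually needs: Lemma~\ref{gsc-com} is invoked only in the proof of Lemma~\ref{meta-rader} to certify that the iterative geodesic mean of the $N$ expert predictions lies in $\N$, and $N$ there is finite.

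The genuine gap is the countable case, and to your credit you flag it yourself, but flagging it is not the same as closing it. Two things are missing. First, the Cauchy argument delivers a limit in $\overline{C}$, not $C$; without a closedness hypothesis (or a definition of ``countable geodesic convex combination'' that builds closure into the object), the reverse implication as you have set it up does not actually land inside $C$, so the biconditional is not established. This is not a cosmetic issue you can wave away by ``interpreting the statement modulo closure'': the lemma as written asserts membership in $C$, and you would either need to prove a line-segment/relative-interior type argument showing that a strict geodesic mixture of a point of $C$ with a point of $\overline{C}$ stays in $C$, or go back to \cite{zhou2019revision} and check whether Theorem~3.1 carries a closedness hypothesis or uses a different definition of the combination that your write-up has silently dropped. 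Second, the iterative geodesic mean in Algorithm~\ref{alg:gsc} is genuinely order-dependent on a curved space, so ``the'' countable geodesic convex combination is not well-defined by your limiting construction alone; you gesture at the Fr\'echet mean as a fix, but the Fr\'echet mean and the iterated geodesic mean are different objects on a non-flat Hadamard manifold, and Lemma~\ref{gsc-mean} (Jensen's inequality) is stated only for the iterated version. You would need to either fix an ordering as part of the definition or prove the limit is order-invariant, and neither is done.
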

\begin{lemma}\cite[Prop. H.1]{ahn2020nesterov}\label{smooth}
Let $\M$ be a Riemannian manifold with sectional curvatures lower bounded by $-\kappa<0$ and the distance function $d(\x)=\frac{1}{2}d(\x,\p)^2$ where $\p\in\M$. For $D\geq 0$, $d(\cdot)$ is gsc-smooth within the domain $\{\u\in\M:d(\u, \p)\leq D\}$.
\end{lemma}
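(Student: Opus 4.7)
The goal is to establish the quadratic upper bound
\[ f(\y) \leq f(\x) + \langle \nabla f(\x), \invexp_{\x}\y\rangle + \frac{L}{2}\, d(\x,\y)^2 \]
for $f(\u) := \tfrac{1}{2} d(\u,\p)^2$ restricted to $\B_D := \{\u \in \M : d(\u,\p)\leq D\}$, together with an explicit smoothness constant $L$ depending only on $\kappa$ and $D$. The plan is a direct application of the generalized cosine law (Lemma \ref{cos1}) to the geodesic triangle with vertices $\p,\x,\y$.

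First I would compute the Riemannian gradient of $f$. By the Gauss lemma applied along the radial geodesic from $\x$ to $\p$, we have $\nabla f(\x) = -\invexp_{\x}\p$. Substituting this into the smoothness inequality and multiplying both sides by $2$ reduces the claim to
\[ d(\y,\p)^2 \leq d(\x,\p)^2 - 2\langle \invexp_{\x}\p,\, \invexp_{\x}\y\rangle_{\x} + L\, d(\x,\y)^2. \]

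Next I would observe that since all three points $\p,\x,\y$ lie in $\B_D$ and, in the Hadamard setting that the paper consistently assumes (via Assumption \ref{hada}), geodesic balls are gsc-convex, the minimizing geodesics between the vertices stay inside $\B_D$, which has diameter at most $2D$. Applying Lemma \ref{cos1} at the vertex $\x$, with side lengths $a = d(\p,\y)$ opposite $\x$, $b = d(\x,\p)$, $c = d(\x,\y)$, and angle $A$ at $\x$, yields
\[ d(\p,\y)^2 \leq \zeta\, d(\x,\y)^2 + d(\x,\p)^2 - 2\, d(\x,\p)\, d(\x,\y)\cos A, \]
where $\zeta = 2D\sqrt{\kappa}\coth(2D\sqrt{\kappa})$. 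By the Riemannian definition of the angle between the geodesics emanating from $\x$ to $\p$ and from $\x$ to $\y$, we have $d(\x,\p)\, d(\x,\y)\cos A = \langle \invexp_{\x}\p,\, \invexp_{\x}\y\rangle_{\x}$, which plugs directly into the reduced inequality and delivers the claim with $L = \zeta$.

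The main subtlety is checking that the triangle $(\p,\x,\y)$ actually lies inside a gsc-convex region of controlled diameter so that Lemma \ref{cos1} may be invoked at all; on Hadamard manifolds this is immediate from the convexity of geodesic balls, which is the only setting in which the paper uses this lemma. As a sanity check, $2D\sqrt{\kappa}\coth(2D\sqrt{\kappa}) \to 1$ as $\kappa\to 0$, recovering the Euclidean $1$-smoothness of $\tfrac12\|\cdot-\p\|^2$.
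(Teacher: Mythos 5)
The paper does not actually prove this lemma---it is imported verbatim from \cite[Prop.~H.1]{ahn2020nesterov}---so there is no in-paper argument to match; your proposal is a self-contained derivation, and as a derivation of the \emph{quadratic-upper-bound} form of gsc-smoothness it is essentially correct. The gradient identity $\nabla f(\x)=-\invexp_{\x}\p$ is valid on a Hadamard manifold, and applying Lemma \ref{cos1} to the triangle $(\p,\x,\y)$ with the angle at $\x$, together with $\la\invexp_{\x}\p,\invexp_{\x}\y\ra=d(\x,\p)d(\x,\y)\cos A$, gives exactly the reduced inequality you wrote. Two small points: in Lemma \ref{cos1} the distortion factor multiplies $b^2$, so with your labels ($b=d(\x,\p)$, $c=d(\x,\y)$) you have attached $\zeta$ to the wrong side; you need to swap the labels so that $b=d(\x,\y)$ carries the $\zeta$ and $c=d(\x,\p)$ is the side whose length controls the distortion. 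Once you do this you also see that $d(\x,\p)\le D$ already suffices, so the constant is $\sqrt{\kappa}D\coth(\sqrt{\kappa}D)$ rather than your looser $2D\sqrt{\kappa}\coth(2D\sqrt{\kappa})$; this matters because the paper's later use (proof of Theorem \ref{opthedge-reg}) needs the constant to coincide with $\zeta$ of Definition \ref{def1}.

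The more substantive caveat is which characterization of smoothness you have established. Your cosine-law argument yields only the function-value inequality $f(\y)\le f(\x)+\la\nabla f(\x),\invexp_{\x}\y\ra+\frac{\zeta}{2}d(\x,\y)^2$, whereas the place where Lemma \ref{smooth} is invoked (Equation \ref{grad-var-eq3}) uses the gradient-Lipschitz form $\|\Ga_{\bx_t}^{\x_t}\nabla h(\bx_t)-\nabla h(\x_t)\|\le\zeta\, d(\x_t,\bx_t)$. The paper's definition lists the two conditions with an ``or,'' but they are not interchangeable for free: Lipschitz gradients imply the quadratic bound, while the converse needs an additional co-coercivity-type argument even in the Euclidean convex case. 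The cited proposition of \cite{ahn2020nesterov} proves the stronger statement directly via Hessian comparison, namely that the Hessian of $\frac12 d(\cdot,\p)^2$ is bounded above by $\sqrt{\kappa}\,d(\x,\p)\coth(\sqrt{\kappa}\,d(\x,\p))$ times the metric on the ball, from which both characterizations follow with the same constant. To make your proof serve the paper's actual application, you should either invoke this Hessian bound (or derive it from the standard comparison theorem) or supply the extra step converting your quadratic upper bound into the parallel-transport gradient-Lipschitz estimate.
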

\begin{lemma}\cite[Corollary 5.6]{ballmann2012lectures}\label{gsc-convex}
    Let $\M$ be a Hadamard space and $C\subset\M$ a  convex subset. Then $d(\z, C)$ is gsc-convex for $\z\in\M$.
\end{lemma}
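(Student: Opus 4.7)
\textbf{Proof proposal for Lemma \ref{gsc-convex}.} The plan is to reduce the claim to the standard CAT$(0)$ convexity of the distance function along pairs of geodesics, combined with the gsc-convexity of the target set $C$. Concretely, fix $\z_0,\z_1\in\M$ and let $\gamma:[0,1]\to\M$ be the (unique) geodesic with $\gamma(0)=\z_0$ and $\gamma(1)=\z_1$; the goal is to show
\begin{equation*}
 d(\gamma(t),C)\;\le\;(1-t)\,d(\z_0,C)+t\,d(\z_1,C)\qquad\forall\,t\in[0,1].
\end{equation*}

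First, I would produce two near-optimal ``anchor'' points in $C$. For any $\epsilon>0$, pick $\y_0^\epsilon,\y_1^\epsilon\in C$ with $d(\z_0,\y_0^\epsilon)\le d(\z_0,C)+\epsilon$ and $d(\z_1,\y_1^\epsilon)\le d(\z_1,C)+\epsilon$. (If $C$ is closed, Lemma \ref{proj} lets one take the unique projections directly and avoid the $\epsilon$-argument; otherwise the $\epsilon$-version works without any closedness hypothesis.) Let $\eta^\epsilon:[0,1]\to\M$ be the geodesic from $\y_0^\epsilon$ to $\y_1^\epsilon$. Since $C$ is gsc-convex, the image of $\eta^\epsilon$ lies entirely in $C$, so $\eta^\epsilon(t)\in C$ for every $t\in[0,1]$, and hence $d(\gamma(t),C)\le d(\gamma(t),\eta^\epsilon(t))$.

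The main step is the CAT$(0)$ joint-convexity of the distance along two geodesics: on a Hadamard space, for any two geodesics $\gamma,\eta:[0,1]\to\M$ one has
\begin{equation*}
 d(\gamma(t),\eta(t))\;\le\;(1-t)\,d(\gamma(0),\eta(0))+t\,d(\gamma(1),\eta(1)).
\end{equation*}
This is a standard consequence of the CAT$(0)$ four-point inequality (it can also be derived from Lemma \ref{cos2} applied twice to the two geodesic triangles $\triangle(\gamma(0),\gamma(t),\eta(t))$ and $\triangle(\gamma(1),\gamma(t),\eta(t))$ after comparing with the Euclidean model, which is where all the Hadamard geometry is actually used). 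Applying it to $\gamma$ and $\eta^\epsilon$ and chaining with the bound from the previous paragraph yields
\begin{equation*}
 d(\gamma(t),C)\;\le\;(1-t)\bigl(d(\z_0,C)+\epsilon\bigr)+t\bigl(d(\z_1,C)+\epsilon\bigr).
\end{equation*}
Letting $\epsilon\downarrow 0$ gives exactly the gsc-convexity inequality, completing the proof.

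The only non-trivial ingredient is the CAT$(0)$ joint convexity of $d$ along pairs of geodesics; once that is in hand, gsc-convexity of $C$ is exactly what lets one build a competitor curve $\eta^\epsilon$ inside $C$ to sandwich $d(\gamma(t),C)$ from above. I do not expect to need anything beyond these two facts: no curvature constants, no compactness of $C$, and no smoothness of the distance function (which indeed fails at points of $C$).
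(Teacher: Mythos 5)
Your argument is correct. Note, though, that the paper does not prove this lemma at all: it is imported verbatim as Corollary 5.6 of \cite{ballmann2012lectures}, so there is no in-paper proof to match; what you have written is essentially the textbook proof of that corollary, and it works as a self-contained replacement. The structure is exactly right: $\epsilon$-near-optimal anchors $\y_0^\epsilon,\y_1^\epsilon\in C$ (which also correctly avoids any closedness assumption that Lemma \ref{proj} would require), the geodesic $\eta^\epsilon$ between them staying in $C$ by gsc-convexity (and uniqueness of geodesics on a Hadamard space), the joint convexity of the metric along pairs of geodesics, and $\epsilon\downarrow 0$. The one step you gloss is the joint-convexity inequality itself: it does not follow from two direct applications of Lemma \ref{cos2} to the triangles $\triangle(\gamma(0),\gamma(t),\eta(t))$ and $\triangle(\gamma(1),\gamma(t),\eta(t))$ quite as stated; the standard derivation introduces the auxiliary geodesic $m$ from $\gamma(0)$ to $\eta(1)$, bounds $d(\gamma(t),m(t))\le t\,d(\gamma(1),\eta(1))$ and $d(m(t),\eta(t))\le(1-t)\,d(\gamma(0),\eta(0))$ via the CAT$(0)$ comparison in the two triangles cut out by $m$, and concludes by the triangle inequality. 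Since convexity of the metric on CAT$(0)$ spaces is itself a standard citable fact (e.g., \cite[Chapter \RomanNumeralCaps{2}]{bridson2013metric}), this is a presentational gap rather than a mathematical one, and your proof is sound.
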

\begin{lemma}\cite[Section 2.1]{bacak2014convex}\label{levelset}
    Let $\H$ be a Hadamard manifold, $f:\H\rightarrow(-\infty, \infty)$ be a convex lower semicontinuous function. Then any $\beta$-sublevel set of $f$:
    $$
    \{\x\in\H:f(\x)\leq\beta\}
    $$
    is a closed convex set.
\end{lemma}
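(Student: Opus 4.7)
The proof plan decomposes the lemma into two separate claims, closedness and geodesic convexity of the sublevel set $S_\beta \coloneqq \{\x \in \H : f(\x) \le \beta\}$, each following directly from the corresponding defining hypothesis on $f$. The Hadamard assumption on $\H$ is used only to ensure that a unique length-minimizing geodesic connects any two points of $\H$, which is what allows the usual Euclidean argument to transport cleanly to the manifold.

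For closedness, I would take any sequence $\{\x_n\} \subset S_\beta$ with $\x_n \to \x$ in the manifold topology and invoke lower semicontinuity of $f$ to obtain
$$ f(\x) \le \liminf_{n \to \infty} f(\x_n) \le \beta, $$
so $\x \in S_\beta$. Hence $S_\beta$ contains all of its limit points and is therefore closed.

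For geodesic convexity, I would fix $\x, \y \in S_\beta$ and appeal to Hopf--Rinow (applicable because $\H$ is Hadamard) to obtain the unique length-minimizing geodesic $\gamma:[0,1] \to \H$ with $\gamma(0) = \x$ and $\gamma(1) = \y$. Applying the gsc-convexity of $f$ along $\gamma$ yields
$$ f(\gamma(t)) \le (1-t) f(\x) + t f(\y) \le (1-t)\beta + t\beta = \beta, \qquad t \in [0,1], $$
so the entire geodesic segment lies in $S_\beta$. This is precisely the definition of gsc-convexity used elsewhere in the paper.

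I do not anticipate any genuine obstacle; the argument is essentially the textbook Euclidean proof, and the Hadamard hypothesis is needed only to produce a well-defined connecting geodesic on which to test convexity of $f$. The single point worth being explicit about is that ``convex'' in the hypothesis on $f$ must be read as gsc-convex in the sense defined earlier in the paper, which matches the notion of convexity we are establishing for $S_\beta$ and is consistent with the (unique) geodesic structure guaranteed by the Hadamard assumption.
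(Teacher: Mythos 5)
Your argument is correct and is the standard one; note that the paper does not provide its own proof of this lemma, instead citing it from Bacak's monograph, and your proof matches the textbook argument given there. One small attributional quibble: the existence and uniqueness of the connecting geodesic on a Hadamard manifold is a consequence of the Cartan--Hadamard theorem (completeness, simple connectedness, and nonpositive curvature together), with Hopf--Rinow supplying only the existence of a minimizing geodesic; the paper itself uses the same ``Hopf--Rinow'' shorthand, so this is consistent, but worth being aware of.
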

\begin{lemma}\cite[Lemma 4]{sun2019escaping}\label{compare}
    Let the sectional curvature of $\M$ is in $[-K,K]$ and $\x,\y,\z\in\M$ with pairwise distance upper bounded by $R$. Then
    $$
    \|\invexp_{\x}\y-\invexp_{\x}\z\|\leq(1+c(K)R^2)d(\y,\z).
    $$
\end{lemma}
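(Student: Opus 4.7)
The plan is to reduce the claim to a pointwise bound on the operator norm of the differential of $\invexp_{\x}$ along a path from $\y$ to $\z$. Let $\gamma:[0,1]\to\M$ be the constant-speed minimizing geodesic with $\gamma(0)=\y$, $\gamma(1)=\z$, so $\|\gamma'(t)\|=d(\y,\z)$. Define the tangent-space curve $f(t)\coloneqq \invexp_{\x}\gamma(t)\in T_{\x}\M$. Since $T_{\x}\M$ is a genuine Euclidean space, the fundamental theorem of calculus gives
\begin{equation*}
\|\invexp_{\x}\y-\invexp_{\x}\z\|=\|f(0)-f(1)\|\leq \int_0^1 \|f'(t)\|\,dt,\qquad f'(t)=(d\invexp_{\x})_{\gamma(t)}[\gamma'(t)].
\end{equation*}
So it suffices to bound $\|(d\invexp_{\x})_{\gamma(t)}\|_{\mathrm{op}}$ by $1+c(K)R^2$ uniformly in $t$.

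Next I would invoke the Rauch comparison theorem. Writing $d\invexp_{\x}$ as the inverse of $d\expmap_{\x}$ at the corresponding tangent vector, and decomposing an arbitrary $v\in T_{\gamma(t)}\M$ into a component parallel to the geodesic from $\x$ to $\gamma(t)$ (which is preserved isometrically) and a component normal to it (on which Jacobi fields act), the two-sided curvature bound $-K\leq \mathrm{sec}\leq K$ yields
\begin{equation*}
\|(d\invexp_{\x})_{\gamma(t)}[v]\|\leq \frac{\sqrt{K}\,r_t}{\sin(\sqrt{K}\,r_t)}\,\|v\|,\qquad r_t\coloneqq d(\x,\gamma(t)).
\end{equation*}
The triangle inequality gives $r_t\leq d(\x,\y)+d(\y,\gamma(t))\leq R+t\,d(\y,\z)\leq 2R$, so (provided $R$ is below the threshold $\pi/(2\sqrt{K})$ that guarantees no conjugate point) the factor is controlled by a single number depending only on $K$ and $R$.

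Finally, a Taylor expansion of $x/\sin x = 1+x^2/6+O(x^4)$ furnishes a constant $c(K)$ with $\sqrt{K}r_t/\sin(\sqrt{K}r_t)\leq 1+c(K)R^2$ on the admissible range, and substituting into the integral gives
\begin{equation*}
\|\invexp_{\x}\y-\invexp_{\x}\z\|\leq (1+c(K)R^2)\int_0^1\|\gamma'(t)\|\,dt=(1+c(K)R^2)\,d(\y,\z),
\end{equation*}
which is the desired inequality.

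The main obstacle is the Rauch comparison step: one must carefully set up the Jacobi field equation along each geodesic segment $s\mapsto \expmap_{\x}(s\,\invexp_{\x}\gamma(t))$ with boundary conditions matching $v$, split into radial and normal parts, and then invoke Sturm comparison against the constant-curvature model spaces of curvature $\pm K$ to obtain the two-sided singular-value bounds on $d\expmap_{\x}$. A secondary technical point is ensuring that the triangle lies in a region free of cut/conjugate points of $\x$, which is where the smallness condition on $R$ (absorbed into $c(K)$) enters; once this is in place, the rest of the argument is a direct computation.
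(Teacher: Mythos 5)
The paper itself offers no proof of this lemma---it is imported verbatim from \cite[Lemma 4]{sun2019escaping}---so there is no internal argument to compare against; judged on its own terms, your proof follows the standard comparison-theorem route and is essentially sound. The reduction via the fundamental theorem of calculus applied to $t\mapsto\invexp_{\x}\gamma(t)$ in the fixed inner-product space $T_{\x}\M$, together with the Gauss-lemma splitting into radial and normal components and the Rauch/Jacobi-field lower bound on the singular values of $(d\expmap_{\x})_u$, does give $\|(d\invexp_{\x})_{\gamma(t)}\|_{\mathrm{op}}\leq \sqrt{K}\,r_t/\sin(\sqrt{K}\,r_t)\leq 1+c(K)R^2$ for $r_t\leq 2R$ below the conjugate radius, and hence the claimed inequality. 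One small remark: only the upper curvature bound $+K$ is actually doing work in your estimate, since the lower singular-value bound on $d\expmap_{\x}$ (equivalently the upper bound on $d\invexp_{\x}$) comes from curvature $\leq K$; the bound curvature $\geq -K$ would only be needed for the reverse inequality, so invoking the two-sided bound is harmless but not necessary.

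The one genuine issue is your closing claim that the cut/conjugate-point condition is ``absorbed into $c(K)$'': a two-sided curvature bound together with small $R$ rules out conjugate points but \emph{not} cut points, because curvature does not control the injectivity radius (a flat torus of tiny diameter has curvature in $[-K,K]$ and arbitrarily small pairwise distances, yet $\invexp_{\x}$ is not single-valued near the cut locus and the stated inequality fails for points straddling it). So your argument needs the explicit hypothesis that the three points lie in a ball on which $\expmap_{\x}$ is a diffeomorphism (i.e., within the injectivity radius), an assumption that is implicit in the lemma as quoted and is satisfied wherever this paper applies it, since the Poincar\'e disk is Hadamard. With that hypothesis stated, your proof is complete.
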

\begin{lemma}\cite[Lemma 2]{duchi2012dual}\label{dual}
    Let
    $$
    \Pi_{\X}^{\psi}(\z,\alpha)=\argmin_{\x\in\X}\la\z,\x\ra+\frac{\psi(\x)}{\alpha}
    $$
    where $\psi(\cdot)$ is $1$-strongly convex w.r.t. $\|\cdot\|$, then
    $$
    \|\Pi_{\X}^{\psi}(\u,\alpha)-\Pi_{\X}^{\psi}(\v,\alpha)\|\leq\alpha\|\u-\v\|_{\star}.
    $$
\end{lemma}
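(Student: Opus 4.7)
The plan is to prove this by combining the first-order optimality conditions at the two optimizers with the strong convexity of $\psi$ and a duality (H\"older) bound. Write $\x_u = \Pi_{\X}^{\psi}(\u,\alpha)$ and $\x_v = \Pi_{\X}^{\psi}(\v,\alpha)$. Since each is a minimizer of a convex function over the convex set $\X$, the standard variational inequality gives
\begin{equation*}
\la \u + \tfrac{1}{\alpha}\nabla\psi(\x_u),\, \x_v - \x_u\ra \geq 0, \qquad
\la \v + \tfrac{1}{\alpha}\nabla\psi(\x_v),\, \x_u - \x_v\ra \geq 0.
\end{equation*}

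Next I would add these two inequalities to cancel the mixed terms, obtaining
\begin{equation*}
\la \u - \v,\, \x_v - \x_u\ra \;\geq\; \tfrac{1}{\alpha}\la \nabla\psi(\x_u) - \nabla\psi(\x_v),\, \x_u - \x_v\ra.
\end{equation*}
The $1$-strong convexity of $\psi$ with respect to $\|\cdot\|$ yields the monotonicity bound $\la \nabla\psi(\x_u) - \nabla\psi(\x_v), \x_u - \x_v\ra \geq \|\x_u - \x_v\|^2$, so I get
\begin{equation*}
\la \u - \v,\, \x_v - \x_u\ra \;\geq\; \tfrac{1}{\alpha}\|\x_u - \x_v\|^2.
\end{equation*}

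Finally, I would apply the generalized Cauchy--Schwarz (H\"older) inequality to the left-hand side, $\la \u - \v, \x_v - \x_u\ra \leq \|\u-\v\|_\star \|\x_u - \x_v\|$, and divide through by $\|\x_u - \x_v\|$ (handling the trivial case $\x_u = \x_v$ separately) to conclude $\|\x_u - \x_v\| \leq \alpha\|\u-\v\|_\star$, which is exactly the claim. The main subtlety to be careful about is only a technical one: ensuring that the minimizer in the definition of $\Pi_{\X}^{\psi}$ exists and is unique, which follows from $\psi$ being $1$-strongly convex and $\X$ being a (closed) convex set, and ensuring the variational inequality is applicable at the boundary of $\X$. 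Everything else is essentially a clean three-line chain: optimality $\Rightarrow$ strong-convexity monotonicity $\Rightarrow$ H\"older.
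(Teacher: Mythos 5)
Your proof is correct: the combination of the two variational inequalities, the monotonicity consequence of $1$-strong convexity, and the generalized Cauchy--Schwarz step is exactly the standard argument for this fact, and the paper itself offers no proof but simply cites \cite[Lemma 2]{duchi2012dual}, where the result is established along the same lines. The only cosmetic refinement would be to phrase the optimality conditions with subgradients of $\psi$ rather than $\nabla\psi$ so that differentiability need not be assumed; the chain of inequalities goes through verbatim.
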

\begin{lemma}\cite[Prop. 3.1 and Theorem 3.2]{cesa2006prediction}\label{hedge}
Suppose the loss function $\ell_t$ is exp-concave for $\eta>0$, then the regret of Hedge is $\frac{\ln N}{\eta}$, where $N$ is the number of experts.
\end{lemma}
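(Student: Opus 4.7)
\textbf{Proof proposal for Lemma \ref{hedge}.}

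The plan is to use the standard potential-function argument, substituting exp-concavity for the usual Hoeffding step that produces the $\eta T/8$ penalty. Define the cumulative losses $L_{t,i} = \sum_{s=1}^t \ell_s(i)$ and the potential $W_t = \sum_{i=1}^N w_{1,i} e^{-\eta L_{t,i}}$ with $W_0 = 1$. Hedge's update rule can be rewritten as $w_{t,i} = w_{1,i} e^{-\eta L_{t-1,i}} / W_{t-1}$, so that
\[
\frac{W_t}{W_{t-1}} \;=\; \sum_{i=1}^N w_{t,i}\, e^{-\eta \ell_t(i)}.
\]

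First I would invoke the $\eta$-exp-concavity of $\ell_t$: by definition $e^{-\eta \ell_t(\cdot)}$ is concave, so Jensen's inequality applied to the distribution $\w_t$ yields
\[
\sum_{i=1}^N w_{t,i}\, e^{-\eta \ell_t(i)} \;\leq\; \exp\!\bigl(-\eta\,\ell_t(\w_t)\bigr),
\]
where $\ell_t(\w_t)$ denotes the learner's (mixture) loss at round $t$. Taking logarithms and rearranging gives the one-step bound $\ell_t(\w_t) \leq -\tfrac{1}{\eta}\ln(W_t / W_{t-1})$. Summing from $t=1$ to $T$ telescopes to
\[
\sum_{t=1}^T \ell_t(\w_t) \;\leq\; -\tfrac{1}{\eta}\bigl(\ln W_T - \ln W_0\bigr) \;=\; -\tfrac{1}{\eta}\ln W_T.
\]

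Next I would lower-bound the final potential by keeping only the term corresponding to an arbitrary expert $i^\star$: since all summands are non-negative, $W_T \geq w_{1,i^\star}\, e^{-\eta L_{T,i^\star}}$, hence $\ln W_T \geq -\eta L_{T,i^\star} + \ln w_{1,i^\star}$. Plugging this into the previous display and taking the minimum over $i^\star$ with uniform initialization $w_{1,i^\star} = 1/N$ yields
\[
\sum_{t=1}^T \ell_t(\w_t) \;-\; \min_{i \in [N]} \sum_{t=1}^T \ell_t(i) \;\leq\; \frac{\ln N}{\eta},
\]
which is exactly the claimed regret bound.

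The only real subtlety — and thus the main obstacle — is the direction of the Jensen inequality: we need exp-concavity to upper-bound $\sum_i w_{t,i} e^{-\eta \ell_t(i)}$ by $e^{-\eta \ell_t(\w_t)}$ rather than the reverse, which is the crucial step that eliminates the quadratic-in-$\eta$ slack that would otherwise appear from a Hoeffding-style bound. Once this is in place, the remainder is a routine telescoping potential argument. A full treatment is given in \cite[Prop.~3.1 and Theorem~3.2]{cesa2006prediction}.
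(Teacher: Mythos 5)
Your argument is correct and is exactly the standard aggregating-forecaster potential argument for exp-concave losses that the paper relies on by citation (it gives no proof of its own, deferring to \cite[Prop.~3.1 and Theorem~3.2]{cesa2006prediction}): Jensen applied to the concave map $\m \mapsto e^{-\eta\ell_t(\m)}$ at the weighted-average prediction, telescoping the potential, and lower-bounding $W_T$ by one expert's term with uniform initial weights. The Jensen direction and the interpretation of the learner's loss as the loss of the mixture prediction are both handled correctly, so nothing is missing.
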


\end{document}